\documentclass{article}

\usepackage{microtype}
\usepackage{graphicx}
\usepackage{subcaption}
\usepackage{booktabs} %

\usepackage{hyperref}
\usepackage{url}

\usepackage[preprint]{icml2026}

\usepackage{amsmath}
\usepackage{amssymb}
\usepackage{mathtools}
\usepackage{amsthm}

\usepackage[capitalize,noabbrev]{cleveref}

\theoremstyle{plain}
\newtheorem{theorem}{Theorem}[section]

\theoremstyle{definition}

\newtheorem{assumption}{Assumption}
\theoremstyle{remark}

\newcommand{\ourmethod}{RUMS}
\newcommand{\ourmethodutility}{RUMS-Utility}
\newcommand{\ourmethodselect}{RUMS-Subset}
\newcommand{\ourmethodmodel}{RUMS-Models}
\newcommand{\ourmethodconserv}{RUMS-Binary}
\newcommand{\ourmethodlessconserv}{RUMS-Multi}
\newcommand{\pf}{PersonaFeedback}
\newcommand{\fb}{FreebaseQA}
\newcommand{\wcp}{WildChat-Personal}
\newcommand{\wcnp}{WildChat-NonPersonal}

\usepackage[textsize=tiny]{todonotes}

\usepackage{times}

\usepackage{amsmath}
\usepackage{amsthm}
\usepackage{amssymb}
\usepackage{mathtools} %
\usepackage{bm}
\usepackage{bbm}

\usepackage{graphicx}
\usepackage{tikz}
\usepackage{adjustbox}
\usepackage{caption}
\usepackage{subcaption}

\usepackage{booktabs}
\usepackage{multirow}

\usepackage{algorithm}  
\usepackage[noend]{algpseudocode}

\usepackage[english]{babel}
\usepackage{mdframed}
\usepackage{lipsum}
\usepackage{fancyvrb}
\usepackage{enumitem}
\usepackage{xspace}

\usepackage{breakcites}
\usepackage{url}

\usepackage{delimset} %
\usepackage{colortbl} 
\usepackage{booktabs}
\usepackage{xcolor}
\usepackage{subcaption} 
\usepackage{wrapfig}
\usepackage[utf8]{inputenc}
\usepackage{multicol}
\usepackage{tikz}
\usepackage{xcolor}
\usepackage{fontawesome5}
\usetikzlibrary{shapes.geometric, arrows.meta, positioning, backgrounds, fit, decorations.pathreplacing}

\definecolor{primaryblue}{RGB}{116,185,255}
\definecolor{darkblue}{RGB}{9,132,227}
\definecolor{purple1}{RGB}{102,92,234}
\definecolor{purple2}{RGB}{118,75,162}
\definecolor{pink1}{RGB}{253,121,168}
\definecolor{pink2}{RGB}{232,67,147}
\definecolor{green1}{RGB}{0,184,148}
\definecolor{green2}{RGB}{85,239,196}
\definecolor{orange1}{RGB}{255,107,107}
\definecolor{orange2}{RGB}{255,178,43}
\definecolor{red1}{RGB}{225,112,85}
\definecolor{lightgray}{RGB}{248,249,250}
\definecolor{darkgray}{RGB}{99,110,114}

\tikzset{
    training/.style={
        rectangle, rounded corners=8pt, 
        fill=purple1, text=white, 
        minimum width=3.5cm, minimum height=0.8cm,
        font=\small\bfseries, align=center
    },
    step/.style={
        rectangle, rounded corners=5pt,
        fill=white, fill opacity=0.2, text=white,
        minimum width=3cm, minimum height=0.6cm,
        font=\footnotesize, align=center
    },
    memory selected/.style={
        rectangle, rounded corners=2pt,
        fill=green1, text=white,
        minimum width=0.8cm, minimum height=0.8cm,
        font=\tiny\bfseries
    },
    memory unselected/.style={
        rectangle, rounded corners=2pt,
        fill=red1, text=white, fill opacity=0.7,
        minimum width=0.8cm, minimum height=0.8cm,
        font=\tiny\bfseries
    },
    user input/.style={
        rectangle, rounded corners=8pt,
        fill=primaryblue, text=white,
        minimum width=4cm, minimum height=1cm,
        font=\small\bfseries, align=center
    },
    model box/.style={
        rectangle, rounded corners=8pt,
        fill=pink1, text=white,
        minimum width=4cm, minimum height=1cm,
        font=\small\bfseries, align=center
    },
    output box/.style={
        rectangle, rounded corners=8pt,
        fill=green1, text=white,
        minimum width=4cm, minimum height=1cm,
        font=\small\bfseries, align=center
    },
    flow arrow/.style={
        -Stealth, thick, color=purple1, line width=1.5pt
    },
    badge/.style={
        rectangle, rounded corners=10pt,
        fill=orange2, text=black,
        font=\tiny\bfseries,
        minimum width=1cm, minimum height=0.4cm
    }
} %

\renewcommand{\epsilon}{\varepsilon}  %

\usepackage{sidecap}
\sidecaptionvpos{figure}{c}

\usepackage{booktabs}
\usepackage{colortbl}
\usepackage{xcolor}
\usepackage{siunitx}
\newtheorem{prompt}{Prompt}

\definecolor{lightgreen}{RGB}{220,255,220}
\definecolor{lightred}{RGB}{255,220,220}

\newcommand{\cellval}[1]{%
  \ifdim #1 pt > -.0001pt
    \ifdim #1 pt > 0.1pt
      \cellcolor{lightgreen}{\textbf{#1}}%
    \else
      \cellcolor{lightgreen}{#1}%
    \fi
  \else
    \ifdim #1 pt < -0.1pt
      \cellcolor{lightred}{\textbf{#1}}%
    \else
      \cellcolor{lightred}{#1}%
    \fi
  \fi
}
 
\icmltitlerunning{Response-Aware User Memory Selection}

\begin{document}

\twocolumn[
  \icmltitle{Response-Aware User Memory Selection for LLM Personalization}

  \icmlsetsymbol{equal}{*}

  \begin{icmlauthorlist}
    \icmlauthor{Jillian Fisher}{1}
    \icmlauthor{Jennifer Neville}{2}
    \icmlauthor{Chan Young Park}{2}
  \end{icmlauthorlist}

  \icmlaffiliation{1}{Department of Computer Science and Engineering, University of Washington, Seattle, WA, United States of America}
  \icmlaffiliation{2}{Microsoft Research, Redmond, WA, United States of America}

  \icmlcorrespondingauthor{Jillian Fisher}{jrfish@uw.edu}
  \icmlkeywords{NLP, Personalization, Information-Theoretics}

  \vskip 0.3in
]

\printAffiliationsAndNotice{}  %

\begin{abstract}
A common approach to personalization in large language models (LLMs) is to incorporate a subset of the user memory into the prompt at inference time to guide the model's generation. Existing methods select these subsets primarily using similarity between user memory items and input queries, ignoring how features actually affect the model's response distribution. We propose \textbf{R}esponse-\textbf{U}tility optimization for \textbf{M}emory \textbf{S}election (\ourmethod{}), a novel method that selects user memory items by measuring the mutual information between a subset of memory and the model’s outputs, identifying items that reduce response uncertainty and sharpen predictions beyond semantic similarity. We demonstrate that this information-theoretic foundation enables more principled user memory selection that aligns more closely with human selection compared to state-of-the-art methods, and models $400 \times$ larger. Additionally, we show that memory items selected using \ourmethod{} result in better response quality compared to existing approaches, while having up to 95\% reduction in computational cost.  

 \end{abstract}

\section{Introduction}
As large language models (LLMs) become ubiquitous in daily interactions, personalization has emerged as critical for user satisfaction. To meet this demand, current approaches to personalization generally fall into two categories: training-based methods and inference-time methods. Training-based approaches, such as low-rank preference models \citep{bose2025lorepersonalizingllmslowrank} or parameter-efficient fine-tuning modules \citep{tan2024personalized}, adapt the model parameters for individual users. Although these methods can be effective for small populations, they become computationally prohibitive when scaled to many users. As a result, inference-time personalization, such as including user-specific context in the input prompt, is often the preferred solution.

However, inference-time methods raise a central question: \textbf{What information, if any, should be included to guide the model?} A naïve strategy is to include all stored user memories in the prompt. However, this method often reduces performance, as irrelevant or distracting information may overwhelm the model and produce responses that are less accurate or sound unnatural \citep{li-etal-2023-compressing, rajeev2025cats}. Moreover, this strategy becomes increasingly impractical as the memory size grows. Therefore, existing methods instead attempt to select and retrieve relevant subsets of the user memory by measuring the relationship between a \textit{memory item and the input query}. In other words, these methods try to pick out only the user memory items that seem most related to the user’s current request. This is often done by heuristic retrieval \citep{bm25s}, surface-level semantic similarity \citep{lei-etal-2023-unsupervised}, or prompting LLMs \citep{handa2024bayesianpreferenceelicitationlanguage}. While intuitive,\textbf{ we hypothesize that measuring how memory items affect output (i.e. response) distributions directly, rather than similarity to input queries, provides a more reliable signal for personalization}.
Selecting user memory items based on semantic similarity to the input may biases selection toward redundant, surface-aligned items and away from non-obvious but important ones, leading to both over-selection and under-selection relative to true utility.
Such shortcomings can introduce noisy items that degrades response quality while increasing latency, cost, and privacy risks \citep{hagström2025cubbenchmarkingcontextutilisation}.

\begin{figure*}
    \centering
    \includegraphics[width=0.8\linewidth]{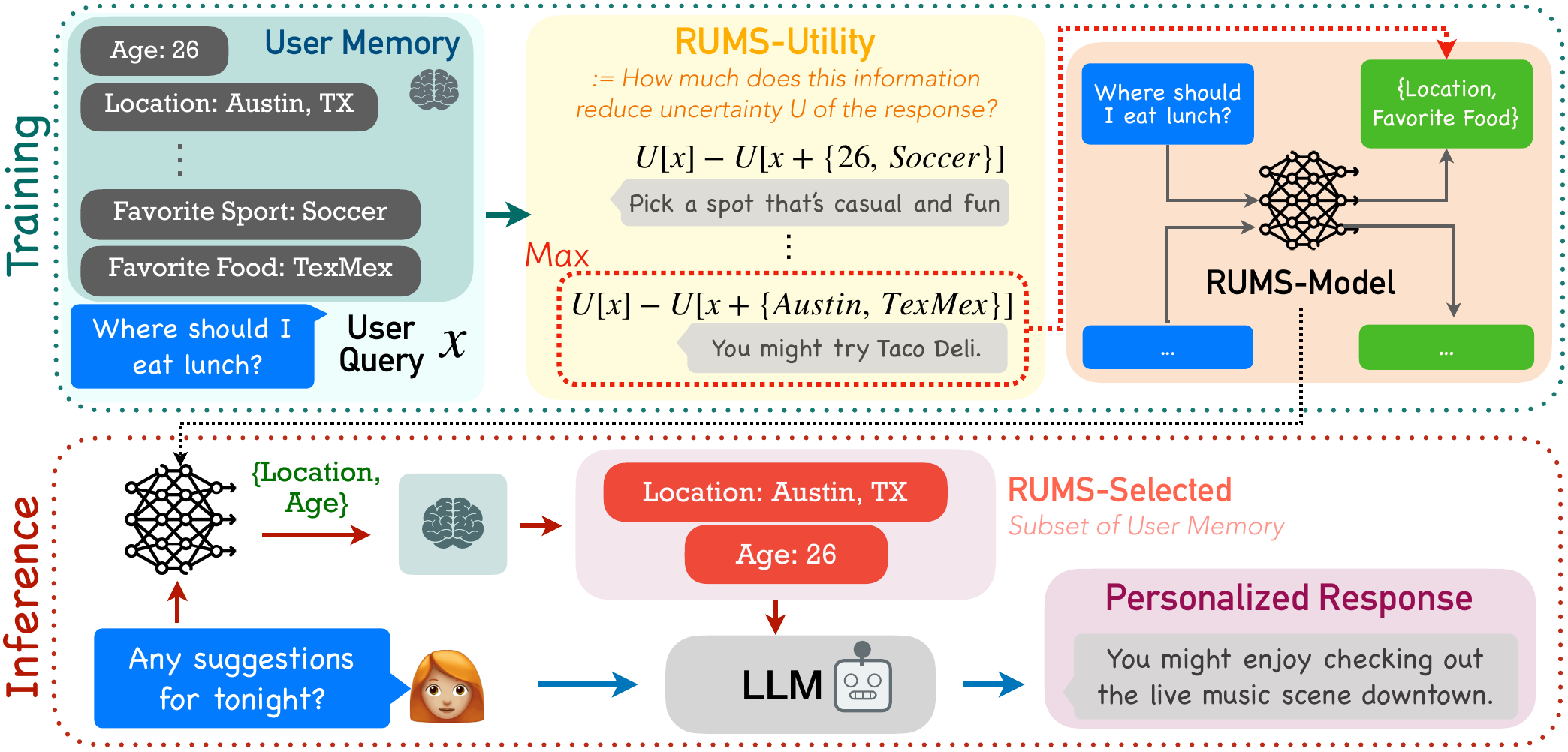}
    \vspace{-5pt}
    \caption{Overview of \ourmethod{}. \ourmethodutility{} quantifies how much memory subsets reduce response uncertainty (training phase), while \ourmethodmodel{} efficiently selects informative items at inference. 
    }
    \vspace{-10pt}
    \label{fig:figure1}
\end{figure*}

To test this hypothesis, we introduce Response-Utility for Memory Selection (\ourmethod{}), which comprises of two key components: a novel utility function (\textit{\ourmethod{}-Utility}) that quantifies the informativeness of user memory subsets, and a lightweight model (\textit{\ourmethod{}-Model}) that efficiently approximates this utility computation at inference time.

\textit{\ourmethod{}-Utility} utilizes an information-theoretic framework to directly measure how much each subset of user memories reduces uncertainty in the model's response distribution. Specifically, it computes the conditional mutual information between candidate memory subsets and the model's output distribution given the input query. We provide a theoretical foundation for this objective, proving that under the existence of a latent variable representing user intent, minimizing predictive entropy is approximately equivalent to maximizing expected user utility. While this guarantee assumes the model’s distribution aligns with human preferences, our empirical results confirm that modern LLMs are sufficiently calibrated for this entropy signal to serve as a high-fidelity proxy for true utility.
 
However, computing \textit{\ourmethod{}-Utility} for all queries is computationally expensive, particularly with large user profiles. Therefore, we use the utility function during an offline training phase to generate high-quality training data. We then train \textit{\ourmethodmodel{}}, a lightweight encoder-classifiers based on DeBERTa \citep{he2021deberta}, to directly predict memory items that are most informative given a query. This learned approach preserves the principled selection quality of the utility function while enabling efficient deployment. 

\textbf{Through extensive empirical testing, the experimental results validate our hypothesis:} \ourmethod{} enables more principled memory item selection that is more closely aligned to human selection compared to state-of-the-art (SOTA) memory selection methods and models $400\times$ its size. We also show that the items selected using \ourmethod{} result in better response quality than other methods, while lowering the cost of personalization overall.

Our work offers novel conceptual, theoretical, and empirical contributions to personalization:
\vspace{-4pt}
\begin{itemize}[leftmargin=*, itemsep=0pt, topsep = 0pt]
    \item We propose \ourmethod{}, a lightweight, response-based method for user memory selection that directly accounts for the impact of memory items on model outputs.  
    \item We establish a principled utility function, \ourmethod{}-Utility, and provide a theoretical proof linking entropy reduction to expected utility maximization. We further validate the necessary assumption of model-human alignment through empirical results, showing that \ourmethod{}-Utility can effectively guide the training of a lightweight \ourmethod{}-Model. 
    \item We show that \ourmethod{} (i) reliably identifies when personalization is beneficial, aligning closely with human judgments, (ii) selects memory subsets that align more strongly with human annotations than SOTA retrieval and prompting baselines, and (iii) improves generation quality over SOTA baselines and $400\times$ larger models on synthetic and real-world datasets, while adapting to underlying model capabilities and costing up to 95\% less than other methods.
\end{itemize}
\vspace{-6pt}
\section{Related Works}

\textbf{User Memory for LLM Personalization} Memory augmentation enables personalization in LLMs by retrieving subsets of user memory at inference time to guide responses. Simple approaches prepend user context to prompts, such as profile summaries \citep{Richardson2023, pan2025secom} or past interactions \citep{kang_personalize, liu2023chatgptgoodrecommenderpreliminary}. However, as the amount of user memory grows, these methods quickly become intractable. Due to this, there has been an increased trend in selecting a \textit{subset} of the user memory to prepend. For example, \citet{chen_MAP} retrieves relevant historical items via semantic similarity for recommendations, while LaMP \citep{salemi2023lamp} investigates personalized sentence retrieval. PEARL \citep{mysore-etal-2024-pearl} and ROPG \citep{10.1145/3626772.3657783} train retrieval components with user signals to improve personalization. For a broader overview, see \citet{llm_personal_survey}. However, these methods rely on the similarity between user input and memory item, without considering their impact on the model’s downstream response.

\textbf{Selective Personalization} 
A related line of reserach is selective conditioning or abstention which studies how models can ignore user memory or abstain from predictions when inputs are uninformative. \citet{10.5555/3295222.3295241} introduce selective classification for neural networks, and recent work extends this to language models, proposing mechanisms to ignore input features or context that may degrade performance \citep{hagström2025cubbenchmarkingcontextutilisation}. In memory selection, this corresponds to sometimes choosing no subset when personalization is unnecessary. Prior work typically relies on global thresholds \citep{li2023selectivecontext} or static policies \citep{NEURIPS2023_2dbb8bfe}, whereas our method learns a sequential, adaptive policy that dynamically selects which items to condition on based on the input and context.

\textbf{Information-theoretic Uses in NLP}
Information-theoretic ideas have long informed NLP \citep{brown-etal-1992-estimate}, spanning multimodal learning \citep{Almudevar2025AligningMultimodal}, information control \citep{ravfogel-etal-2020-null}, interpretability \citep{pmlr-v80-chen18j}, and prompt design \citep{zhang-etal-2022-prompt-based}. In feature selection, EDDI \citep{ma2018eddi} estimated conditional mutual information for individual features, while later work extended this to dynamic or sequential selection \citep{10.5555/3618408.3618665,gadgil2024estimating}. Recent methods also use differences in next-token likelihood with and without context to guide generation through distribution editing \citep{he2025context}. While much of this work has focused on domains such as image classification and fraud detection, the closest parallel to personalization arises in recommendation systems using Bayesian active learning to select a model query to ask the user, with the goal of improving recommendation quality \citep{handa2024bayesianpreferenceelicitationlanguage}. We draw on these principles to select memory items that maximize information gain for personalized responses. 

\vspace{-1pt}
\section{Method}
\vspace{-1pt}

We aim to develop a method that selects a subset of user memory based on the goal of maximizing the utility of the model’s responses. 

\textbf{Problem Formulation}
Consider user $u$ with query $x$ (e.g. ``Where should I eat lunch?''). We let $\theta$ denote an LLM and $Y\!=\!(Y_1,...,Y_T)$ denote a random variable representing an output sequence distributed according to $p_\theta(Y \vert x)$. We measure the quality of a specific realized response $y$ by its utility to the user, $\text{U}_u(y)$. 

Also, we consider a set of memory items $\mathcal{M}$ (e.g. Location), in which user $u$ has specific values $m_u$ (e.g. Location: Austin, TX). Our aim is to select the memory subset $S \subseteq \mathcal{M}$ with instantiated values $m_{u,S}=\{m_u^{(a)}:a \in S\}$, which when given to the model with the input $x$ yields the highest response utility. Formally,  
\begin{align}
    S^*(x,u) = \arg\max_{S \subseteq \mathcal{M}} \; \mathbb{E}_{y \sim p_\theta(\cdot \mid x, m_{u,S})}\big[\text{U}_u(y)\big].
\end{align}
For brevity, unless otherwise specified, expectations are taken w.r.t. the model's predictive distribution, $\mathbb{E}[\cdot] = \mathbb{E}_{y \sim p_\theta(\cdot \mid x, m_{u,S})}[\cdot]$. Also, we assume $x$ is fixed throughout.

However, we note that $\text{U}_u$ is unobserved and cannot be evaluated prior to generation, so there is a need for a proxy utility function. 

\vspace{-.1cm}
\subsection{RUMS-Utility}
In order to formulate a surrogate utility function, we frame memory selection for personalization as an information acquisition problem: given a large space of user memory, the goal is to select the items that most improve our response to an input.
Given this framing, we propose a novel utility function based on information gain from memory subsets, serving as an entropy-based surrogate for $\text{U}_u(y)$. Formally,
\begin{align}
\text{U}_{RUMS}(S, u)  = H_\theta \left( Y \vert x \right) - H_\theta \left( Y \vert x, m_{u,S} \right) \label{eqn:novel_utility}
\end{align}
where $H_\theta(\cdot)$ denotes Shannon entropy under the model distribution (defined in \cref{appx:shannon_entropy_defn}). Given that only the second term in \cref{eqn:novel_utility} is dependent on the subset selected $S$, we can simplify the objective and state that \ourmethod{} chooses the optimal subsets $S^*$ such as,
\begin{align*}
    S^*(x,u) = \arg \min_{S \subseteq \mathcal{M}} H_\theta(Y \mid x, m_{u,S}) 
\end{align*}
The intuition is that a memory subset $S$ is valuable to the extent it reduces uncertainty in the model’s predictions. For instance, for ``where should I eat lunch?'', without user information, the model spreads probability across countless restaurants, yielding high entropy. However, adding even a single relevant fact, such as ``Location: Austin, TX'' or ``Dietary preference: Vegan'', sharply concentrates probability on relevant options. Therefore, the utility of the memory subset $S$ is measured directly by reduction in the entropy.

\vspace{-.1cm}
\subsection{Theoretical Foundation}
We formalize conditions under which reducing predictive uncertainty leads to higher expected user utility. While the theoretical foundation rely on a latent-variable abstraction, we validate empirically (Section 4) that minimizing predictive entropy aligns with improved user-perceived utility.

\begin{assumption}[Existence of an Optimal Latent Variable]
\label{assump:existence_latent}
For each user $u$ and query $x$, there exists a latent random variable $Z_u \sim p(Z_u \mid x)$ 
representing the user-specific latent target that determines what constitutes an optimal response.
User utility depends on alignment between the generated response and this latent target.

Formally, where $\ell$ is a strictly proper scoring rule, we assume
\vspace{-.1cm}
\[
\mathbb{E}\big[\text{U}_u(y)\big]
=
- \mathbb{E}_{z_u \sim p(Z_u \mid x)}
\mathbb{E}_{y \sim p_\theta(\cdot \mid x, m_{u,S})}
[\ell(y, z_u)].
\]

\end{assumption}
\vspace{-.2cm}
Conceptually, $Z_u$ is an abstract latent variable capturing the maximal predictive information needed for an optimal response.\footnote{While this offers a formal link between predictive distributions and decision quality \citep{gneiting2007scoring}, it assumes static user preferences. In practice, preferences can be non-stationary and shaped by model outputs, so a fixed target may miss the co-evolutionary dynamics of long-term interactions.}

\begin{assumption}[Approximate Sufficiency of Latent Variable]
\label{assump:suffic_latent}
Conditioned on the query $x$ and latent target $Z_u$, the memory values $m_{u,S}$ do not provide substantial additional information about the output $Y$:
\vspace{-.1cm}
\begin{align*}
p_\theta(Y \mid x, Z_u, m_{u,S}) \approx p_\theta(Y \mid x, Z_u).
\end{align*}
\end{assumption}
\vspace{-.2cm}
This assumption treats $Z_u$ as an informational bottleneck, assuming memory items primarily inform the response by clarifying latent intent. However, this may overlook scenarios where the raw data, such as specific formatting, provides utility not captured by an abstract intent. Therefore, we adopt approximate rather than exact sufficiency to account for real-world cases where memory might contain extraneous details beyond the latent preference \citep{tishby1999information, alemi2017deepib}.

\begin{theorem}[Optimality of \textsc{RUMS}]
\label{thm:rums_optimality}
Under Assumptions \ref{assump:existence_latent} and \ref{assump:suffic_latent}, for any two memory subsets $S, S' \subseteq \mathcal{M}$,
\begin{align*}
H_\theta(Y \mid x, m_{u,S}) \le H_\theta(Y \mid x, m_{u,S'})
\quad \Rightarrow \quad \\
\mathbb{E}_{y \sim p_\theta(\cdot \mid x, m_{u,S})}[\text{U}_u(y)]
\ge
\mathbb{E}_{y \sim p_\theta(\cdot \mid x, m_{u,S'})}[\text{U}_u(y)].
\end{align*}
\end{theorem}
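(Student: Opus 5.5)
The plan is to make Assumption~\ref{assump:existence_latent} concrete by taking $\ell$ to be the logarithmic score, the canonical strictly proper scoring rule. For a latent target $z_u$ I would set $\ell(y,z_u) = -\log p_\theta(y \mid x, z_u)$. The goal is then to show that the expected loss under $p_\theta(\cdot \mid x, m_{u,S})$ equals $H_\theta(Y \mid x, m_{u,S})$ plus a term that does not depend on $S$, up to the error allowed by Assumption~\ref{assump:suffic_latent}. Once that holds, the ordering of entropies transfers directly to the ordering of expected utilities, since by Assumption~\ref{assump:existence_latent} the utility is the negative expected loss.

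\textbf{Step 1 (cross-entropy decomposition).} Write $q_S(y) = p_\theta(y \mid x, m_{u,S})$ and $r_z(y) = p_\theta(y \mid x, z)$. For fixed $z$,
\[
\mathbb{E}_{y\sim q_S}[-\log r_z(y)] = H(q_S) + \mathrm{KL}(q_S \,\|\, r_z).
\]
Averaging over $z \sim p(Z_u \mid x)$ gives
\[
-\mathbb{E}[\text{U}_u(y)] = H_\theta(Y \mid x, m_{u,S}) + \mathbb{E}_z\,\mathrm{KL}(q_S \,\|\, r_z).
\]

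\textbf{Step 2 (sufficiency).} By Assumption~\ref{assump:suffic_latent},
\[
q_S(y) = \sum_z p_\theta(z \mid x, m_{u,S})\, p_\theta(y \mid x, z, m_{u,S}) \approx \sum_z p_\theta(z \mid x, m_{u,S})\, r_z(y),
\]
so $q_S$ is approximately a mixture of the $r_z$. I would then use the standard identity for mixtures. For any weights $w$ with $\bar r = \sum_z w_z r_z$, the quantity $\sum_z w_z \mathrm{KL}(\bar r \,\|\, r_z)$ can be rewritten as $\sum_z w_z\, \mathbb{E}_{\bar r}[-\log r_z] - H(\bar r)$. Combining this with Step 1 collapses the objective to
\[
-\mathbb{E}[\text{U}_u(y)] \approx \mathbb{E}_z\, \mathbb{E}_{y \sim q_S}[-\log r_z(y)] .
\]

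\textbf{Step 3 (the $S$-dependence).} This is the main obstacle: showing that the remaining divergence term is either constant in $S$ or itself monotone in $H_\theta(Y \mid x, m_{u,S})$. I would first try to show it is constant, as follows.
\begin{itemize}
\item Invoke the model--human alignment implicit in the assumptions, namely that the model's posterior over the latent concentrates on the user's true target.
\item Under that alignment, $\mathbb{E}_z \mathrm{KL}(q_S \,\|\, r_z)$ reduces to a mutual-information-like gap, $I_\theta(Y; Z_u \mid x, m_{u,S})$.
\item By the chain rule this gap equals $H_\theta(Y \mid x, m_{u,S}) - H_\theta(Y \mid x, Z_u)$.
\item The second term, $H_\theta(Y \mid x, Z_u)$, does not depend on $S$ once sufficiency holds.
\end{itemize}
The expected loss then becomes an increasing affine function of $H_\theta(Y \mid x, m_{u,S})$ plus a constant. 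Hence $H_\theta(Y\mid x,m_{u,S}) \le H_\theta(Y\mid x,m_{u,S'})$ implies the stated inequality of expected utilities.

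If this exact bookkeeping fails, my fallback is to prove the result with an explicit slack. I would bound the sufficiency violation, for instance as $\epsilon = \max_S \mathrm{KL}\bigl(p_\theta(Y\mid x,Z_u,m_{u,S}) \,\|\, p_\theta(Y\mid x,Z_u)\bigr)$, and carry it through Steps 2--3 to obtain the implication up to an additive $O(\epsilon)$ error. This matches the ``approximately equivalent'' phrasing in the introduction, and the theorem as stated becomes the $\epsilon \to 0$ case.
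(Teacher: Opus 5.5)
Step 1 is correct. The argument breaks in Step 3, and the break cannot be repaired within Assumptions~\ref{assump:existence_latent}--\ref{assump:suffic_latent}.

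The term left over from Step 1 is $\mathbb{E}_{z\sim p(Z_u\mid x)}\mathrm{KL}(q_S\,\|\,r_z)$. By contrast, $I_\theta(Y;Z_u\mid x,m_{u,S})=\mathbb{E}_{z\sim p_\theta(z\mid x,m_{u,S})}\mathrm{KL}(r_z\,\|\,q_S)$. So the divergence runs in the opposite direction, and it is averaged over the model's posterior rather than over $p(Z_u\mid x)$. Suppose an alignment hypothesis (which is not among the theorem's assumptions) let you swap the prior weights for the posterior ones. You would obtain $\sum_z w_z\,\mathrm{KL}(\bar r\,\|\,r_z)$, which is the lautum information, not the mutual information. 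It does not equal $H_\theta(Y\mid\cdot)-H_\theta(Y\mid Z_u,\cdot)$, and it is infinite for an even mixture of two point masses whose entropy is only $\log 2$.

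Your last bullet fails as well. The term the chain rule actually produces is $H_\theta(Y\mid x,Z_u,m_{u,S})\approx\sum_z p_\theta(z\mid x,m_{u,S})\,H(r_z)$. This still depends on $S$ through the posterior weights, because sufficiency removes $m_{u,S}$ only from the inner conditional. The paper's proof makes the same move.

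Step 2 does not advance anything. The ``collapsed'' expression is literally the definition of the expected log loss. Also, the mixture weights $p_\theta(z\mid x,m_{u,S})$ used there do not match the outer expectation over $p(Z_u\mid x)$.

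The $\epsilon$-slack fallback does not help either, because the obstruction is not the sufficiency error. With your $\ell$, the expected loss is $\mathbb{E}_{y\sim q_S}[g(y)]$ with $g(y)=\mathbb{E}_{z\sim p(Z_u\mid x)}[-\log r_z(y)]$. This is linear in $q_S$, whereas $H(q_S)$ is concave, so no monotone link holds in general.

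Here is a concrete counterexample:
\begin{itemize}
\item Take $Y\in\{a,b\}$, $p(z_1\mid x)=0.9$, $r_{z_1}=(0.9,0.1)$ and $r_{z_2}=(0.1,0.9)$, with exact sufficiency.
\item Let $S'=\emptyset$ with model prior $(0.9,0.1)$ over $(z_1,z_2)$, so $q_{S'}=(0.82,0.18)$.
\item Let $S$ be a subset whose values push the model's posterior to a point mass on $z_2$, so $q_S=(0.1,0.9)$.
\end{itemize}
Then $H(q_S)\approx 0.33<0.47\approx H(q_{S'})$ nats. Yet the expected losses are about $1.91$ and $0.64$, so $S$ has strictly lower utility. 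Memory that makes the model confidently wrong lowers entropy and utility together, and nothing in the two assumptions rules it out. Your instantiation of $\ell$ is admissible under Assumption~\ref{assump:existence_latent}, so the exact implication does not follow from the stated hypotheses, and the $\epsilon\to 0$ limit of your fallback cannot recover it.

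The paper's proof takes a different route. It decomposes $H_\theta(Y,Z_u\mid x,m_{u,S})$ two ways to get $H_\theta(Y\mid x,m_{u,S})\approx C+H_\theta(Z_u\mid x,m_{u,S})$, then appeals qualitatively to propriety. It concludes only an approximate equivalence and does not supply the missing ingredient either. Any version of the argument needs an explicit additional hypothesis tying the model's posterior over $Z_u$ to the user's actual target; this is the model--human alignment the paper itself flags as necessary. Even with that hypothesis, you should aim for an approximate statement rather than the exact implication.
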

\vspace{-.2cm}
Theorem~\ref{thm:rums_optimality} formalizes the intuition that reducing predictive entropy improves expected utility, with approximation arising from the latent-variable abstraction in \cref{assump:suffic_latent}.
\vspace{-.6cm}
\paragraph{Proof Sketch.}
Predictive entropy decomposes into uncertainty about the latent target $Z_u$ and residual uncertainty about $y$. Under \cref{assump:suffic_latent}, selected memory mainly reduces uncertainty about $Z_u$, and because strictly proper scoring rules (\cref{assump:existence_latent}) reward matching the conditional distribution, this reduction directly improves expected utility. Full derivation is provided in Appendix~\ref{appx:optimality_proof}.
\vspace{-.2cm}
\paragraph{Interpretation and Limitations} \cref{thm:rums_optimality} provides a principled justification for selecting memory subsets that reduce predictive entropy. However, in practice, \textit{this surrogate objective is only meaningful if the model’s predictive distribution $p_\theta(y \vert x, m_{u,S})$ is aligned with human judgments}, that is, responses the model assigns higher probability to under relevant memory subsets tend to be more useful to humans. While \cref{thm:rums_optimality} provides a principled foundation, \textit{empirical evaluation (Section 4) is necessary} to verify that reductions in entropy translate into improvements in actual user-perceived utility.

\vspace{-.1cm}
\subsection{Practical Estimation}
Although \ourmethodutility{} is theoretically grounded, directly computing it for LLM-generated sequences is intractable, as the space of possible outputs y grows exponentially with the sequence length T. Therefore, we employ practical approximations via token-level decomposition and sampling.

\vspace{-.2cm}
\paragraph{Token-Level Decomposition}  
To achieve tractability, we leverage the chain rule of probability to decompose the sequence-level entropy into a manageable sum of conditional token-level entropies:
\vspace{-.2cm}
\begin{align*}
H_\theta&(Y \mid x, m_{u,S}) =
&\mathbb{E}
\Big[ \sum_{t=1}^T H\big(p_\theta(y_t \mid y_{<t}, x, m_{u,S})\big) \Big].
\end{align*}
Here, the expectation is taken w.r.t. $p_\theta(y_t \mid y_{<t}, x, m_{u,S})$ the model’s conditional distribution over the next token. This decomposition reduces the problem to evaluating token-level entropies, which is computationally manageable for long sequences (see proof of this composition in \cref{appx:seq_entropy_proof}).
\vspace{-.4cm}
\paragraph{Monte Carlo Approximation}  
The outer expectation over token prefixes remains intractable, so we approximate it using Monte Carlo sampling. We generate $N$ sequences $\{y^i\}_{i=1}^N$ from $p_\theta(\cdot \mid x, m_{u,S})$ and compute
\[
\hat{H}_\theta(Y \mid x, m_{u,S}) = \frac{1}{N} \sum_{i=1}^N \frac{1}{T_i} \sum_{t=1}^{T_i} H\big(p_\theta(y_t^i \mid y_{<t}^i, x, m_{u,S})\big),
\]
where $T_i$ is the length of the $i$-th sequence. We normalize by sequence length to estimate per-token entropy, which allows comparison across variable-length generations.

This approximation converges to the true per-token sequence entropy as $N \to \infty$ under i.i.d. sampling from $p_\theta(\cdot \mid x, m_{u,S})$ with finite expectation. In practice, however, factors such as constrained decoding or approximate sampling distributions can introduce bias. Despite these limitations, we find $N=5$ samples provides a good balance between accuracy and efficiency. For more empirical analysis, see \cref{appx:robust_utility_scores}. 

The final estimated utility is then,
\vspace{-4pt}
\begin{align}
\widehat{\text{U}}_{\textsc{RUMS}}(S, u) = \hat{H}_\theta(Y \mid x) - \hat{H}_\theta(Y \mid x, m_{u,S}). \label{eqn:method_utility}
\vspace{-6pt}
\end{align}

Then, our final objective selects the memory subset that maximizes this utility:
\vspace{-4pt}
\begin{align}
    S^*(x, u) = \arg \max_{S \subseteq M} \widehat{U}_{\textsc{RUMS}}(S,u). \label{eqn:method_obj_fn}
\end{align}
$\widehat{\text{U}}_{\textsc{RUMS}}(S, u)$ serves as a practical surrogate for user utility and retains the theoretical connection to expected utility established in Theorem~\ref{thm:rums_optimality}.

\vspace{-.1cm}
\subsection{\ourmethod-Models: Learning to Approximate Utility}
\label{sec:method:model}
While \ourmethod{} provides a principled approach to memory selection, explicitly computing entropy reductions over all possible subsets at inference time is prohibitive, particularly for users with large memory profiles. To enable efficient deployment, we use \ourmethod{} offline to generate training data and train a lightweight model to directly predict memory selections at inference.
\vspace{-.3cm}
\paragraph{Training Data Generation}
Our training pipeline uses \ourmethod{} to generate (user query, optimal memory subset) pairs.
For each query $x$, we pair it with a diverse set of users $\{u_1, \ldots, u_K\}$, each with memory items $m_u$. When profiles are large ($|m_u| \geq 50$), we first use an LLM to filter to $k$ candidate items. We then compute $\widehat{\text{U}}(S, u)$ from \cref{eqn:method_obj_fn} for candidate subsets $S$, selecting the highest-utility subset $S^*(x,u) = \arg\max_{S} \widehat{\text{U}}(S,u)$ if it exceeds threshold $\tau$, or the empty set otherwise. This yields $K$ training examples $\{(x, u_k, S^*(x,u_k))\}_{k=1}^K$ per query. Pairing each query with diverse profiles encourages the model to learn genuinely informative item selection rather than spurious correlations. The threshold $\tau$, tuned on validation data, enables automatic abstention when personalization would not help.

\textbf{Model Architecture and Training}
We train \ourmethod{}-Models as lightweight classifiers to approximate utility computation at inference. For dynamic memory schemas, where item types vary across users, a binary classifier (\ourmethodconserv{}) $f(x, m_i) \rightarrow \{0,1\}$ evaluates each memory item $\{m_i\}_{i=1}^n$ independently, handling arbitrary structures but requiring $n$ forward passes. For static schemas with $d$ fixed item types, a multiclass classifier (\ourmethodlessconserv{}) $f(x) \rightarrow \{0,1\}^d$ outputs all selections in a single pass. The resulting \ourmethod{}-Models match the inference speed of simple similarity-based retrieval while retaining principled selection quality, making it practical for large-scale deployment. %
\begin{figure*}[t]
    \centering
    \begin{subfigure}[c]{0.35\textwidth} %
        \centering
        \includegraphics[width=0.85\linewidth]{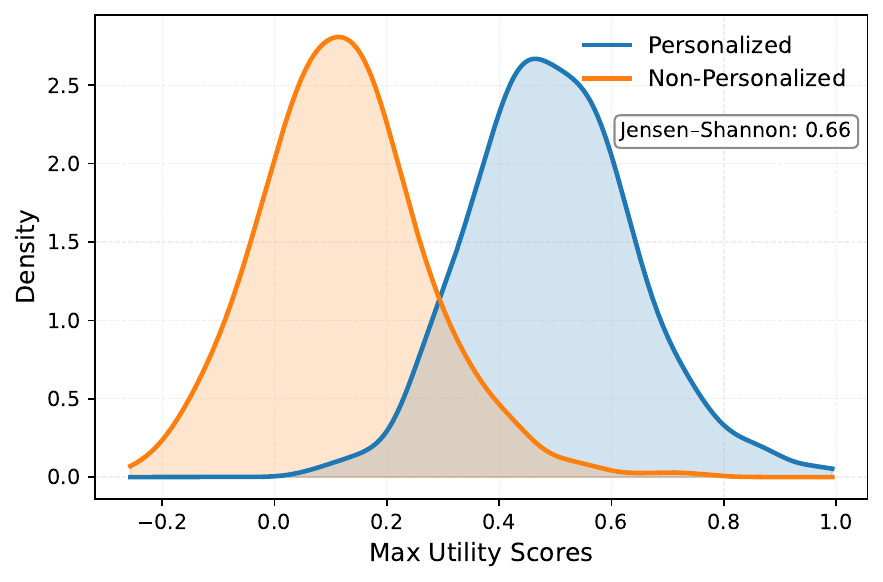}
        \caption{Distribution of maximum utility scores for personalized (\pf{}) and non-personalized (\fb{}) inputs.}
        \label{fig:h1_distr_static}
    \end{subfigure}
    \hfill
    \begin{subfigure}[c]{0.6\textwidth} %
        \centering
         \caption{Recall (accuracy on personalized) and specificity (accuracy on non-personalized) comparison across datasets and methods. \textbf{Bolded} is the highest value in each category.}
        \label{tab:all_agree_binary_personalization}

\small
\centering
\begin{tabular}{l p{1cm} p{1cm} p{1cm} p{1.5cm}}
\toprule
\textbf{Dataset / Metric} & \ourmethod{}-Utility & GPT4-Few & GPT4-Zero & Semantic Similarity \\
\midrule
Synthetic (n=20) / Recall & 94.4 & \textbf{100.0} & 33.3 & \textbf{100.0} \\
Synthetic (n=20) / Specificity & \textbf{94.4} & 50.0 & 50.0 & 0.0 \\
Real-World (n=31) / Recall & \textbf{92.3} & 76.9 & 53.9 & 7.7 \\
Real-World (n=31) / Specificity & 96.0 & \textbf{100.0} & \textbf{100.0} & \textbf{100.0} \\
\bottomrule
\end{tabular}
    \end{subfigure}
    \vspace{-.2cm}
    \caption{Results for H1 analysis.}
    \label{fig:combined_h1}
    \vspace{-.2cm}
\end{figure*}
 \section{Results}\label{sec:training_method}

In this section, we evaluate \ourmethod{} by defining three outputs: \textit{\ourmethodutility{}}, the utility score from \cref{eqn:method_utility}; \textit{\ourmethodselect{}}, the selected memory subset from \cref{eqn:method_obj_fn}; and \textit{\ourmethodmodel{}}, models trained on \ourmethod{}-generated data.

We evaluate our approach along three dimensions: 
\begin{itemize}[leftmargin=28pt,topsep=0pt,partopsep=0pt, itemsep=0pt, parsep=0pt]
\item[H1:] \ourmethodutility{} provides a reliable signal of human preference for distinguishing between inputs that do and do not benefit from personalization.
\item[H2:] The subsets selected, \ourmethodselect{}, choose memory subsets more aligned with human preference than SOTA baselines and larger models. Furthermore, the utility signal is strong enough that models trained on them (\ourmethodmodel{}) also outperform baselines and larger models.
\item[H3:] Memory subsets selected with \ourmethodmodel{} improve downstream response generation compared to state-of-the-art baselines and larger models.
\end{itemize}
\vspace{-.2cm}
\subsection{Experimental Setup}\label{sec:exp_setup}
Across all experiments (H1–H3), we consider $n=50$ static user memory items (e.g., \textit{location}, \textit{age}, \textit{name})\footnote{We generated 100 candidate memory items, but items beyond the top 50 were increasingly obscure and less real-world attributes.}, and assumes that all values are given for each user (see \cref{appx:memory_items} for list of items). The task is to select up to $5$ items that are most relevant for answering a given user input $x$.

{\setlength{\textfloatsep}{0.01pt} %
\setlength{\tabcolsep}{3pt}
\begin{table*}[t]
\centering
\footnotesize
\setlength{\tabcolsep}{3pt} %
\caption{Precision, recall, and F1 scores of memory items selected compared to human gold label annotation across datasets and methods.}
\label{tab:H2_synth_realworld}
\resizebox{.8\textwidth}{!}{
\begin{tabular}{l l c c c c c c}

\toprule
\textbf{Dataset} & \textbf{Metric} & \ourmethodselect{} & \ourmethodconserv{} & \ourmethodlessconserv{} & Semantic Sim. & GPT4-Zero & GPT4-Few \\
\midrule
 & Precision & \textbf{0.61} & 0.60 & 0.23 & 0.09 & 0.26 & 0.53 \\
Synthetic  & Recall    & 0.70 & \textbf{0.89} & 0.39 & 0.23 & 0.42 & 0.65 \\
 & F1        & 0.70 & \textbf{0.78} & 0.31 & 0.14 & 0.35 & 0.64 \\
\midrule
 & Precision & \textbf{0.48} & 0.32 & 0.30 & 0.17 & 0.39 & 0.42 \\
Real World & Recall    & 0.36 & \textbf{0.60} & 0.38 & 0.27 & 0.52 & 0.39 \\
 & F1        & \textbf{0.53} & 0.45 & 0.35 & 0.20 & 0.43 & 0.39 \\
\bottomrule
\end{tabular}
}
\vspace{-.3cm}
\end{table*}
}

\textbf{Datasets.} 
To test both instances that would and would not benefit from personalization, we use a diverse mixture of possible user inputs. We evaluate across $3$ datasets:  

\begin{itemize}[leftmargin=*, topsep=-2pt,itemsep=0pt]
    \item \textbf{\pf{}} \citep{tao2025personafeedbacklargescalehumanannotatedbenchmark}: 
    A synthetic dataset of LLM-generated user inputs using randomly sampled user memory, ensuring each input benefits from personalization. We translated the original Chinese data using a translation model \citep{TiedemannThottingal:EAMT2020}.
    Example: \textit{``Do you have any music for the trip?''}

    \item \textbf{\fb{}} \citep{jiang-etal-2019-freebaseqa}: A dataset of trivia-style questions and answers. Example: \textit{``What type of snake is a sidewinder?''}

    \item \textbf{WildChat} \citep{zhao2024wildchat}: Real-world ChatGPT interactions (English only, non-toxic, and request-based). We use only the first human-written input. Examples: \textit{``im hungry''; ``who is Mohamed Gamal Mohamed Maniieh''}
\end{itemize}

By nature of the data, we classify \pf{} as personalized inputs and \fb{} as non-personalized inputs, and refer to these together as the ``Synthetic'' dataset. For WildChat, two annotators labeled each input as personalized or non-personalized (agreement required). We denote these subsets as \wcp{} and \wcnp{}, which form the ``Real-World'' dataset. 

\textbf{User Profiles.} 
We used GPT-4 \citep{openai2024gpt4technicalreport} to generate 50 common user memory items and n=100 consistent user profiles (avoiding contradictions like ``vegan'' with ``favorite food = steak'') for training and testing. See \cref{appx:profile_gen} for prompts, distributions of memory item values and generation details.

\textbf{\ourmethod{}-Model Training} To create training data with \ourmethod{}, we used Llama-3.1-8B \citep{meta_llama_3_1_8b_instruct} to compute entropies. Following the procedure in \cref{sec:method:model}, GPT-4 was first prompted to reduce the candidate memory items from 50 to at most 10 for each query before computing the utility scores. The utility scores were estimated with $n=5$ Monte Carlo samples. Lastly, a threshold of $\tau=0.29$, determined on a held-out validation set,was applied to choose between the subset with maximum utility or the empty set (i.e., no personalization). We use $n=2.5$K from \pf{} and $2$K from \fb{} for training. Since both datasets are also used for evaluation, we rely on held-out splits to avoid overlap and interpret results as in-domain performance. To assess generalization, we also evaluate on WildChat, a real-world dataset from a different distribution.

We train two \ourmethod{}-Model variants: The first, \ourmethodconserv{}, is a binary classifier trained on individual (query, memory value) pairs. Because each memory value is treated separately and negative labels dominate, this model tends to under-select, making it better suited for noise-sensitive settings. The second, \ourmethodlessconserv{}, is a multiclass classifier trained on denser labels constructed by merging the selected subsets from the $u=5$ user profiles per query. This means, all memory items chosen by any of the profiles were included. As a result, the model tends to over-select, which is advantageous for more robust models that can filter noise. Both are based on DeBERTa-v3-large \citep{he2021deberta}. Full details are in \cref{appx:model_training}.

\textbf{Baselines} 
Current approaches to memory selection primarily rely on two strategies: semantic similarity-based retrieval and LLM prompting. To comprehensively evaluate against existing methods, we employ baselines representing both approaches: (1) \textbf{Semantic Similarity}: We compute cosine similarity between input and memory item embeddings using Sentence Transformer \citep{all-MiniLM-L6-v2}, representing the dominant paradigm of surface-level similarity matching in current retrieval systems, (2) \textbf{GPT-4 Prompting}: We use direct prompting of GPT-4 (\textit{zero-shot} and \textit{few-shot} - up to 4 in-context examples) as a representative LLM-based approach and upper bound on model capability, given its 400× larger size compared to \ourmethodmodel{}.
For H3, which evaluates downstream generation quality, we thus additionally include established retrieval methods and naive baselines to ensure comprehensive coverage: (3)\textbf{ SOTA Retrieval Methods}: We evaluate \textit{BM25} \citep{bm25s} and \textit{ReContriever} \citep{lei-etal-2023-unsupervised} as representative SOTA retrieval approaches for memory selection, (4) \textbf{Naive Baselines}: We include \textit{None} (no items), \textit{All} (50 items), and \textit{Random} (5 items) to establish performance bounds and test robustness to different selection strategies.
{\setlength{\textfloatsep}{.05pt}
\begin{table*}[t]
\centering
\small
\caption{Average win rate of our method against baselines on Synthetic (left) and Real World (right) datasets. Cells in \textcolor{green}{green} mark cases where our method outperforms the baseline, and cells in \textcolor{red}{red} mark cases where the baseline does better. Values with magnitude above 0.1 are \textbf{bolded} to highlight strong differences. Results are reported for two response models (LLaMA 70B Instruct and GPT-4) and two \ourmethod{} variants: \ourmethodconserv{} (fewer items) and \ourmethodlessconserv{} (more items).}
\label{tab:synth_realworld_winrates}
\resizebox{\linewidth}{!}{%
\begin{tabular}{cc}

\begin{tabular}{lcccccc}
&& \multicolumn{3}{c}{Synthetic} &&\\
\toprule
 & \multicolumn{3}{c}{LLaMA 70B} & \multicolumn{3}{c}{GPT-4} \\
 & Utility & Binary & Multi & Utility & Binary & Multi \\
\midrule
None              & \cellval{0.11} & \cellval{0.25}  & \cellval{-0.13} & \cellval{0.20} & \cellval{0.26}  & \cellval{0.36} \\
All               & \cellval{0.18} & \cellval{0.16}  & \cellval{-0.19} & \cellval{0.20} & \cellval{-0.16} & \cellval{0.02} \\
Random            & \cellval{0.44} & \cellval{0.34}  & \cellval{0.13}  & \cellval{0.10} & \cellval{0.06}  & \cellval{0.19} \\
Semantic Sim.     & \cellval{0.43} & \cellval{0.27}  & \cellval{0.09}  & \cellval{-0.12} & \cellval{-0.03} & \cellval{0.05} \\
BM25              & \cellval{0.03} & \cellval{0.19}  & \cellval{-0.13} & \cellval{0.18} & \cellval{0.20}  & \cellval{0.32} \\
ReContriever      & \cellval{0.31} & \cellval{0.27}  & \cellval{-0.03} & \cellval{-0.05} & \cellval{-0.04} & \cellval{0.03} \\
GPT4-Few          & \cellval{0.07} & \cellval{0.18}  & \cellval{-0.16} & \cellval{-0.07} & \cellval{-0.09} & \cellval{-0.01} \\
\bottomrule
\end{tabular}

&

\begin{tabular}{cccccc}
& \multicolumn{4}{c}{Real World} &\\
\toprule
\multicolumn{3}{c}{LLaMA 70B} & \multicolumn{3}{c}{GPT-4} \\
Utility & Binary & Multi & Utility & Binary & Multi \\
\midrule
\cellval{-0.05} & \cellval{0.05}  & \cellval{0.03}  & \cellval{0.10} & \cellval{0.20}  & \cellval{0.17} \\
\cellval{0.03} & \cellval{-0.04} & \cellval{-0.10} & \cellval{-0.14} & \cellval{-0.14} & \cellval{-0.06} \\
\cellval{0.25} & \cellval{0.11}  & \cellval{0.13}  & \cellval{0.05} & \cellval{0.10}  & \cellval{0.04} \\
\cellval{0.16} & \cellval{-0.03} & \cellval{0.00}  & \cellval{-0.06} & \cellval{-0.02} & \cellval{0.00} \\
\cellval{0.06} & \cellval{0.01}  & \cellval{0.05}  & \cellval{0.10} & \cellval{0.18}  & \cellval{0.13} \\
\cellval{0.22} & \cellval{0.06}  & \cellval{0.01}  & \cellval{0.07} & \cellval{-0.05} & \cellval{0.13} \\
\cellval{0.01} & \cellval{0.00}  & \cellval{-0.08} & \cellval{-0.08} & \cellval{0.04}  & \cellval{0.11} \\
\bottomrule
\end{tabular}
\end{tabular}
}
\end{table*}
}
 \vspace{-.3cm}
\subsection{H1: Does \ourmethodutility{} Distinguish Personalized from Non-Personalized Inputs?}
We hypothesize that entropy reduction when conditioning on user memory reflects its importance. Thus, inputs that do not benefit from personalization should yield low maximum  utility scores (i.e., the maximum utility score across all subsets of candidate memory items for a specific user), while personalized inputs should yield high ones.
To test this, we compare the maximum utility scores on a known non-personalized (\fb{}) and a known personalized (\pf{}) datasets. We sampled 500 inputs per dataset and computed the maximum utility score using \cref{eqn:method_utility} for each input. The resulting distributions were compared using divergence metrics and hypothesis testing. 
\paragraph{Utility scores separate personalized from non-personalized inputs.} In \cref{fig:h1_distr_static} we show that personalized inputs exhibit much higher utility scores (mean = $0.50$, std = $0.12$) than non-personalized ones (mean = $0.12$, std = $0.09$). The Jensen–Shannon divergence between distributions is high at $0.66$, and a two-sample Kolmogorov–Smirnov test yields a statistic of $0.82$ ($p < .001$), strongly rejecting the null hypothesis of equal distributions. These results indicate that maximum utility scores reliably separate personalized from non-personalized inputs. 
For further exploration on hyperparameter selection and it's effects on the robustness of these results see \cref{appx:robust_utility_scores}.

\vspace{-.2cm}
\subsubsection{H1$'$: Do Utility Scores Align with Human Judgments of Personalization?}
We next test whether personalization decisions made by the utility score thresholding aligns with human judgment. We randomly sampled $n\!=\!25$ inputs from each of \pf{}, \fb{}, \wcnp, and \wcp. Four human annotators labeled whether each input benefits from personalization, which yielded an agreement between the parties of Cohen's $\kappa\!=\!0.59$; we retained only inputs with unanimous agreement ($n\!=\!64$). 

In order to get a robust signal using \ourmethod{}, we used $n\!=\!10$ Monte Carlo samples and $u\!=\!10$ randomly selected user-profiles. We used the average maximum utility score over the profiles as \ourmethodutility{}. We then tuned a threshold on 20\% validation data. 
We compared \ourmethodutility{} against GPT-4 (prompted to determine personalization necessity) and semantic similarity baselines with a threshold tuned on the validation set. The 
performance was measured by recall (accuracy in identifying personalized inputs) and specificity (accuracy in identifying non-personalized inputs). If both humans and the model selected no items, the sample was counted correct for both metrics. 

\textbf{\ourmethodutility{} correlate more strongly with human judgments.}
As shown in Table \ref{tab:all_agree_binary_personalization}, \ourmethodutility{} outperforms GPT-4 and semantic similarity baselines. Compared to GPT-4 FewShot, it achieves +7\% recall and +15\% specificity overall. Gains are strongest on synthetic data, with competitive results on real-world. Other baselines show imbalanced behavior (e.g., GPT-4 ZeroShot often avoids personalization, giving high specificity but low recall). In contrast, utility scores perform well on both metrics, reflecting their ability to capture each item’s true effect on generation.
\vspace{-.2cm}
\subsection{H2: Does \ourmethodselect{} Match Human Judgment?}
Building on H1, we now ask whether our method identifies which specific memory items help personalization. We randomly selected $n=15$ samples from each of \pf{}, \fb{}, \wcnp{}, and \wcp{} for a total of $n=60$. As before, four annotators selected any (or none) of 50 possible items they judged useful, yielding $41\%$ majority ($k \geq 3$) agreement.

While moderate, this level of agreement is expected given the subjective nature of the task. We therefore considered an item as selected by humans if a majority agreed, resulting in an average of 4.19 items selected per user query for inputs where at least one item was chosen. 

We evaluated (i) direct selection from utility scores \ourmethodselect{} (up to 5)
and (ii) \ourmethodmodel{}, the trained models described in \cref{sec:training_method}. 
We compare \ourmethodselect{} and \ourmethodmodel{} against GPT-4 (prompted to select up to five relevant memory items, both zero- and few-shot) and semantic similarity (selecting up to five highest-scoring items with cosine similarity $>0$).
Performance was measured with precision, recall, and F1 against human labels. 

\textbf{\ourmethodselect{} aligns more closely with human annotation.}
As shown in \cref{tab:H2_synth_realworld}, the two methods based on \ourmethod{} achieve on average $20\%$ higher precision, $31\%$ higher recall, and $9\%$ higher F1 compared to the baselines across both synthetic and real-world datasets. Semantic similarity performs worst, reflecting reliance on surface overlap rather than downstream utility.
GPT-4 shows strong recall but substantially lower precision, indicating that it tends to over-select. In contrast, the results show that the raw utility scores, \ourmethodselect{}, themselves provide strong signal for identifying useful memory items, as \ourmethodmodel{} achieves similar F1, demonstrating that small models with modest amounts of data can learn effective memory selection from utility scores. For analysis of the approximation gap between \ourmethodutility{} and \ourmethodmodel{}, see \cref{appx:H2_approx_gap}; qualitative examples appear in \cref{appx:H2_qualitative}.

\vspace{-.2cm}
\subsection{H3: Do Items Selected by \ourmethodmodel{} Improve Downstream Generation?}
Having established that \ourmethod{} makes effective decisions about when and what to personalize (H1-H2), we now demonstrate that these selections translate to tangible improvements in response quality. 
This tests our core claim: principled selection leads to better personalization.

\textbf{Experimental Setup.}
We evaluate \ourmethod{} on both synthetic (\pf{} and \fb{}, 100 samples each) and real-world datasets (WildChat, 50 personalized and 50 non-personalized) to assess generalization. Responses are generated using two models: GPT-4, a high-performing model with robust capabilities, and LLaMA 70B Instruct \citep{meta_llama_3_1_70b_instruct}, a smaller open-source model that is relatively more susceptible to noise. 
We evaluate response quality using \ourmethodutility{} and \ourmethodmodel{} against all baselines described in \cref{sec:exp_setup}. However, we are most interested in \ourmethodmodel{} performance, since \ourmethodutility{} is much more computationally expensive to run at inference. 
 Performance is measured by GPT-4 win rate using chain-of-thought prompting and confidence thresholding (Appendix~\ref{appx:eval_prompts}).

\begin{figure}[t]
\centering
\small
\begin{subfigure}{0.8\linewidth}
\caption{Average number of memory items by method and dataset type.}
\label{fig:avg_tokens}
\begin{tabular}{lcc}
\toprule
Method & Synthetic & Real World \\
\midrule
All & 50.00 & 50.00 \\
None & 0.00 & 0.00 \\
Random & 5.00 & 5.00 \\
Semantic Similarity & 4.63 & 4.79 \\
BM25 & 0.60 & 0.35 \\
ReContriever & 5.00& 5.00 \\
GPT4-Few & 2.80 & 3.00 \\
\rowcolor{gray!20}\ourmethodconserv{} & 1.58 & 2.22 \\
\bottomrule
\end{tabular}
\end{subfigure}
\begin{subfigure}{0.8\linewidth}
\includegraphics[width=\linewidth]{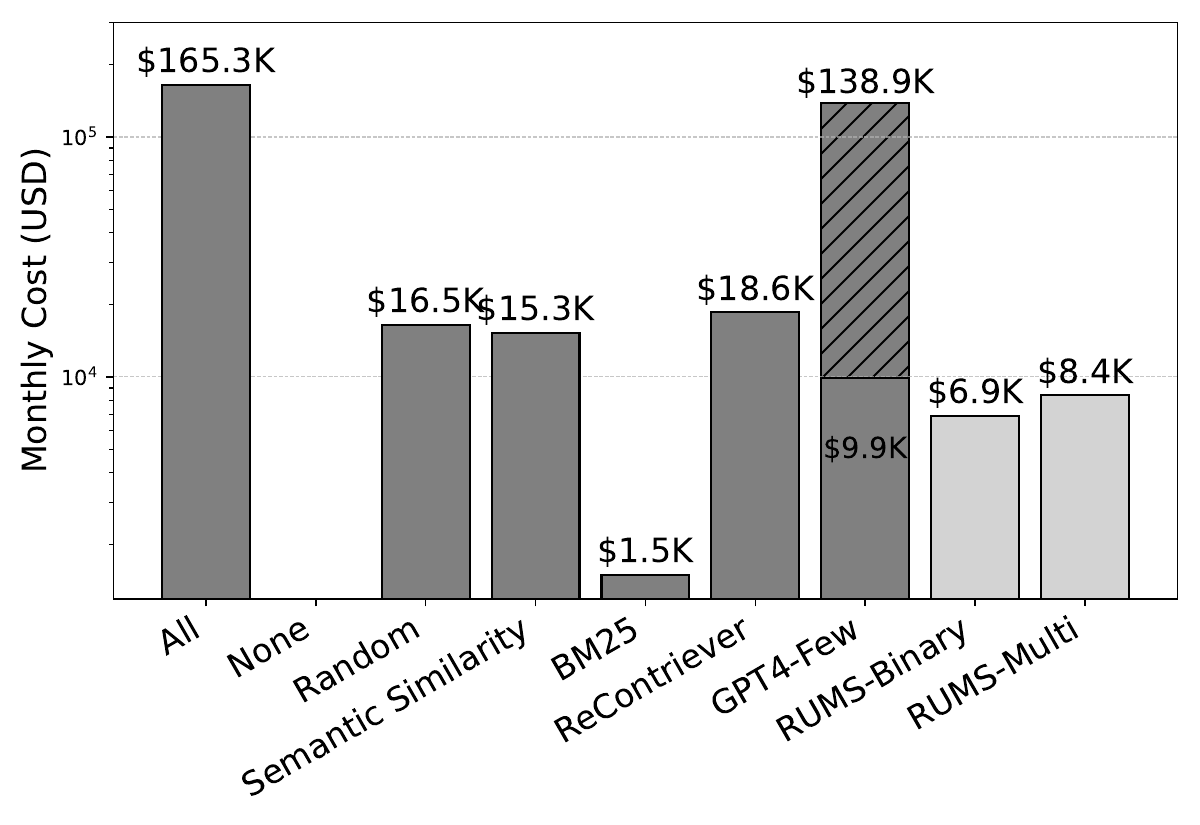}
\caption{Estimated additional average monthly cost by method. Diagonal line indicate added cost of LLM calls for memory selection.}
\label{fig:cost_analysis}
\end{subfigure}
\caption{Cost Analysis}
\vspace{-20pt}
\end{figure}

\textbf{\ourmethod{} improves response quality across diverse settings}
Table \ref{tab:synth_realworld_winrates} reports generation win rates comparing our method against the baselines synthetic and real-world datasets.
On the synthetic dataset, \ourmethodconserv{} consistently outperforms all baselines, achieving at least a $15\%$ higher win rate when responses are generated using the less robust LLaMA 70B model. Most notably, it has an $18\%$ winrate over GPT4-Fewshot, a model $400\times$ it's size. Most of these gains come from non-personalized inputs, where a lower-capability model struggles to ignore noisy items, reducing its performance. In contrast, when using a more capable model such as GPT-4 for response generation, \ourmethodlessconserv{} outperforms almost all baselines, primarily driven by personalized inputs. In this setting, the model can leverage more items, including potentially noisy ones, while still filtering out unhelpful items to a large extent. Real-world results follow the same trend. \ourmethodconserv{} achieves a $12\%$ higher win rate than the best SOTA baseline, ReContriever with LLaMA 70B, while \ourmethodlessconserv{} exceeds GPT4-FewShot by $11\%$ and BM25/ReContriever by $13\%$ with GPT-4. The \textit{All} baseline performs reasonably with only 50 items, but prior work shows scaling degrades performance \citep{rajeev2025cats}. See Appendix~\ref{appx:breakdown_results} for breakdowns. These results were validated with a small-scale human evaluation as well (\ref{appx:H3_human_eval}). These results empirically validate Theorem 3.1: entropy-reducing memory subsets consistently improve response quality, confirming that predictive entropy serves as a reliable proxy for user utility.

\textbf{\ourmethodmodel{} performs similarly to \ourmethodutility{}.} We find, that similar to the results of H2, the downstream results using \ourmethodutility{} are most similar to \ourmethodconserv{}. Also, in general, \ourmethodutility{} performs very strongly against all baselines, with over $10\%$ better winrate against almost all baselines when generating with LLaMA 70B, and all low-level baselines (None, All, Random, and BM25) when generating with GPT-4.

\textbf{\ourmethodmodel{} reduces cost compared to other under-performing methods.}
Because personalization may be applied to every model input, efficiency is critical. We track both the number of memory items selected and the estimated monthly inference cost, computed as the extra tokens from selected items multiplied by a pricing model of \$0.01 per 1K tokens for one million daily queries (current GPT-4 rates). \cref{fig:avg_tokens} shows that \ourmethodmodel{} select substantially fewer memory items than most baselines, translating to significant cost savings (\cref{fig:cost_analysis}): \ourmethodconserv{} costs \$6.9K/month and \ourmethodlessconserv{} costs \$8.4K/month, compared to \$165.3K for All and \$18.6K for the competitive ReContriever. Both \ourmethod{} variants are cheaper than GPT4-FewShot, even when the additional cost of memory subset selection required by GPT4 baselines is excluded. Thus, \ourmethod{} balances informativeness and efficiency, enabling scalable personalization.

\section{Conclusion} We introduced \ourmethod{}, an information-theoretic method for user memory selection that leverages response-aware utility as a more reliable signal for personalization than input-similarity. Our results show that \ourmethod{} accurately identifies when personalization is needed, selects memory subsets that align better with human judgment than SOTA baselines, and improves downstream generation quality while reducing computational cost by up to $95\%$. Future work includes applying \ourmethod{} to multi-turn conversations, enhancing \ourmethodmodel{} with stronger user profile generation, and extending evaluation across a broader range of LLMs.

\section*{Acknowledgments}
We thank Philippe Laban, Tobias Schnabel, and Tarek Naous for their valuable discussions and insightful feedback throughout this project. 

\bibliography{icml}
\bibliographystyle{icml2026}

\newpage
\appendix
\onecolumn
\section{Additional Experimentation}  
In this section, we present additional experiments to supplement our main results.   

\subsection{Robustness Analysis of Utility Scores}\label{appx:robust_utility_scores}  
In H1, we hypothesized that \ourmethod-Utility, the maximum utility scores, can reliably distinguish between user inputs that would benefit from personalization and those that would not. Here, we further investigate how hyperparameters in the \ourmethod-Utility algorithm affect this reliability and provide justification for these effects. Specifically, we examine the influence of model type, model size, model family, decoding method type, maximum number of tokens generated, number of Monte Carlo samples, and number of user profiles. For each analysis, all other variables are held constant using the LLaMa base model, 8B size, sampling decoding, $T=20$ tokens, $n=1$ sample, and averaged over $n=3$ user profiles.  

\paragraph{Hyperparameter: Model Type}  
We first compare the LLaMa-8B base model to an LLaMa-8B instruction-tuned model. As shown in \cref{fig:hyperparam_modeltype}, the base model produces more separated distributions of utility scores than the instruct model, with a Jensen-Shannon divergence score of 0.56 compared to 0.32 for the instruct model. This is consistent with prior research showing that instruction-tuning tends to pre-sharpen output distributions \citep{10.5555/3692070.3693952}. Based on this observation, we chose to use the base model for our experiments.

\begin{figure}[h!]
    \centering
    \begin{subfigure}{0.45\textwidth}
        \centering
        \includegraphics[width=\linewidth]{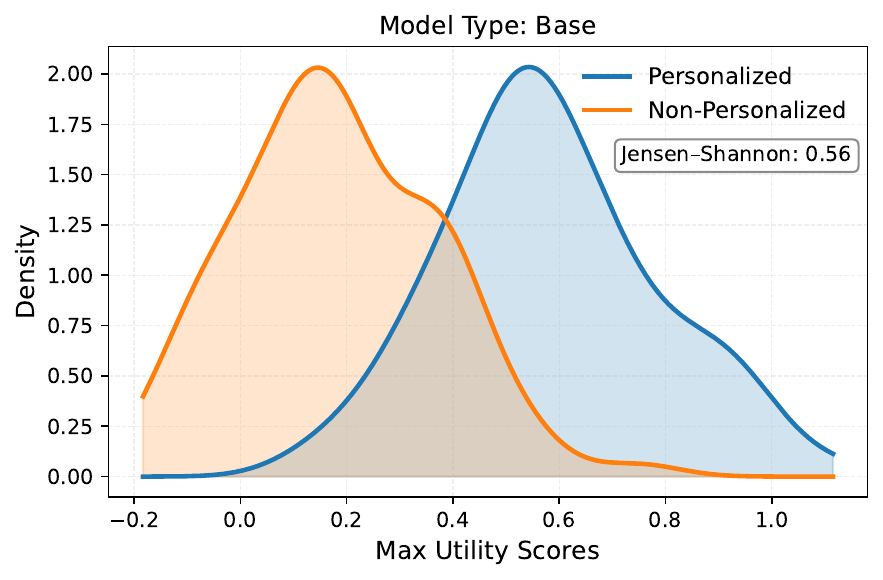}
        \caption{Model Type: Base}
    \end{subfigure}
    \hfill
    \begin{subfigure}{0.45\textwidth}
        \centering
        \includegraphics[width=\linewidth]{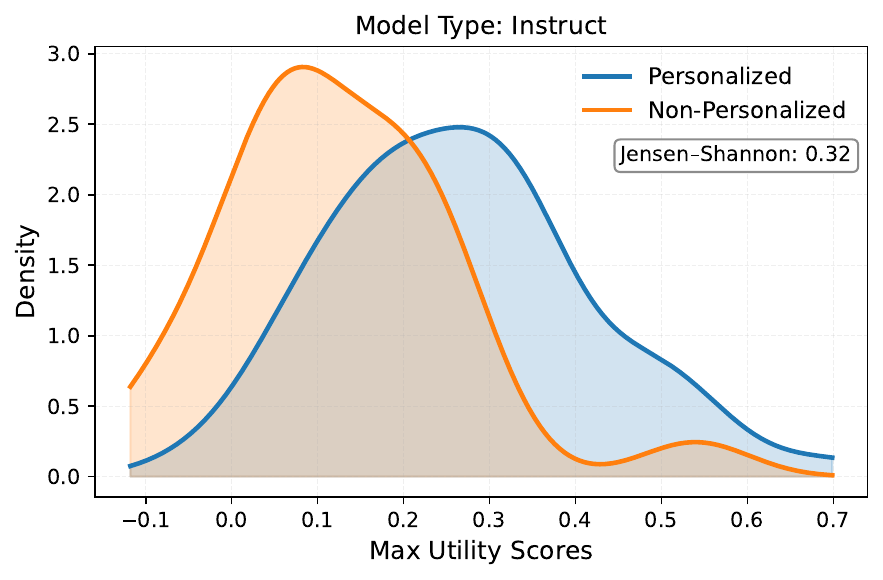}
        \caption{Model Type: Instruct}
    \end{subfigure}
    \caption{Distribution of maximum utility scores for a personalized (blue) and non-personalized (orange) dataset differing by type of model used for entropy calculation. }
    \label{fig:hyperparam_modeltype}
\end{figure}

\paragraph{Hyperparameter: Model Size}  
Next, we examined the effect of model size on the entropy calculation. As shown in \cref{fig:hyperparam_modelsize}, the distribution of utility scores changes when using LLaMA-1B compared to LLaMA-8B. While the impact is less pronounced than the effect of model type, we observe an increase of 0.11 in the Jensen-Shannon divergence score when using the 8B model. We hypothesize that larger models are better able to capture subtle and complex patterns in their response distributions. Given that the improvement is relatively modest, we opted to use the 8B model to balance performance with computational efficiency and avoid the added cost of larger models.

\begin{figure}[h!]
    \centering
    \begin{subfigure}{0.45\textwidth}
        \centering
        \includegraphics[width=\linewidth]{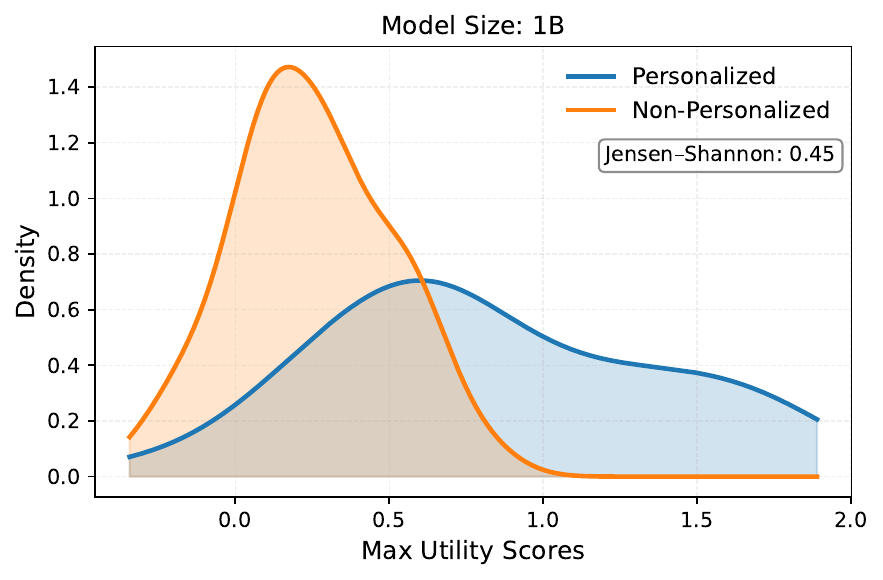}
        \caption{Model Size: 1B}
    \end{subfigure}
    \hfill
    \begin{subfigure}{0.45\textwidth}
        \centering
        \includegraphics[width=\linewidth]{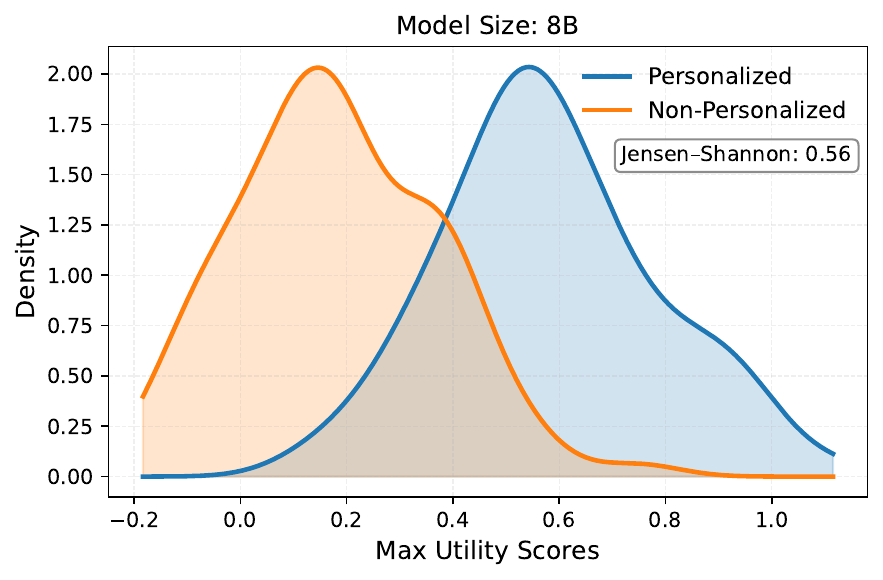}
        \caption{Model Size: 8B}
    \end{subfigure}
    \caption{Distribution of maximum utility scores for a personalized (blue) and non-personalized (orange) dataset differing by size of model used for entropy calculation. }
    \label{fig:hyperparam_modelsize}
\end{figure}

\paragraph{Hyperparameter: Model Family}\label{appx:model_family_ablation} 
We next examine the effect of model family, comparing Llama-8b \cite{meta_llama_3_1_8b_instruct}, Mistral-7b \cite{mistral_7b}, and Gemma-9B \cite{gemma-9B}. As shown in \cref{fig:hyperparam_family}, we find that Mistral model performs similar to Llama (0.41 vs. 0.56), where the Gemma model performs badly (0.17). This could indicate that some model families better capture human preference than others. Based on this observation, we use Llama-7B in our experiments.

\begin{figure}[!ht]
    \centering
    \begin{subfigure}{0.30\textwidth}
        \centering
        \includegraphics[width=\linewidth]{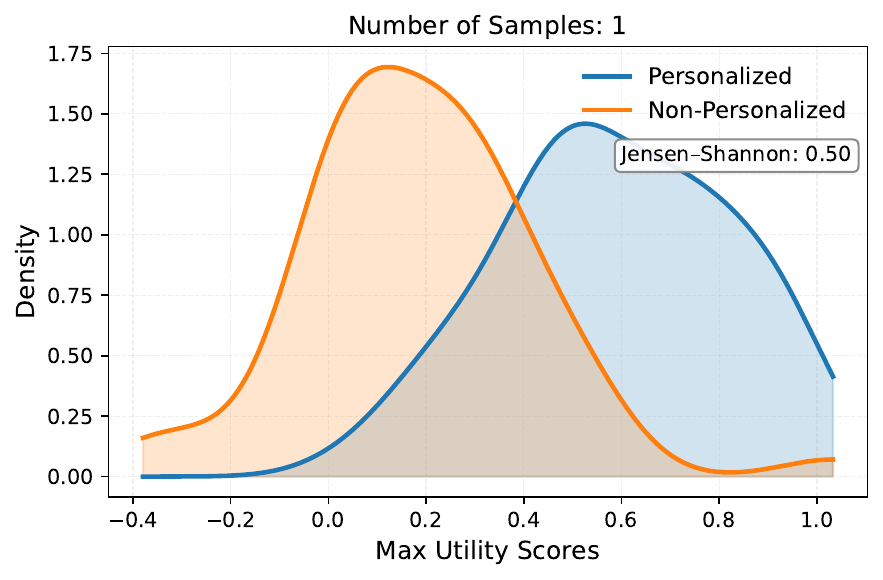}
        \caption{Model Family: Llama}
    \end{subfigure}
    \hfill
    \begin{subfigure}{0.30\textwidth}
        \centering
        \includegraphics[width=\linewidth]{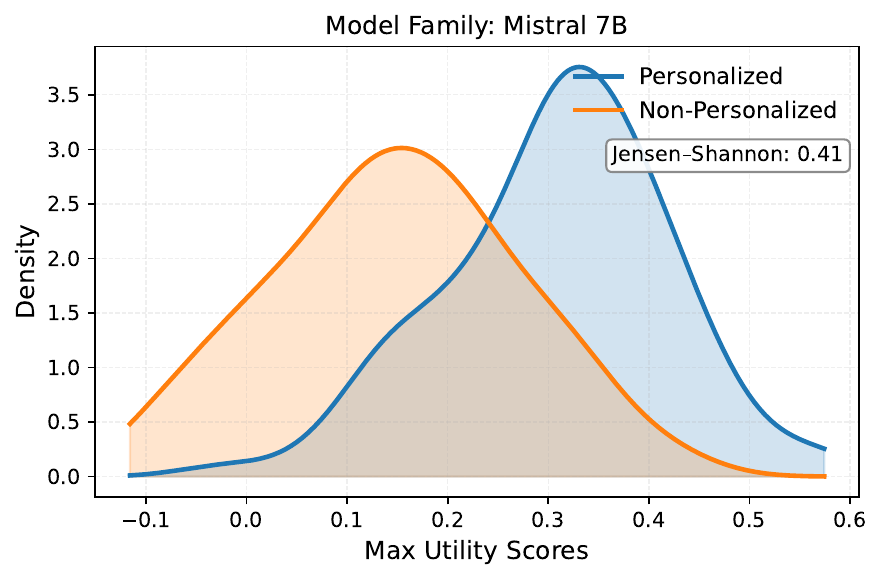}
        \caption{Model Family: Mistral}
    \end{subfigure}
    \hfill
    \begin{subfigure}{0.30\textwidth}
        \centering
        \includegraphics[width=\linewidth]{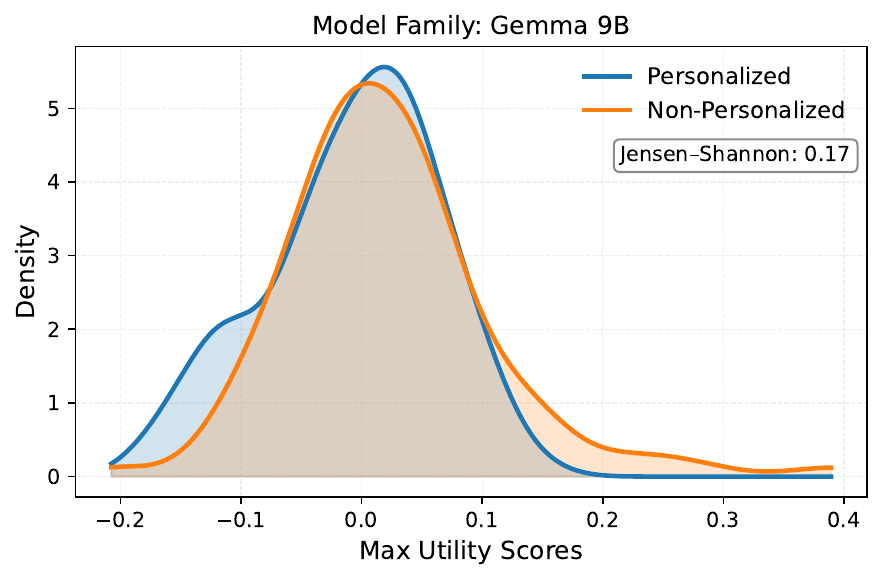}
        \caption{Model Family: Gemma}
    \end{subfigure}
    \caption{Distribution of maximum utility scores for a personalized (blue) and non-personalized (orange) dataset calculated using models of different families. }
    \label{fig:hyperparam_family}
\end{figure}

\paragraph{Hyperparameter: Decoding Type}  
We next examine the effect of decoding type on the sequential average token entropy, comparing greedy and sampling decoding. As shown in \cref{fig:hyperparam_decodingtype}, sampling decoding produces more diverse distributions than greedy decoding. This is expected, as sampling introduces stochasticity into token selection, allowing the model to explore multiple plausible continuations, whereas greedy decoding always selects the highest-probability token at each step, leading to more concentrated and less variable distributions. Based on this observation, we use sampling decoding in our experiments.

\begin{figure}[h!]
    \centering
    \begin{subfigure}{0.45\textwidth}
        \centering
        \includegraphics[width=\linewidth]{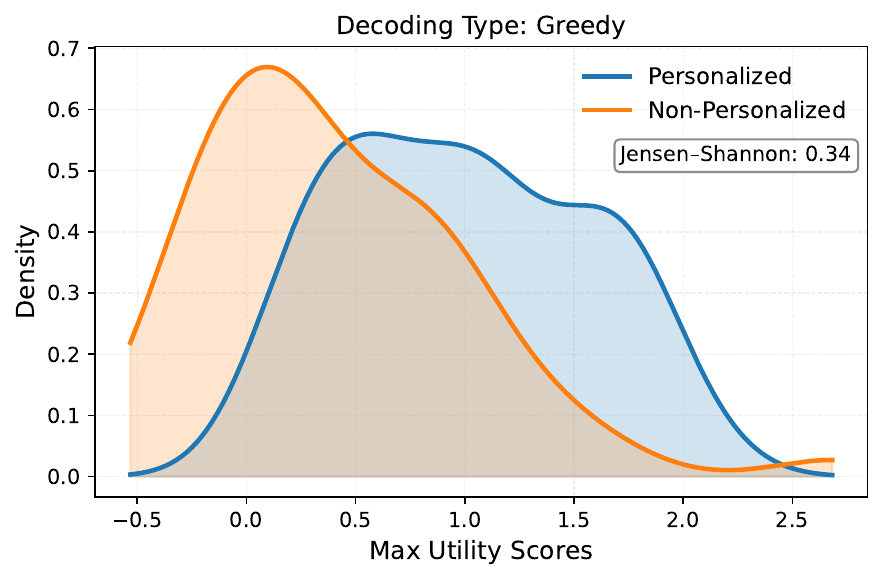}
        \caption{Decoding Type: Greedy}
    \end{subfigure}
    \hfill
    \begin{subfigure}{0.45\textwidth}
        \centering
        \includegraphics[width=\linewidth]{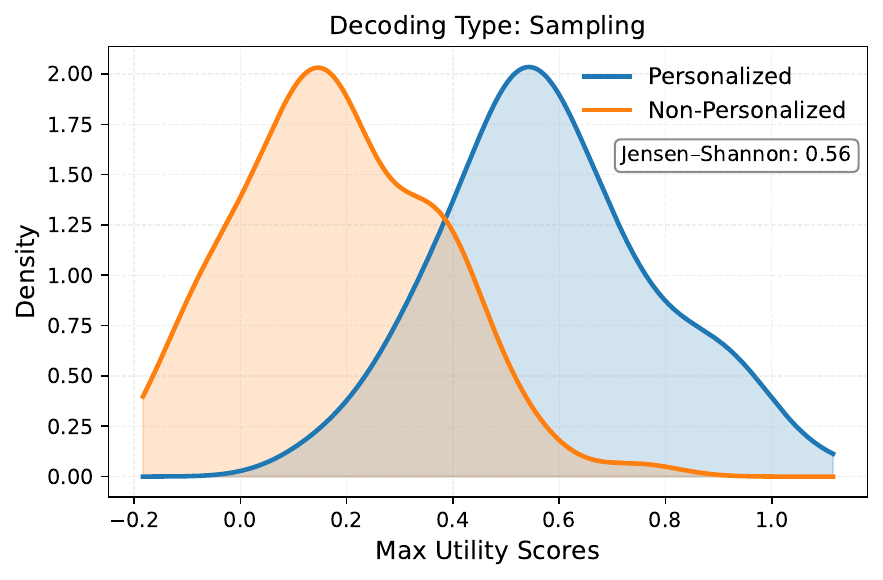}
        \caption{Decoding Type: Sampling}
    \end{subfigure}
    \caption{Distribution of maximum utility scores for a personalized (blue) and non-personalized (orange) dataset differing by decoding type. }
    \label{fig:hyperparam_decodingtype}
\end{figure}

\paragraph{Hyperparameter: Number of Tokens}  
As noted in the main text, we limit the number of tokens used in the calculation of \cref{eqn:method_utility} to avoid bias introduced by longer generations, since longer outputs tend to have lower entropy. To study this, we randomly sampled $n=10{,}000$ responses from the OpenAssistant dataset \citep{kopf2023openassistant}, using only the first human input in each conversation that was labeled as English and non-toxic. We used this dataset since it is a real world dataset that could provide a wide diversity in types of user-LLM interactions. We then computed the average token-level entropy by token index, capped at 100 tokens. As shown in \cref{fig:hyperparam_avg_token_entropy}, entropy begins to decrease significantly around $T=20$ tokens, indicating that longer generations introduce a bias toward lower entropy.  
\begin{figure}
    \centering
    \includegraphics[width=0.5\linewidth]{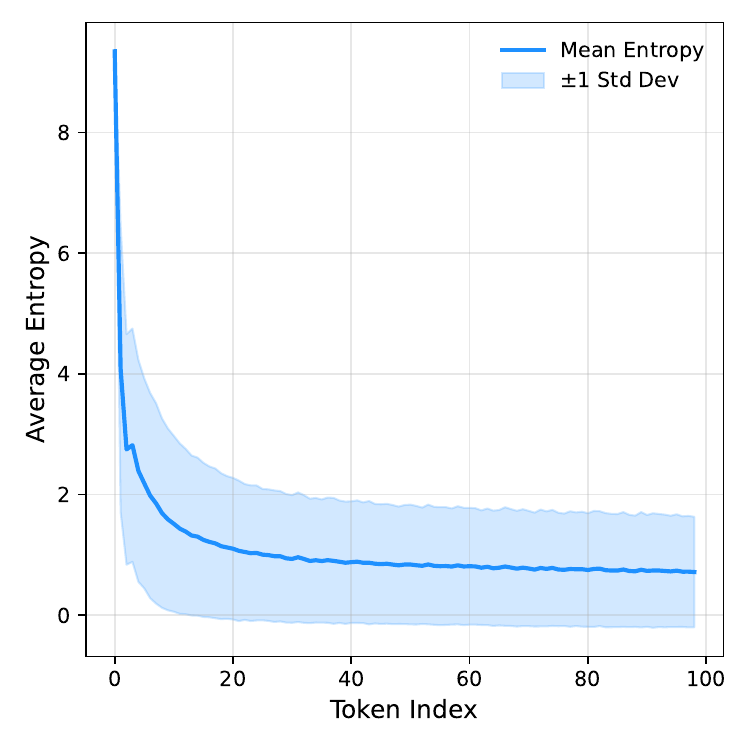}
    \caption{Average entropy by the index of the token being generated.}
    \label{fig:hyperparam_avg_token_entropy}
\end{figure}

Further analysis in \cref{fig:hyperparam_ntoken} compares the distribution of maximum utility scores when using 20 versus 100 maximum tokens for entropy calculations. Using 100 tokens reduces the Jensen-Shannon divergence by 0.18 compared to using only 20 tokens. Based on these results, we limit the calculation to the first 20 tokens in our experiments.

\begin{figure}[!ht]
    \centering
    \begin{subfigure}{0.45\textwidth}
        \centering
        \includegraphics[width=\linewidth]{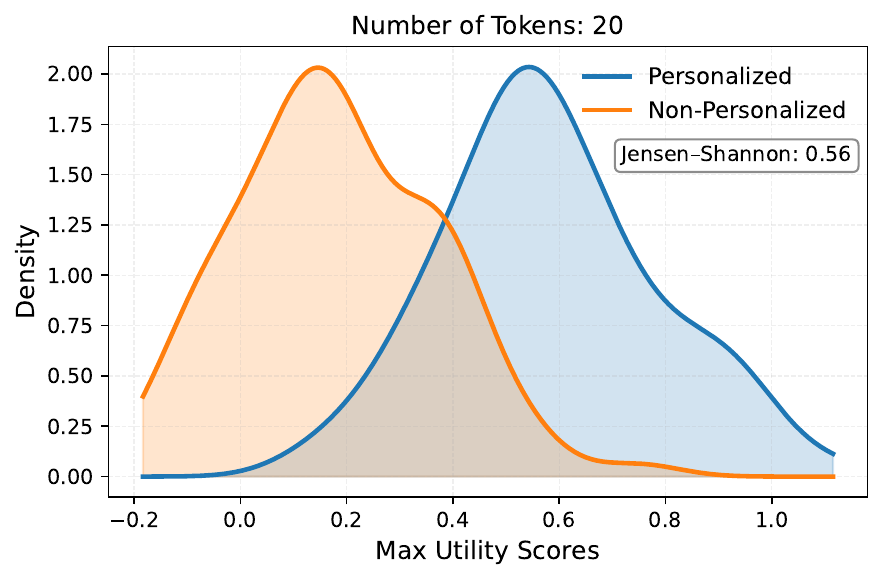}
        \caption{Number of Tokens: 20}
    \end{subfigure}
    \hfill
    \begin{subfigure}{0.45\textwidth}
        \centering
        \includegraphics[width=\linewidth]{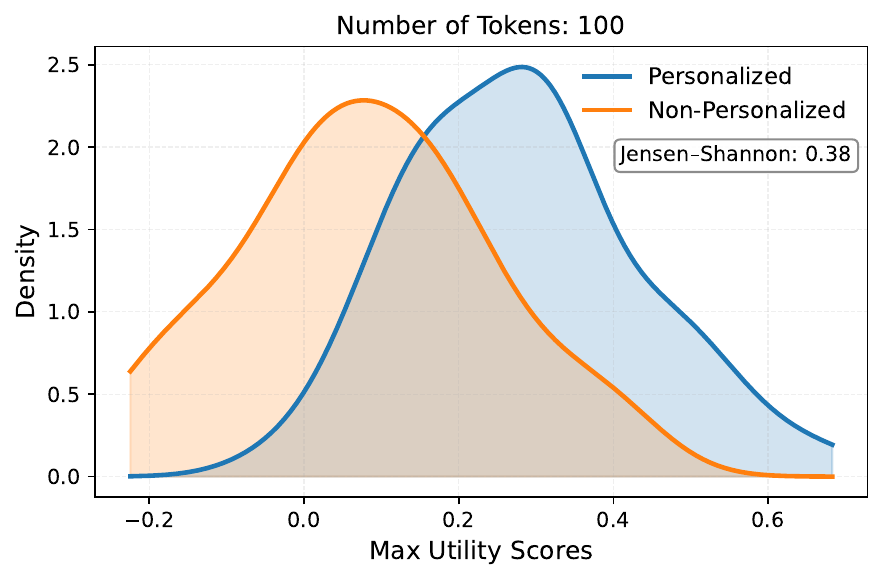}
        \caption{Number of Tokens: 100}
    \end{subfigure}
    \caption{Distribution of maximum utility scores for a personalized (blue) and non-personalized (orange) dataset using different number of tokens. }
    \label{fig:hyperparam_ntoken}
\end{figure}

\paragraph{Hyperparameter: Number of Samples}  
Next, we examine the effect of the number of Monte Carlo samples used in \cref{eqn:method_utility} on the robustness of the utility score. As shown in \cref{fig:hyperparam_nsamples}, increasing the number of samples improves the separation between personalized and non-personalized inputs, which aligns with expectations since the estimate should converge toward the true parameter as the sample size increases if the samples are i.i.d. However, this comes with a tradeoff: more samples require additional computation, longer runtime, and higher resource costs. Balancing these considerations, we chose $n=5$ samples for our experiments.
\begin{figure}[h!]
    \centering
    \begin{subfigure}{0.30\textwidth}
        \centering
        \includegraphics[width=\linewidth]{images/hyperparam_analysis/mi_scores_distr_1_nsamples.pdf}
        \caption{Number of Samples: 1}
    \end{subfigure}
    \hfill
    \begin{subfigure}{0.30\textwidth}
        \centering
        \includegraphics[width=\linewidth]{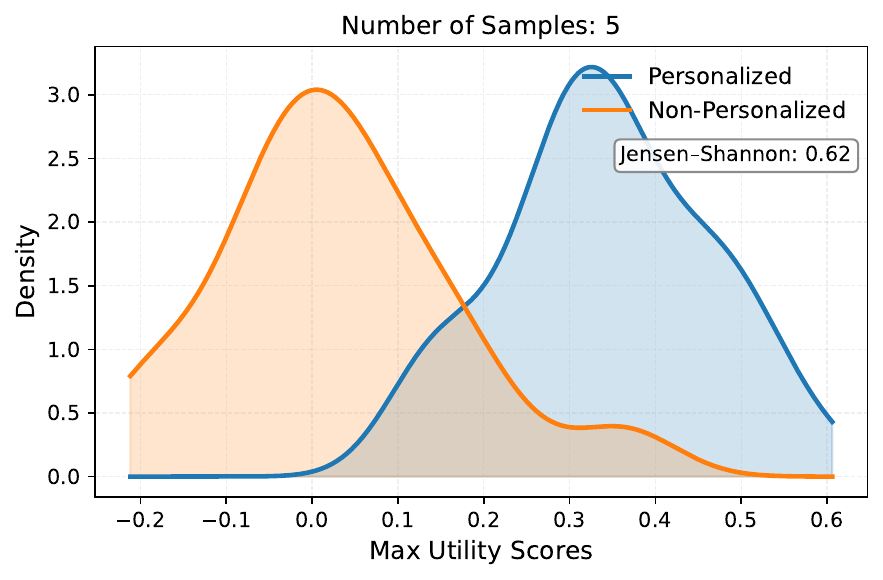}
        \caption{Number of Samples: 5}
    \end{subfigure}
    \hfill
    \begin{subfigure}{0.30\textwidth}
        \centering
        \includegraphics[width=\linewidth]{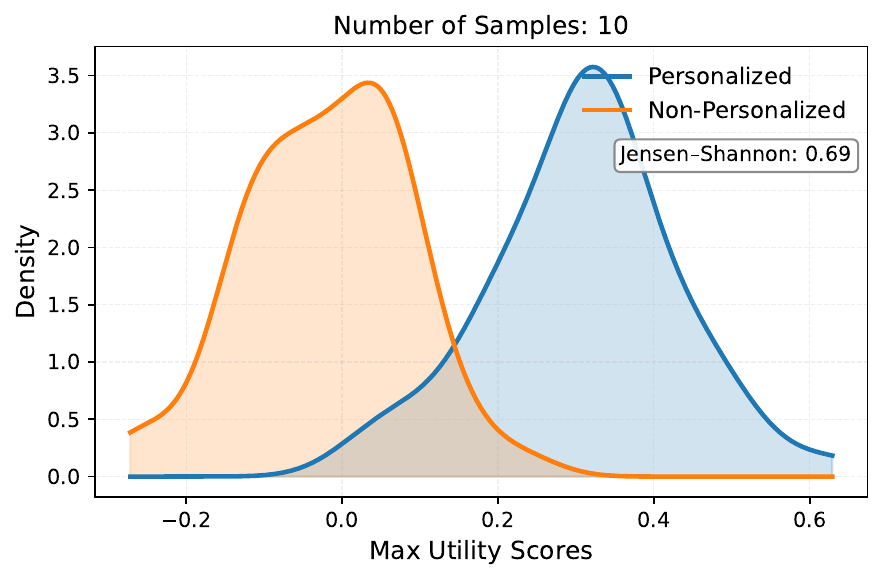}
        \caption{Number of Samples: 10}
    \end{subfigure}
    \caption{Distribution of maximum utility scores for a personalized (blue) and non-personalized (orange) dataset differing number of Monte Carlo samples average over. }
    \label{fig:hyperparam_nsamples}
\end{figure}

\paragraph{Hyperparameter: Number of User Profiles} Lastly, we examine how the robustness of the average maximum utility score changes as we increase the number of (query,user profile) pairs included in the average. Consistent with estimation theory, we find that the difference between the personalized and non-personalized distributions grows as the sample size increases. Notably, there is a substantial jump of 0.22 in Jensen–Shannon divergence when increasing from 1 to 5 users, but only a marginal increase of 0.01 when going from 5 to 10 users. Based on this, we choose $u=5$ users for constructing our training data to balance performance with cost of inference.
\begin{figure}[h!]
    \centering
    \begin{subfigure}{0.30\textwidth}
        \centering
        \includegraphics[width=\linewidth]{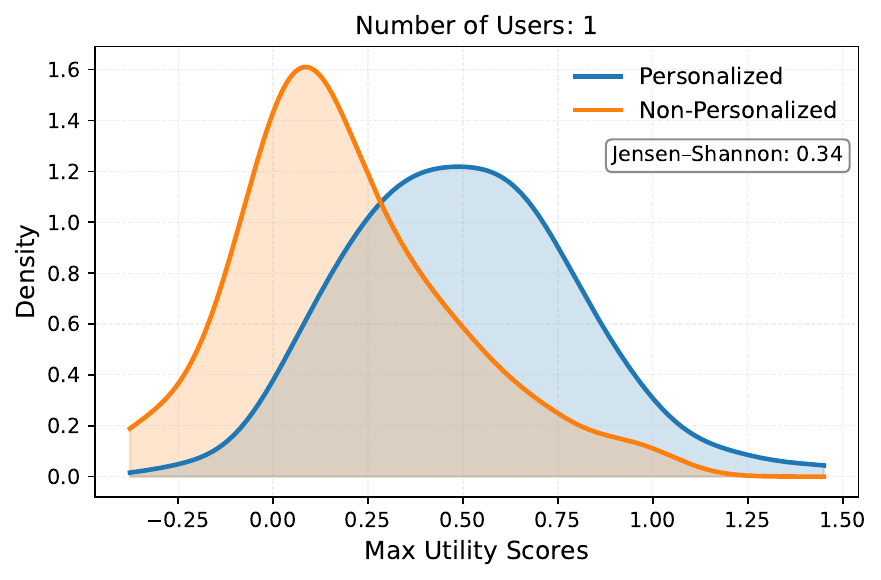}
        \caption{Number of Users: 1}
    \end{subfigure}
    \hfill
    \begin{subfigure}{0.30\textwidth}
        \centering
        \includegraphics[width=\linewidth]{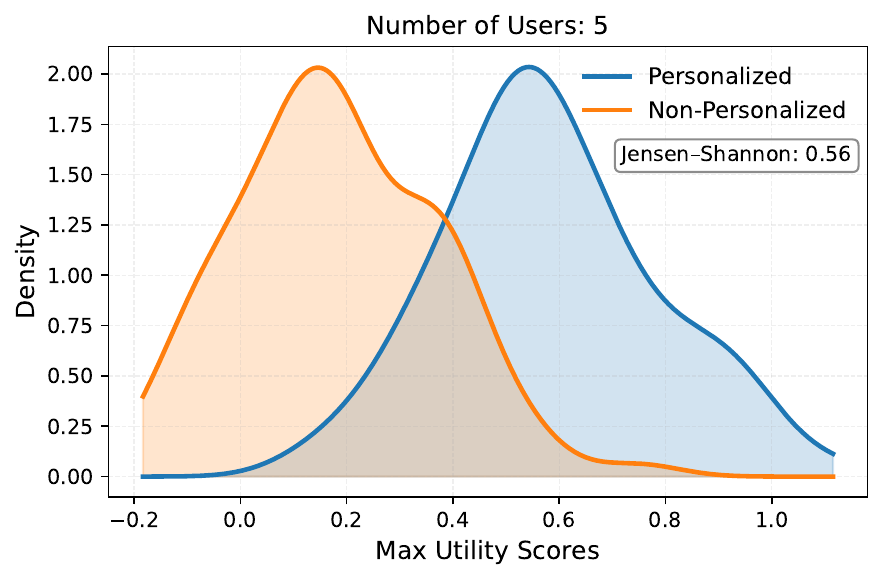}
        \caption{Number of Users: 5}
    \end{subfigure}
    \hfill
    \begin{subfigure}{0.30\textwidth}
        \centering
        \includegraphics[width=\linewidth]{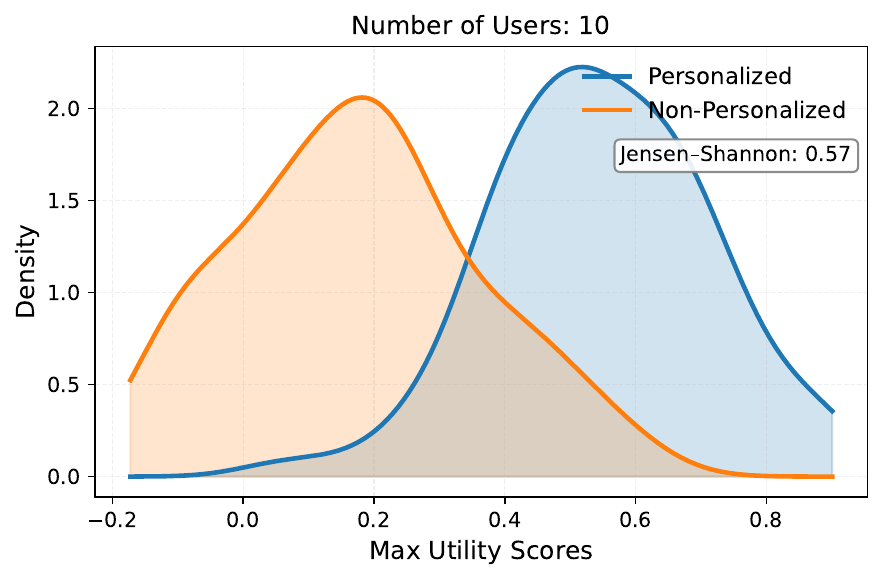}
        \caption{Number of Users: 10}
    \end{subfigure}
    \caption{Distribution of maximum utility scores for a personalized (blue) and non-personalized (orange) dataset differing by number of users averaged over. }
    \label{fig:hyperparam_nusers}
\end{figure}

\subsection{Human-Chosen Memory Subset Performance on Downstream Response}\label{appx:human_features_response_analysis}
In H2, we analyze the correlation between the memory subsets selected by \ourmethod{}-Utility, \ourmethod{}-Model and the baselines compared to human-selected subsets. Human labels are treated as the gold standard, as we hypothesize that they would lead to better downstream responses. To evaluate this, we calculate the generation win rate of responses produced using human-selected features compared to those produced by \ourmethod-Model and the baselines. We use the same evaluation technique outlined in \cref{appx:exp_details}. As shown in \cref{tab:winrate_humans}, human-selected memory subsets yield better downstream responses than all baselines when using the less capable response model for generation (LLaMA 70B). With the closes' performance being RUMS-Binary. When generating responses with a more capable model, GPT-4, we still see slight better performance using the human selected memory items, although much less than when using a less capable mode. This shows that human chosen memory items should be used in H2 as a measure of a good subset of memory. 

\begin{table}[h!]
    \centering
\caption{Win rate of responses preferred using human selected memory subsets compared to other methods. A number above 0 is the percentage of responses generated by Humans chosen subsets that were preferred.}
    \label{tab:winrate_humans}
    \begin{tabular}{l c c}
        \hline
         & LLaMA 70B & GPT-4 \\
        \hline
        Method & Human Win Rate & Human Win Rate \\
        \hline
        None & \cellval{0.31} & \cellval{0.16}\\
        All & \cellval{0.28} & \cellval{-0.26}\\
        Random & \cellval{0.42} & \cellval{0.06}\\
        Semantic Similarity & \cellval{0.38} & \cellval{0.01}\\
        BM25 & \cellval{0.18}& \cellval{0.26}\\
        ReContriever& \cellval{0.52} & \cellval{0.01}\\
        GPT4-Few & \cellval{0.27} & \cellval{0.01}\\
        RUMS-Binary & \cellval{0.01} & \cellval{-0.05}\\
        RUMS-Multi& \cellval{0.39} & \cellval{0.02} \\
        \hline
    \end{tabular}

\end{table}
 
\subsection{Analysis of Approximation Gap Between RUMS-Utility and RUMS-Models} \label{appx:H2_approx_gap}

\subsubsection{Analysis of \ourmethodmodel{} Signal Learning}
To better understand what signals the \ourmethodmodel{} are learning, we add an analysis which explores the correlation between memory items selected (included) and not selected (excluded) by these models, and the connection to the true RUMS-Utility score. Since we are interested in personalization, we use the PersonaFeedback test set (n=100).

In \cref{fig:include_exclude_dist_bin} and \cref{fig:include_exclude_dist_multi}, we show the distribution of average utility score per sample. In \cref{fig:include_exclude_diff_bin} and \cref{fig:include_exclude_diff_multi}, we show the distribution of difference of average utility score for included - excluded features per sample. We find the distribution of included and excluded utility scores are statistically different (p-value $\leq 0.03$, Mann-Whitney test). Similarly, We find that the distribution of differences between the average utility score for items the RUMS-model included and the average utility score of memory items the RUMS-Model excluded is greater than 0 (p-value $<.001$, Wilcoxon one-sided test), meaning that, on average, we find that memory items the RUMS-Model include have a truly higher utility score than those it does not. This lends strong evidence that \ourmethodmodel{} correctly learn signals of the raw utility scores. 

\begin{figure}[t]
\centering
\begin{minipage}{0.48\linewidth}
  \centering
  \includegraphics[width=\linewidth]{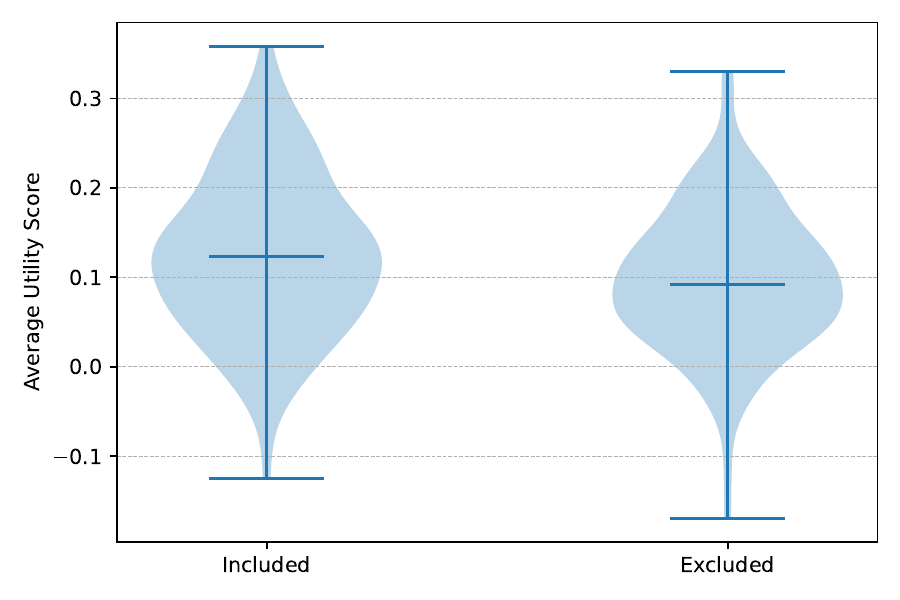}
  \caption{RUMS-Binary: Distribution}
  \label{fig:include_exclude_dist_bin}
\end{minipage}\hfill
\begin{minipage}{0.48\linewidth}
  \centering
  \includegraphics[width=\linewidth]{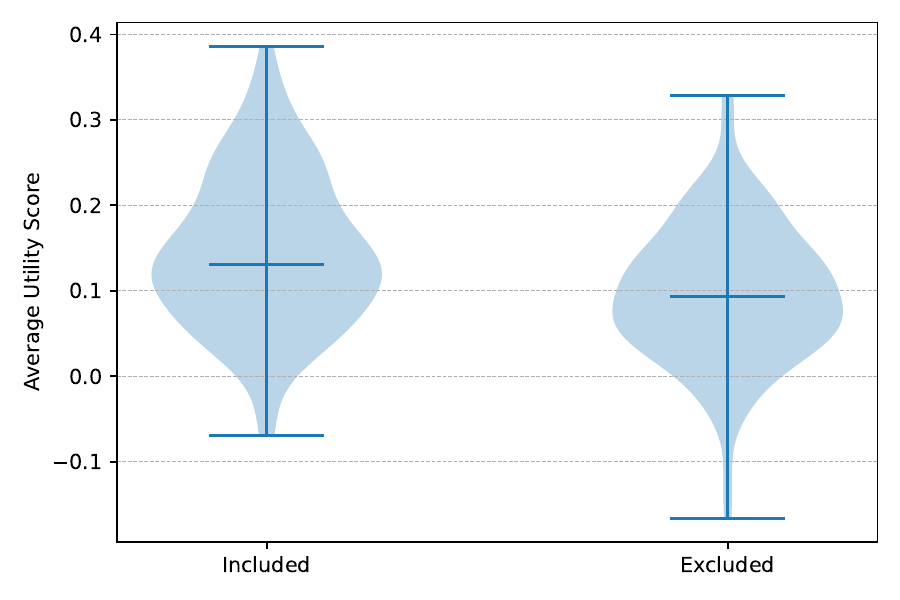}
  \caption{RUMS-Multi: Distribution}
  \label{fig:include_exclude_dist_multi}
\end{minipage}
\end{figure}

\begin{figure}[t]
\centering
\begin{minipage}{0.48\linewidth}
  \centering
  \includegraphics[width=\linewidth]{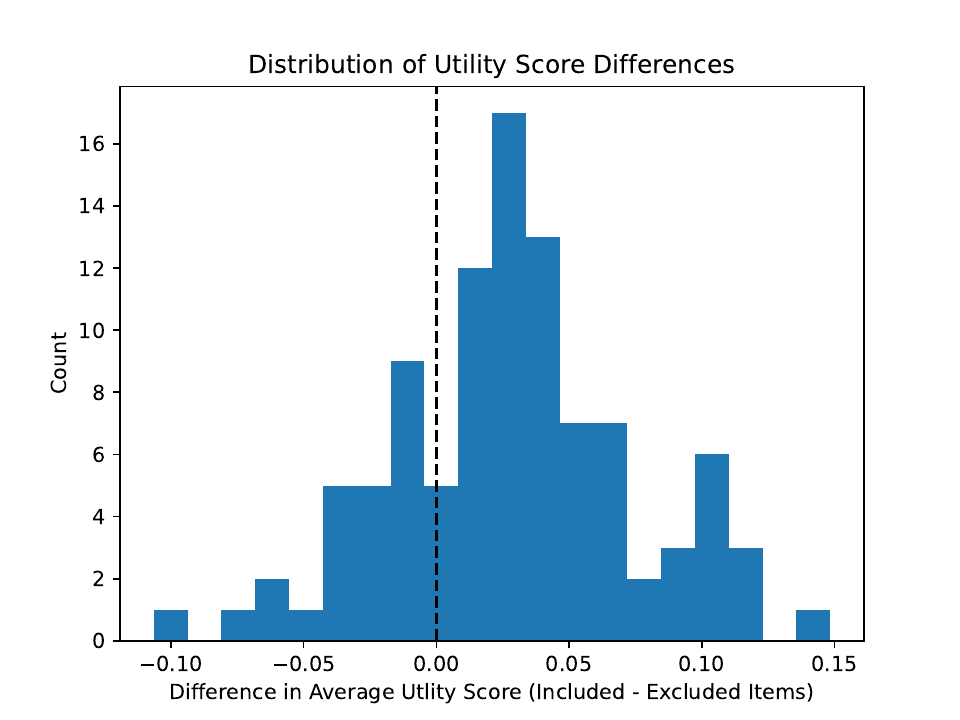}
  \caption{RUMS-Binary: Difference}
  \label{fig:include_exclude_diff_bin}
\end{minipage}\hfill
\begin{minipage}{0.48\linewidth}
  \centering
  \includegraphics[width=\linewidth]{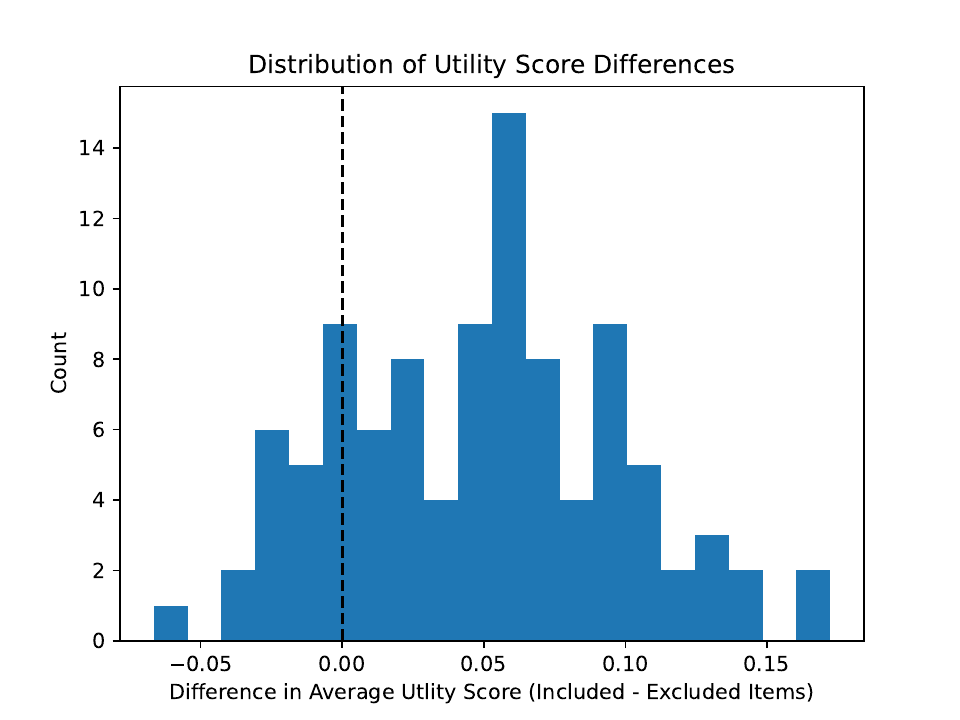}
  \caption{RUMS-Multi: Difference}
  \label{fig:include_exclude_diff_multi}
\end{minipage}
\end{figure}

\subsubsection{Comparing RUMS-Utility to \ourmethodmodel{} on H2} \ourmethod
As mentioned in the main paper, we opt to use RUMS-Models, rather than the raw RUMS-Utility, due to its high efficacy and large decrease in computational compute. This is a common approach when using BOED methods, due to algorithmic components which make it time intensive. Although our H2 results show that RUMS-Models have similar, but lower, efficacy compared to RUMS-Utility, we want to further analyze this approximation gap. 

To do this, we expand the results for H2. First, we calculated a majority feature list from the RUMS-Utility chosen data for the samples in this section, by including all features that were in a majority (>=5) samples using the RUMS utility scores directly over 5 diverse profiles. Then, we calculated the same metrics (precision, recall, and F1), of this new gold label group to feature list selected by RUMS-Binary and RUMS-Multi. 

{\setlength{\textfloatsep}{0.01pt} %
\begin{table}[t]
\centering
\scriptsize
\setlength{\tabcolsep}{1pt} %
\caption{Precision, recall, and F1 scores of memory items selected compared to human gold label annotation across datasets and methods.}
\label{tab:H2_approx_gap}
\begin{tabular}{l l c c}

\toprule
\textbf{Dataset} & \textbf{Metric} & \ourmethodconserv{} & \ourmethodlessconserv{}\\
\midrule
 & Precision & 0.73 & 0.33 \\
Synthetic  & Recall    & 0.90 & 0.44 \\
 & F1        & 0.78 & 0.37  \\
\midrule
 & Precision & 0.26 & 0.35  \\
Real World & Recall    & 0.42 & 0.35 \\
 & F1        & 0.32 & 0.34  \\
\bottomrule
\end{tabular}
\end{table}
} 
In general, we find that the models have high agreement on in distribution datasets (synthetic) but perform moderate on out of distribution dataset (real world). In line with our results for H2, we find that RUMS-Binary has very high correlation with RUMS-Utility with a high F1 of $0.78$ on the synthetic dataset. Although RUMS-Binary only has a moderate F1 of $.37$, we note that most of this comes from a decrease in performance on non-personalized datasets and the over-selection of RUMS-Multi. This is highlighted by the higher recall scores, in general, and slightly lower precision (e.g. $0.9$ vs. $.73$ for RUMS-Binary). 

\subsection{Approximation of \ourmethodlessconserv{} using \ourmethod{}-Utility}
The RUMS-Utility objective evaluates subsets of memory items jointly, capturing interactions between features. In contrast, RUMS-Binary predicts each item independently. Since conditional mutual information is not generally additive, this raises the question of how much combinatorial information is lost when using the item-wise approximation.

To quantify this gap, we provide both a theoretical decomposition and empirical measurement.

\paragraph{Theoretical Decomposition} 
Let $S = \{m_1, ..., m_k\}$ be a subset of memory items. The utility function used in RUMS is
\begin{align*}
    U(S) = I(Y;M_S \vert x).
\end{align*}

Using the chain rule of conditional mutual information,
\begin{align*}
    I(Y;M_S \vert x) = \sum_{i=1}^k I(Y;m_i \vert x, m_{<i}),
\end{align*}
where $m_{<i} = \{m_1,...,m_{i=1}\}$.

Next, the independent approximation used by RUMS-Binary replaces conditional gains with marginal gains,
\begin{align*}
    I(Y;m_i \vert x, m_{<i}) \approx I(Y;m_i \vert x).
\end{align*}
This yields the additive approximation,
\begin{align*}
    \tilde{U}(S) = \sum_{i=1}^k I(Y;m_i \vert x).
\end{align*}

Therefore, the combinatorial interaction gap is expressed,
\begin{align}
    \Delta(S) = U(S) - \tilde{U}(S) = \sum_{i=1}^k \left(I(Y;m_i \vert x, m_{<i}) - I(Y;m_i \vert x) \right). \label{eqn:approx_gap}
\end{align}
Therefore, \cref{eqn:approx_gap} shows that the total approximation error equals the sum of per-feature interaction terms.

\paragraph{Empirical Measurement} To better understand this approximation error in our experimentation, we estimate interaction effects using the utilities computed by RUMS-Utility for the test set of human annotated examples (n=60) using only the personalized datasets. For each query and selected subset S in our four test sets, we compute two metrics.
\begin{enumerate}
    \item \textbf{Subset Interaction Gap}: $\Delta(S) = U(S) - \sum_{i \in S} U(\{\})$, which measures the total combinatorial gain.
    \item \textbf{Sequential Conditional Interaction Gap} Given an ordering of memory items, we define $\delta_i = \left[ U(S_i) - U(S_{i-1}\right] - U(\{i\})$, where $S_i$ is the prefix subset. This measures how much the conditional contributions of each memory item deviates from independence. 
\end{enumerate}

\begin{table}
\centering
\begin{tabular}{lp{3cm}p{3cm}}
\hline
Dataset    & Subset Gap & Sequential Gap \\
\hline
Synthetic  & 0.23                & 0.11            \\
\hline
Real World & 1.01                & 0.09 \\     
\hline
\end{tabular}
\caption{Results from the analysis of the combinatorial approximation gap between RUMS-Utility and RUMS-Binary. We show the average normalized subset interaction gap and the average sequential conditional interaction gap for personalized datasets. }
\label{tab:combinatorial_approx_gap}
\end{table}

\cref{tab:combinatorial_approx_gap} summarizes the measured combinatorial gaps across both synthetic and real-world datasets. We find the following:
\begin{itemize}
    \item \textbf{Subset interaction gap}: The average normalized subset gap is 0.23 for the synthetic dataset and 1.01 for the real-world dataset, indicating that joint subsets can provide substantial additional utility beyond the sum of individual feature contributions.
    \item \textbf{Sequential conditional interaction gap}: The average per-feature sequential gap is 0.11 for synthetic data and 0.09 for real-world data, showing that individual features’ conditional contributions deviate only slightly from the independent approximation.
\end{itemize}

These results highlight two key patterns. First, joint feature subsets offer additional utility, particularly in real-world datasets where the subset gap is larger. However, despite the larger subset-level gains, sequential conditional gaps remain small, suggesting that no single feature strongly violates the independence assumption.
Together, these findings indicate that while higher-order interactions contribute to total utility, they are distributed across features and individually weak.

The analysis confirms that RUMS-Binary’s item-wise approximation captures most of the subset utility. Even in datasets with substantial subset-level gains, the limited per-feature sequential gaps explain why RUMS-Binary performs well in practice while remaining computationally efficient. This supports its use as a scalable approximation to full subset-based RUMS-Utility selection.

\subsection{\ourmethod{}-Utility Threshold Analysis} \label{appx:threshold_analysis}
\subsubsection{\ourmethod{}-Utility Threshold Effect on H1' Personalization Accuracy} We examined how personalization accuracy (agreement rate and F1), measured in H1', changes as we vary the threshold from 0 to 1 in 0.01 increments, on the test set used in our paper. For both datasets, Synthetic and Real World, we provide a graph comparing the metric (\% agreement or F1) across thresholds. We also mark the optimal threshold (decided on a separate evaluation set), which we used in all experimentation. Lastly, we also show a zoomed-in view around the optimal threshold $\pm 0.005$ for each graph (see \cref{fig:threshold_first} -   \cref{fig:threshold_last}).

In general, we found that personalization decisions are not very sensitive in respect to the threshold chosen. As we can see, for the Synthetic test set, when we change the threshold $+/- 0.1$ we see only about a $15\%$ decrease in agreement rate and only a $10\%$ for the Real World Data. So, although the optimal threshold does seem to be heavily dataset dependent, it does seem moderately robust. 

\begin{figure}[t]
\centering
\begin{minipage}{0.48\linewidth}
  \centering
  \includegraphics[width=\linewidth]{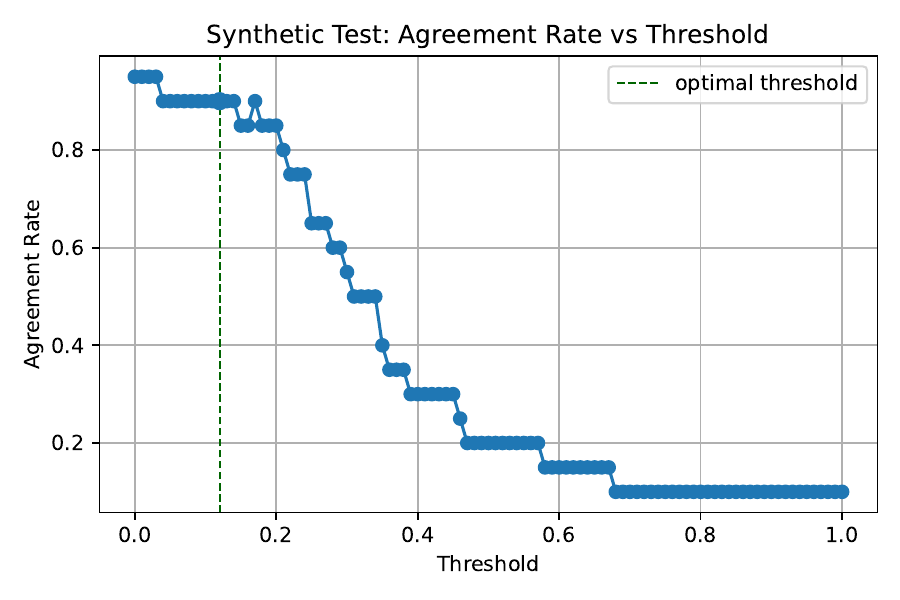}
  \caption{Synthetic: \% Agreement vs. Threshold}
  \label{fig:threshold_first}
\end{minipage}\hfill
\begin{minipage}{0.48\linewidth}
  \centering
  \includegraphics[width=\linewidth]{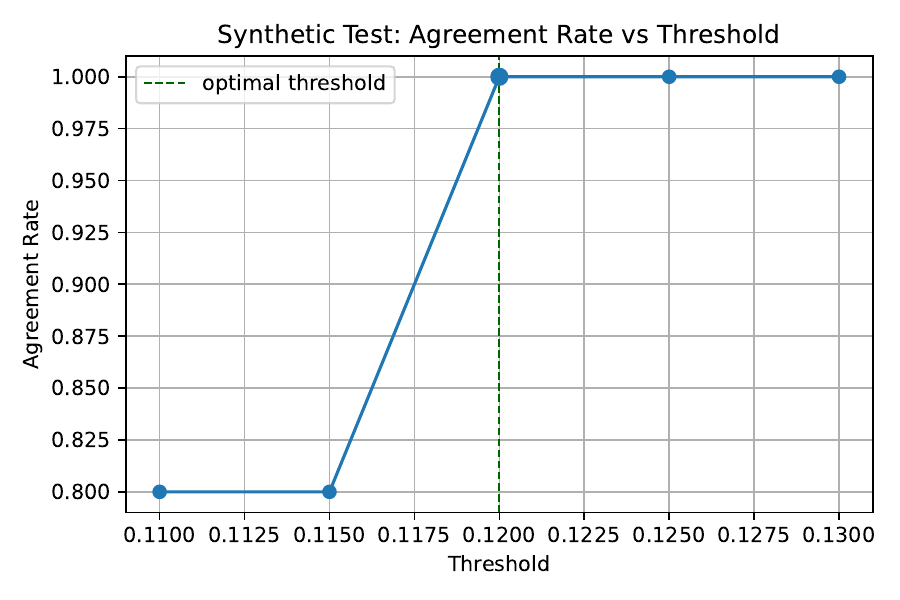}
  \caption{Synthetic: \% Agreement vs. Threshold (zoomed in)}
\end{minipage}
\end{figure}

\begin{figure}[t]
\centering
\begin{minipage}{0.48\linewidth}
  \centering
  \includegraphics[width=\linewidth]{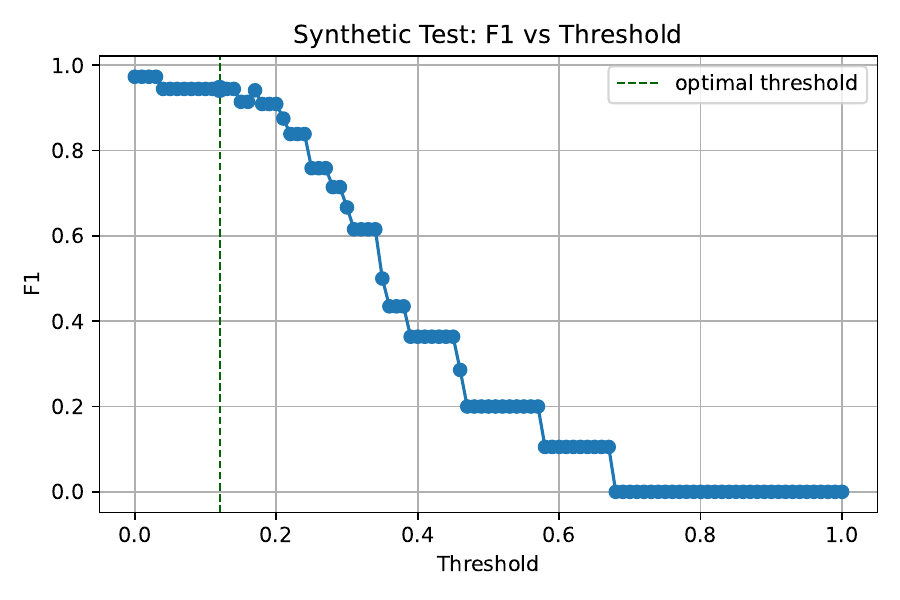}
  \caption{Synthetic: F1 vs. Threshold}
\end{minipage}\hfill
\begin{minipage}{0.48\linewidth}
  \centering
  \includegraphics[width=\linewidth]{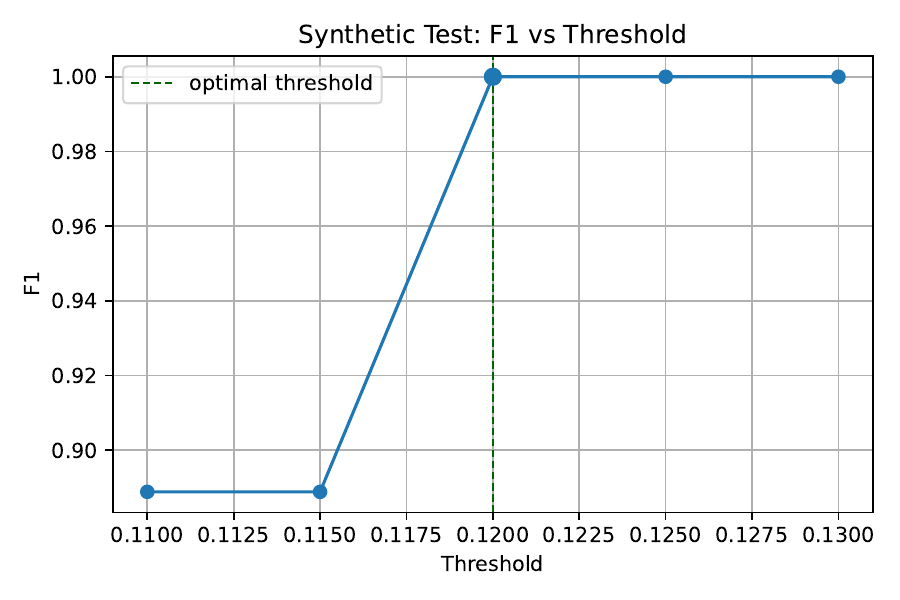}
  \caption{Synthetic: F1 vs. Threshold (zoomed in)}
\end{minipage}
\end{figure}

\begin{figure}[t]
\centering
\begin{minipage}{0.48\linewidth}
  \centering
  \includegraphics[width=\linewidth]{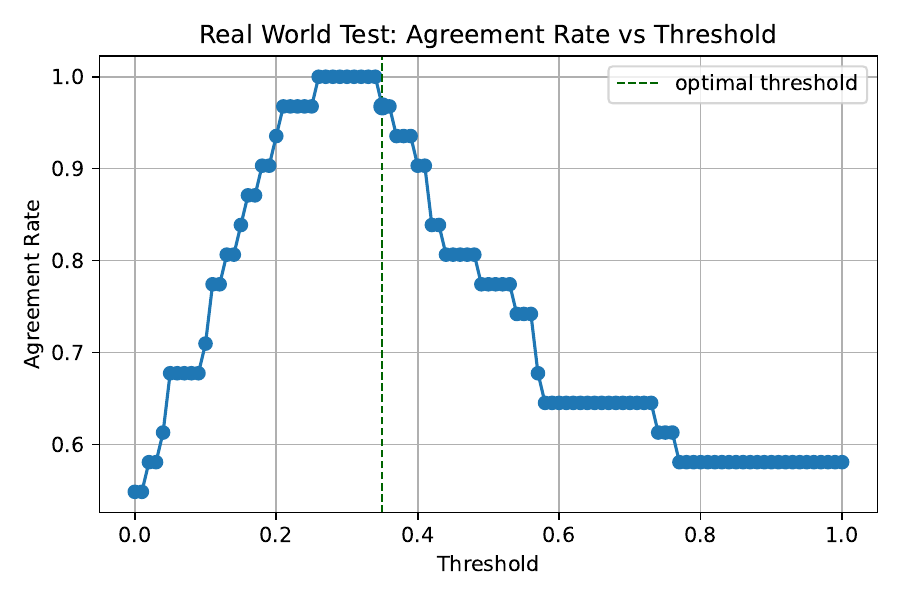}
  \caption{Real World: \% Agreement vs. Threshold}
\end{minipage}\hfill
\begin{minipage}{0.48\linewidth}
  \centering
  \includegraphics[width=\linewidth]{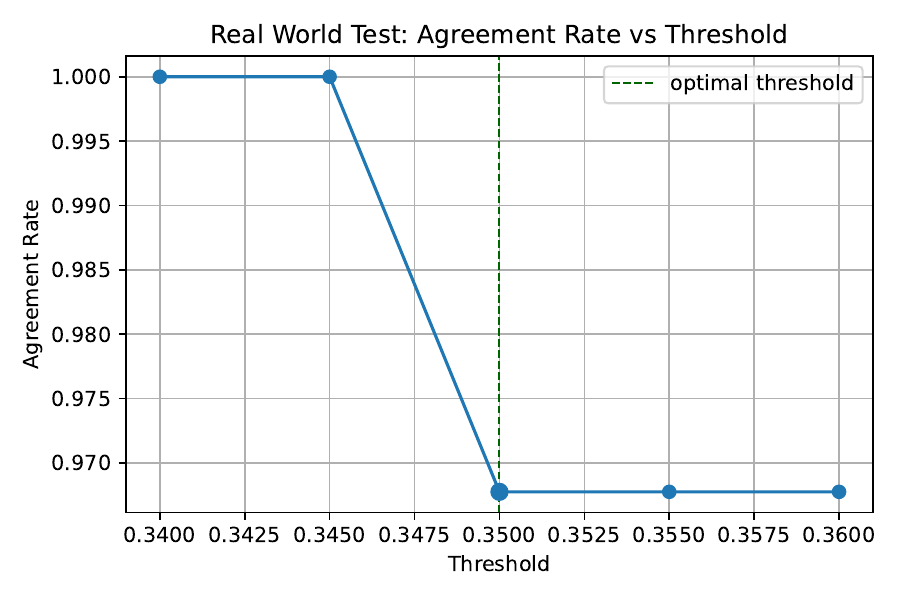}
  \caption{Real World: \% Agreement vs. Threshold (zoomed in)}
\end{minipage}
\end{figure}

\begin{figure}[t]
\centering
\begin{minipage}{0.48\linewidth}
  \centering
  \includegraphics[width=\linewidth]{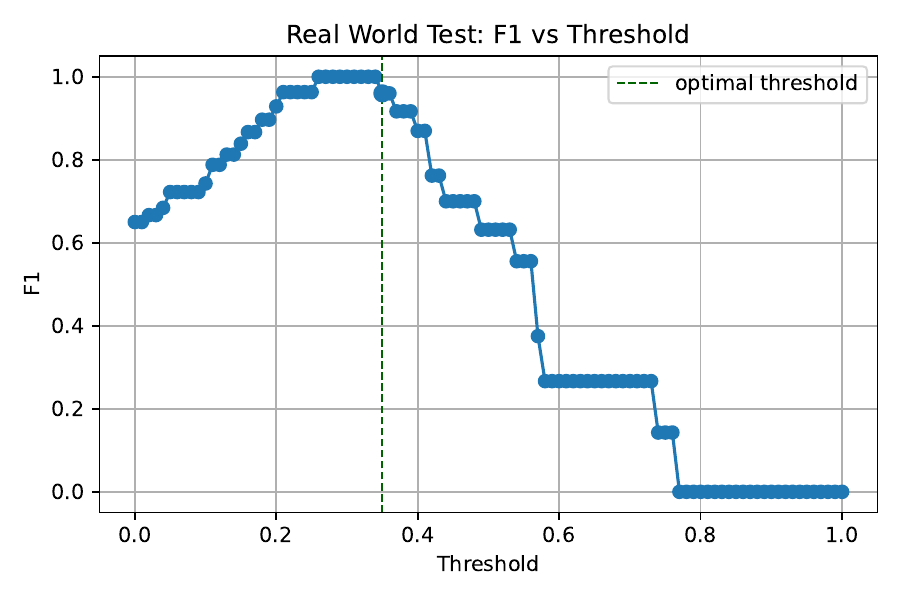}
  \caption{Real World:F1 vs. Threshold}
\end{minipage}\hfill
\begin{minipage}{0.48\linewidth}
  \centering
  \includegraphics[width=\linewidth]{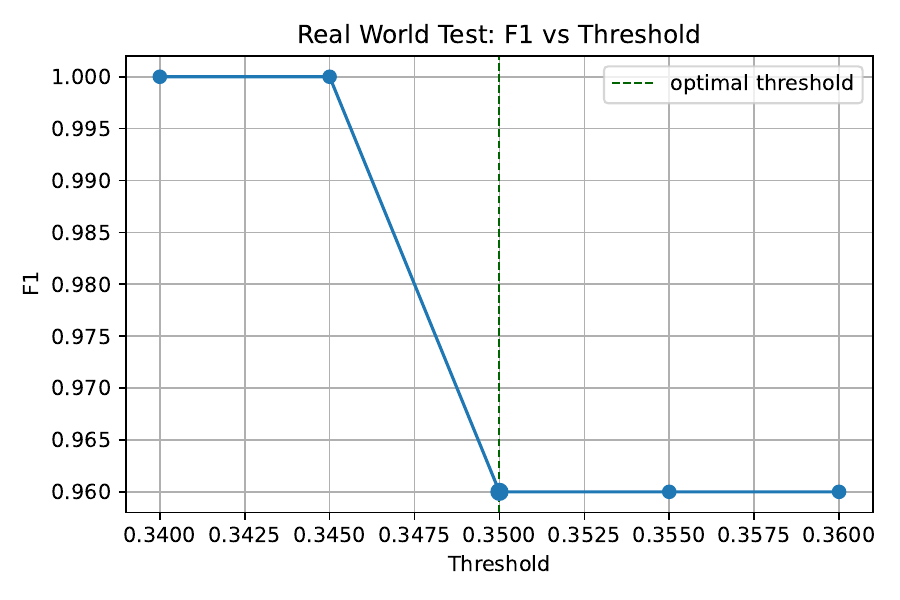}
  \caption{Real World: F1 vs. Threshold (zoomed in)}
  \label{fig:threshold_last}
\end{minipage}
\end{figure}

\subsubsection{User-specific \ourmethod{}-Utility Threshold Effect on Personalization} We explore the idea of adaptive threshold per-user profile might change. To do this, we use a sample of $n=100$ user inputs from FreebaseQA (non-personalized dataset) and PersonaFeedback (personalized dataset) respectively. For $u=10$ static user profiles, we compute the maximum utility score for each. Then, we use the same procedure for finding the optimal threshold in the paper, to find the optimal threshold for each user. We then analyze the distribution of optimal thresholds for robustness using mean and standard deviations.

\begin{figure}
    \centering
    \includegraphics[width=0.5\linewidth]{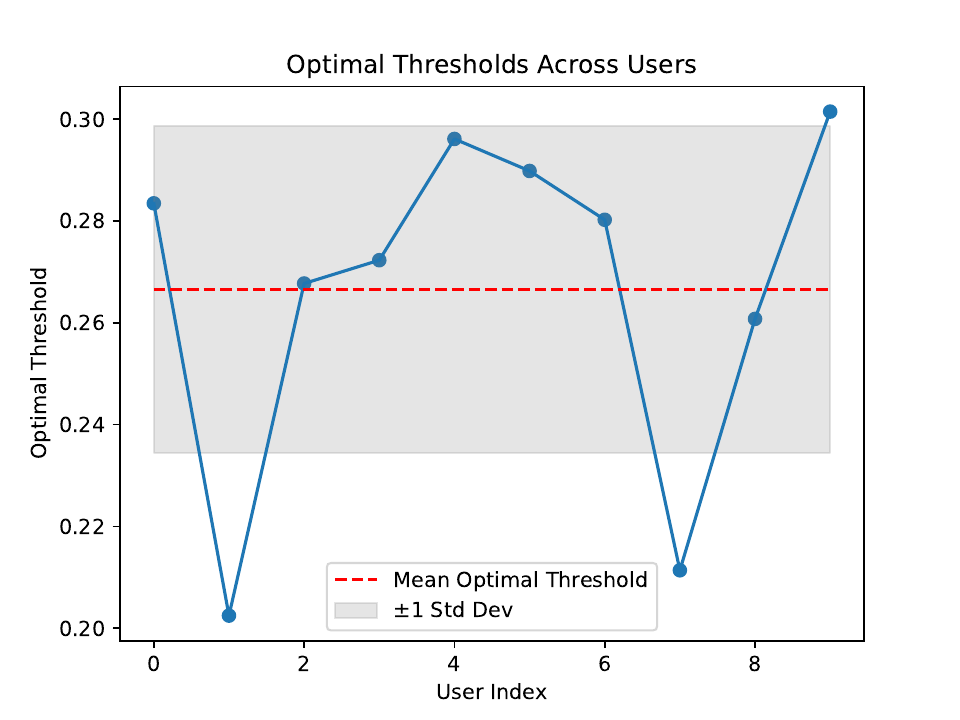}
    \caption{Optimal threshold calculated for each $n=10$ users based on utility scores. It shows the optimal thresholds for the 10 users (solid blue line), as well as the mean threshold (dotted red line) and $\pm 1$ standard deviation (gray shaded area)}
    \label{tab:user_sepecific_threshold}
\end{figure}

In \cref{tab:user_sepecific_threshold}, we present the distribution of optimal thresholds for the 10 users (solid blue line), as well as the mean threshold (dotted red line) and $\pm 1$ standard deviation (gray shaded area). We found that $70\%$ users fell within a threshold of $[0.24,0.30]$. So although there is some outlier profiles, we do not think there is enough variability that compensate for the added computational cost of per-user thresholds. We thank the reviewer for this query and find it interesting so we will add this analysis to our appendix. 

\subsubsection{\ourmethod{}-Utility Threshold Effect on Number of Features Selected}\label{appx:feature_selection_vs_threshold}  
As mentioned in the main text, \ourmethod{}-Utility has a threshold hyperparameter that controls the sensitivity of memory item selection. Adjusting this threshold results in either more or fewer memory items being chosen. \Cref{tab:avg_memory_items} illustrates how the average number of selected memory items changes as the threshold is varied. As the threshold increases, a stronger reduction in entropy is required for a memory item to be selected, leading to smaller average subset sizes. This analysis highlights that the threshold provides a simple yet effective way to balance the tradeoff between including more context and avoiding unnecessary or low-utility memory items.

\begin{table}[h!]
    \centering
     \caption{Average number of memory items selected at different thresholds used in \ourmethod{}-Utility.}
    \label{tab:avg_memory_items}
    \begin{tabular}{c c c c c}
        \hline
        Threshold & \pf{} & \fb{} & \wcp{} & \wcnp{} \\
        \hline
        -100 & 2.76 & 1.83 & 2.77 & 2.70 \\
        0.00    & 2.76 & 1.56 & 2.73 & 2.57 \\
        0.29 & 2.43 & 0.21 & 1.31 & 1.38 \\
        0.50  & 1.11 & 0.00    & 0.29 & 0.27 \\
        0.80  & 0.14 & 0.00    & 0.00    & 0..00    \\
        \hline
    \end{tabular}

\end{table}

\subsection{Examples of Selected Memory Items}\label{appx:H2_qualitative} \Cref{tab:feature_comparison} illustrates one personalized and one non-personalized case. Consistent with \cref{tab:H2_synth_realworld}, GPT4-ZeroShot and Semantic Similarity over-select, adding noise in non-personalized cases; GPT4-FewShot selects less in general but still selects items for non-personalized inputs. \ourmethodconserv{} balances personalization by matching human annotations in the personalized case and abstaining in the non-personalized case.

\begin{table*}[t]
\centering
\scriptsize
\caption{Examples of memory items selected by each method compared to human gold labels. Overlapping memory with items human annotation is \textbf{bolded}.}
\label{tab:feature_comparison}
\setlength{\tabcolsep}{4pt}
\renewcommand{\arraystretch}{1.2}
\resizebox{\textwidth}{!}{
\begin{tabular}{%
    p{0.20\linewidth}|
    >{\columncolor{gray!15}}p{0.15\linewidth}|
    p{0.18\linewidth}|
        p{0.17\linewidth}|
    p{0.12\linewidth}|
    p{0.15\linewidth}
}
\hline
\textbf{User Input} & \textbf{Humans*} & \textbf{\ourmethodconserv{}} & \textbf{GPT4-ZeroShot} &\textbf{GPT4-FewShot} & \textbf{Semantic Similarity} \\
\hline
\begin{tabular}[t]{@{} p{1\linewidth}@{}} \textbf{[PERSONALIZED] } \\I became a millionaire recently and want to give money to some local businesses\end{tabular}
& \begin{tabular}[t]{@{}l@{}} Location \\ Hobbies and Interests \\ Volunteer Activities\end{tabular}
& \begin{tabular}[t]{@{}l@{}} \textbf{Location} \\ \textbf{Hobbies and interests} \\ \textbf{Volunteer activities} \\ Personal values \\   Industry of employment\end{tabular}
& \begin{tabular}[t]{@{}l@{}} \textbf{Location} \\ Occupation \\ Industry of employment \\ Current projects \\ Long-term aspirations \end{tabular}

& \begin{tabular}[t]{@{}l@{}} \textbf{Location} \\ Recent life events \end{tabular}
& \begin{tabular}[t]{@{}l@{}} \textbf{Location}\\ \textbf{Volunteer activities} \\ Financial situation \\ Annual Income  \\ Pet ownership \end{tabular} \\
\hline
\textbf{[NON-PERSONALIZED]} Who was the Prime Minister of Gt. Britain at the outbreak of The First World War?
& \begin{tabular}[t]{@{}l@{}} No features selected \end{tabular}
& \begin{tabular}[t]{@{}l@{}} \textbf{No features selected} \end{tabular}
& \begin{tabular}[t]{@{}l@{}} Location \\ Occupation \\ Name \\ Political affiliation \end{tabular}
& \begin{tabular}[t]{@{}l@{}} Location \\ Occupation \end{tabular}
& \begin{tabular}[t]{@{}l@{}} Travel history \\ Occupation \\ Name \\ Political affiliation \\ Current projects \end{tabular} \\
\hline
\end{tabular}
}
\end{table*}

\subsection{Cost Analysis}
To better understand the practical implications of our approach, we analyze the tradeoff between computational cost and model performance across all methods. Since our methods and baselines differ in the number of memory items they select, and hence the number of additional tokens they introduce into the prompt, it is important to quantify how these differences affect overall efficiency at scale.

We measure computation overhead with two complementary metrics. First, we calculate the \textit{average number of memory items selected} per dataset and method. This captures how the amount of items each method chooses in order to fulfill personalization or how little for no personalization. Second, we compute the \textit{average number of additional tokens} that must be appended to the prompt during inference in order to include the selected memory items. This provides a direct measure of the inference overhead. \Cref{tab:H3_nfeat} and \cref{tab:H3_ntoken} summarize these results, showing clear differences across methods: baselines such as All or Random predictably add large numbers of items, while information-retrieval heuristics like BM25 are highly selective. Our methods lie in between, striking a balance between selecting informative items and avoiding unnecessary additions.

To translate these measures into practical cost, we estimate the monthly cost of running one million queries per day for each method using pricing aligned with GPT-4. Specifically, we convert the additional tokens from Table 2 into dollar cost under the assumption that each token incurs a fixed marginal cost of \$0.01 per 1K tokens and 30 days per month. This allows us to approximate the total monthly inference cost for each method, and thus assess the economic tradeoff of personalization strategies at scale.

The results are visualized in the \cref{fig:H3_cost}. As expected, the All baseline is by far the most expensive, exceeding \$160K per month due to its maximal inclusion of all memory items. On the other hand, our model \ourmethodconserv{} yields much lower additional cost of only \$6.9K which is about $30\%$ lower than just the inference cost of including the memory features selected by GPT4-FewShot, not including the cost to generate these memory items. Our other model, \ourmethodlessconserv{}, also has low cost of only \$8.4K, keeping costs well below stronger baselines like GPT-4, ReContriever, and Semantic Similarity. Overall, this analysis highlights that \ourmethodmodel{} can be deployed at scale in a cost-efficient manner, with our methods demonstrating favorable tradeoffs between informativeness and economic feasibility.

\begin{table*}[t]
\flushleft
\begin{minipage}{0.70\linewidth}
\centering
\caption{Average number of memory items selected by method per dataset.}
\label{tab:H3_nfeat}
\begin{tabular}{lcccc}
\toprule
Method & \fb{} & \pf{} & \wcp{} & \wcnp{} \\
\midrule
All & 50 & 50 & 50 & 50 \\
None & 0 & 0 & 0 & 0 \\
Random & 5 & 5 & 5 & 5 \\
Semantic Similarity & 4.71 & 4.54 & 4.84 & 4.74 \\
BM25 & 0.39 & 0.81 & 0.28 & 0.42 \\
ReContriever & 5 & 5 & 5 & 5 \\
GPT4-Few & 1.21 & 4.39 & 3.90 & 2.10 \\
\ourmethodconserv{} & 0.02 & 3.14 & 2.80 & 1.64 \\
\ourmethodlessconserv{} & 1.45 & 3.58 & 2.36 & 1.76 \\
\bottomrule
\end{tabular}
\end{minipage}
\hfill
\begin{minipage}{0.70\linewidth}
\centering
\caption{Average number of extra tokens required for response generation.}
\label{tab:H3_ntoken}
\begin{tabular}{lcccc}
\toprule
Method & \fb{} & \pf{} & \wcp{} & \wcnp{} \\
\midrule
All & 551.59 & 551.28 & 552.16 & 550.40 \\
None & 0 & 0 & 0 & 0 \\
Random & 54.87 & 54.98 & 54.34 & 54.20 \\
Semantic Similarity & 48.04 & 51.44 & 53.04 & 49.58 \\
BM25 & 1.48 & 8.95 & 3.20 & 4.86 \\
ReContriever & 60.87 & 62.93 & 61.22 & 61.72 \\
GPT4-Few & 12.96 & 49.31 & 46.02 & 23.58 \\
\ourmethodconserv{} & 0.23 & 37.39 & 34.88 & 19.10 \\
\ourmethodlessconserv{} & 13.88 & 45.34 & 30.90 & 21.08 \\
\bottomrule
\end{tabular}
\end{minipage}
\end{table*}

\begin{figure}
    \centering
    \includegraphics[width=0.7\linewidth]{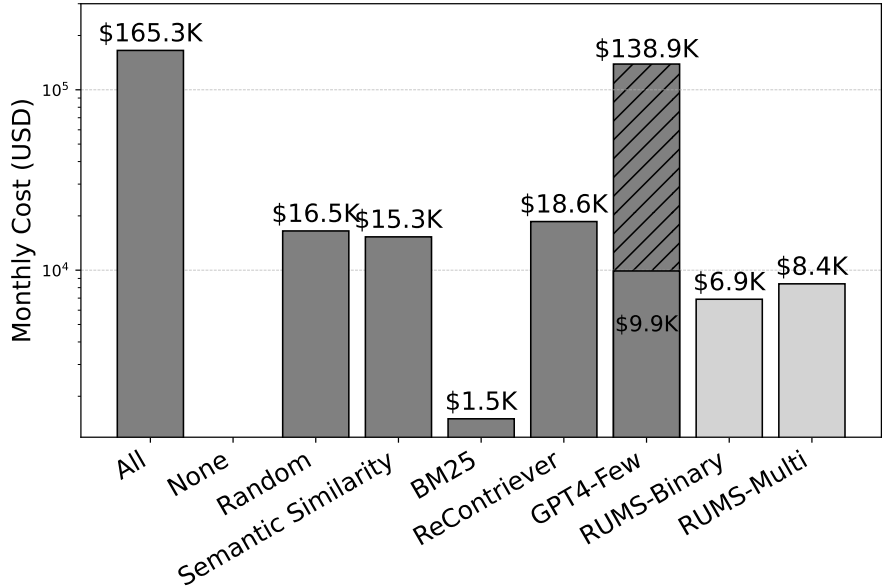}
    \caption{Estimated monthly cost of additional prompt tokens (1M queries per day, 30 days, GPT-4 input pricing at \$0.01 per 1K tokens).}
    \label{fig:H3_cost}
\end{figure}

\subsection{Time Analysis}
We ran a wall-clock analysis to showcase the time resources needed to run each method (\cref{tab:wall_clock}. We calculated the average time for doing inference on one sample using $n=3$ runs. We find that the \ourmethodlessconserv{} have runtimes comparable to the retrieval methods (Semantic Similarity, BM25, and ReContriever) and \ourmethodconserv{} has times comparable to GPT4-Fewshot. As mentioned, the RUMS-Utility has a very high runtime and therefore we do not recommend it to be used during inference. 

\begin{table}[t]
\centering
\scriptsize
\caption{Average time for inference of one sample, calculated over $n=3$ trials. All values are in units of seconds.}
\label{tab:wall_clock}
\resizebox{0.43\textwidth}{!}{%
\begin{tabular}{lc}
\toprule
Method & Average Time (s) \\
\midrule
All & 0.0 \\
None & 0.0 \\
Random & 0.0 \\
Semantic Similarity & 0.21 \\
BM25 & 0.01 \\
ReContriever & 0.08 \\
GPT4-Few & 1.22 \\
\rowcolor{gray!25}\ourmethodconserv{} & 0.14 \\
\rowcolor{gray!25}\ourmethodlessconserv{} & 0.04 \\
\rowcolor{gray!15}RUMS-Utility & 7.28 \\
\bottomrule
\end{tabular}
}

\end{table}
 
\subsection{Results of H3 using Human Judgment}
For evaluation in H3, LLM-as-Judge was chosen due to it's validation as similar to third-party human annotation for personalization when both in-context examples and model confidence scores \citep{dong-etal-2024-llm}. However, to validate that selection alignment translates to better responses, we conducted a human study comparing \ourmethodconserv{} with GPT-4 Few-Shot (strongest baseline). We had two human annotators (CSE PhD graduate students) annotate which of the two response they felt were better using the same rubric as the LLM-as-Judge in H3.

\begin{table}[!ht]
\centering
\caption{Winrate of responses generated using memory subset of \ourmethodutility{} vs. GPT4-Few, determined by human evaluation.}
\label{tab:H3_human_eval}
\begin{tabular}{lc}
\toprule
Dataset & \ourmethodutility{} Winrate\\
\midrule 
Synthetic & 0.33 \\
Real World & -0.03\\
\bottomrule
\end{tabular}
\end{table}
 \label{appx:H3_human_eval}

We found \ourmethodconserv{} outperforms GPT4-Few by $33\%$ in human preference, despite GPT-4 Few-Shot using a model $~400 \times$ larger for both selection and generation. This result (1) validates our LLM-as-Judge evaluation methodology ($33\%$ human win rate closely matches the $18\%$ win rate from Table 3 against GPT-4 Few-Shot on synthetic data), (2) Confirms that RUMS's human-aligned selection decisions (from H2) translate to better final outputs (H3).

\subsection{Analysis of Selected Entropy Calculation Model}
As shown in \cref{appx:model_family_ablation}, we found that Llama-7 was the best performing model on calculating entropy scores that correctly seperate personalized versus non-personlized inputs. However, given that Mistral-7B was also moderately good at this task, we test how RUMS would work using Mistral-7B on both binary decisions of personalization (H1), and choosing memory item subsets (H2). 

\begin{table}[ht]
\centering
\caption{H1 Results}
\label{tab:mistral_h1}
\begin{tabular}{llcc}
\toprule
\textbf{Dataset} & \textbf{Model} & \textbf{Recall} & \textbf{Specificity} \\
\midrule
Synthetic    & Llama-7B & \textbf{0.94} & \textbf{0.94} \\
Real World   & Llama-7B & 0.92 & \textbf{0.96} \\
\addlinespace
Synthetic    & Mistral-7B & 0.83 & 0.88 \\
Real World   & Mistral-7B & 0.92 & 0.77 \\
\bottomrule
\end{tabular}
\caption{Results of using Llama-7B or Mistral-7B to calculate entropy scores on H1 experiments. Bolded is the higher value between the two models.}
\end{table}

\begin{table}[ht]
\centering
\caption{H1 Results}
\label{tab:mistral_h2}
\begin{tabular}{llccc}
\toprule
\textbf{Dataset} & \textbf{Model} & \textbf{{Precision}} & \textbf{Recall} & \textbf{F1} \\
\midrule
Synthetic    & Llama-7B & 0.61 & \textbf{0.70} & \textbf{0.70} \\
Real World   & Llama-7B & 0.48 & \textbf{0.36} & 0.53 \\
\addlinespace
Synthetic    & Mistral-7B & 0.61 & 0.62 & 0.69 \\
Real World   & Mistral-7B & \textbf{0.61} & 0.30 & \textbf{0.69} \\
\bottomrule
\end{tabular}
\caption{Results of using Llama-7B or Mistral-7B to calculate entropy scores on H2 experiments. Bolded is the higher value between the two models.}
\end{table} 
\cref{tab:mistral_h1} show the results from H1 and \cref{tab:mistral_h2} show the results from H2 from using Mistral-7B compared to Llama-7B. We find that although in general Llama-7B beats Mistral-7B, they perform quite closely. This might indicate that even if slightly less separation is seen between personalized and non-personalized entropy scores they can still be useful signal. 

\subsection{Analysis of \textit{Optional} Filter Step}
In our methods section, we mention that if a large number of candidate memory items exist it can be useful to use a more coarse method (such as a prompted LLM) to reduce the number of memory features for RUMS calculation. In our experimentation, we use GPT-4 as this method of coarse deselection which chooses up to 5 memory items from the original 50 to measure. In this section we validate the use of GPT-4 for this purpose. 

To validate the use of GPT-4 and other possible other models, we use the randomly sampled personalized inputs which we have a consensus human annotated memory items selected (n=60) used in H2. Then, we analyzed how well the large general purpose models (GPT-4, Llama-70B, Qwen-72B, and Gemma-27B) filtered memory items (set to k=5) created by prompting the model, compared to the human selected memory items. We note that since these models are instructed to overselect (i.e. filter to a list of 5) we focus mostly on the recall metric, rather than precision of F1. Also, we only show the results for the personalized datasets, as this is when this is mostly used. 

\begin{table}[ht]
\centering
\caption{Model Performance on Personalized Datasets}
\label{tab:personalized_results}
\begin{tabular}{llccc}
\toprule
\textbf{Dataset} & \textbf{Model} & \textbf{Precision} & \textbf{Recall} & \textbf{F1} \\
\midrule
\multirow{4}{*}{\shortstack[l]{Synthetic\\Personalized}}
 & GPT-4       & 0.24 & \textbf{0.77} & 0.49 \\
 & Llama-70B   & 0.23 & 0.74          & 0.47 \\
 & Qwen-72B    & 0.21 & 0.74          & 0.45 \\
 & Gemma-27B   & 0.22 & 0.66          & 0.45 \\
\midrule
\multirow{4}{*}{\shortstack[l]{Real World\\Personalized}}
 & GPT-4       & 0.46 & \textbf{0.70} & 0.59 \\
 & Llama-70B   & 0.41 & 0.69          & 0.53 \\
 & Qwen-72B    & 0.44 & 0.67          & 0.55 \\
 & Gemma-27B   & 0.48 & 0.64          & 0.56 \\
\bottomrule
\end{tabular}
\caption{Results of using large general purpose models for filtering. We compare the memory subsets filtered by each model, through prompting, with a human annotated list. }
\label{tab:filter_results}
\end{table} 
\cref{tab:filter_results} shows the results from this analysis. We find that in general, all the models perform similarly with high recall to human annotations. However, we do see that GPT-4, the model we used, was the best at 0.77 and 0.70 recall for synthetic and real world data respectively. In general, this supports this method of coarse deselection. 

\begin{table}[ht]
\centering
\caption{Pairwise Agreement Between Models and Humans}
\label{tab:pairwise_agreement}
\begin{tabular}{lccccc}
\toprule
 & \textbf{Humans} & \textbf{GPT-4} & \textbf{Llama-70B} & \textbf{Qwen-72B} & \textbf{Gemma-27B} \\
\midrule
\textbf{Humans}    & --- & ---  & ---  & ---  & --- \\
\textbf{GPT-4}     & 0.4 & ---  & ---  & ---  & --- \\
\textbf{Llama-70B} & 0.5 & 0.7  & ---  & ---  & --- \\
\textbf{Qwen-72B}  & 0.5 & 0.8  & 0.6  & ---  & --- \\
\textbf{Gemma-27B} & 0.5 & 0.6  & 0.7  & 0.7  & --- \\
\bottomrule
\end{tabular}
\caption{Pairwise overlap (percentage) of the top-10 most frequently selected memory items across models and humans.}
\label{tab:filter_top_10}
\end{table} 
To further validate these results, we were interested in if there is any bias in the type of features selected by the different models. So, we compared the top-10 most selected memory items per model. We found that on average, the models shared 6.8 memory items with other models and 4.75 items with humans. \cref{tab:filter_top_10} shows the full results. This gave evidence that different models did not seem to have unique preferences or selections, thus adding further robustness to this method. 

\subsection{Full Results of H3}\label{appx:breakdown_results}
In this section we expand H3 by breaking down results for the Synthetic and Real World datasets by inputs intended to be personalized (left) and not personalized (right). This comparison highlights differences between \ourmethodconserv{} and \ourmethodlessconserv{}. With the less capable response model, LLaMA-70B, \ourmethodconserv{} (which favors fewer memory items) outperforms most baselines, showing especially strong gains on non-personalized inputs, with a $38\%$ average win rate over baselines on the Synthetic dataset. Similar patterns hold on the Real World dataset, except for the None and BM25 baselines, which never select items.  

By contrast, \ourmethodlessconserv{} (which selects more memory items) performs better with the stronger GPT-4 model, particularly on personalized inputs, achieving $25\%$ and $26\%$ average win rates on the personalized Synthetic and Real World datasets respectively. These results demonstrate how \ourmethod{} can be tuned to different settings, adapting to the response model and personalization needs.

\begin{table}[ht]
\caption{Average winrate comparison on Synthetic data under Not Personal (left) and Personal (right) conditions. Positive values (green) indicate better performance, negative values (red) indicate worse performance. Values greater than 0.1 in magnitude are bolded.}
\label{tab:synth_notpersonal_personal}
\centering
\resizebox{\linewidth}{!}{%
\begin{tabular}{cc}
\begin{subtable}[t]{0.9\linewidth}
\centering
\caption{Non-personalized inputs}
\begin{tabular}{lcccc}
\toprule
 & \multicolumn{2}{c}{LLaMA 70B} & \multicolumn{2}{c}{GPT-4} \\
 & \ourmethodconserv{}. & \ourmethodlessconserv{} & \ourmethodconserv{}. & \ourmethodlessconserv{} \\
\midrule
None              & \cellval{-0.03}  & \cellval{-0.79} & \cellval{0.02}  & \cellval{0.00} \\
All               & \cellval{0.45}   & \cellval{-0.24} & \cellval{0.00}  & \cellval{0.02}  \\
Random            & \cellval{0.50}   & \cellval{-0.12} & \cellval{-0.03} & \cellval{-0.01} \\
Semantic Similarity          & \cellval{0.60}   & \cellval{0.01} & \cellval{0.02}  & \cellval{0.00}  \\
BM25              & \cellval{0.12}   & \cellval{-0.66} & \cellval{-0.02} & \cellval{0.01} \\
ReContriever      & \cellval{0.62}   & \cellval{-0.14} & \cellval{0.01}  & \cellval{0.03} \\
GPT4-Few        & \cellval{0.42}   & \cellval{-0.34} & \cellval{0.00}  & \cellval{0.01} \\
\bottomrule
\end{tabular}
\end{subtable}
&
\begin{subtable}[t]{0.9\linewidth}
\centering
\caption{Personalized inputs}
\begin{tabular}{lcccc}
\toprule
 & \multicolumn{2}{c}{LLaMA 70B} & \multicolumn{2}{c}{GPT-4} \\
 & \ourmethodconserv{}. & \ourmethodlessconserv{} & \ourmethodconserv{}. & \ourmethodlessconserv{} \\
\midrule
none              & \cellval{0.52}  & \cellval{0.53}  & \cellval{0.50}  & \cellval{0.72} \\
All               & \cellval{-0.14} & \cellval{-0.14} & \cellval{-0.31} & \cellval{0.01} \\
Random            & \cellval{0.18}  & \cellval{0.37}  & \cellval{0.15}  & \cellval{0.39} \\
Semantic Similarity          & \cellval{-0.06} & \cellval{0.17}  & \cellval{-0.07} & \cellval{0.09} \\
BM25              & \cellval{0.25}  & \cellval{0.40}  & \cellval{0.42}  & \cellval{0.40} \\
ReContriever      & \cellval{-0.09} & \cellval{0.09}  & \cellval{-0.09} & \cellval{0.09} \\
GPT4-Few        & \cellval{-0.07} & \cellval{0.03}  & \cellval{-0.17} & \cellval{0.03} \\
\bottomrule
\end{tabular}
\end{subtable}
\end{tabular}
} %

\end{table} %
\begin{table*}[ht]
\centering
\caption{Average winrate comparison on Real World data under Not Personal (left) and Personal (right) conditions. Positive values (green) indicate better performance, negative values (red) indicate worse performance. Values greater than 0.1 in magnitude are bolded.}
\label{tab:realworld_notpersonal_personal}
\resizebox{\linewidth}{!}{%
\begin{tabular}{cc}
\begin{subtable}[t]{.9\linewidth}
\centering
\caption{Non-personalized inputs}
\begin{tabular}{lcccc}
\toprule
 & \multicolumn{2}{c}{LLaMA 70B} & \multicolumn{2}{c}{GPT-4} \\
 & \ourmethodconserv{} & \ourmethodlessconserv{} & \ourmethodconserv{} & \ourmethodlessconserv{}\\
\midrule
None              & \cellval{-0.36} & \cellval{-0.64} & \cellval{-0.12} & \cellval{-0.26} \\
All               & \cellval{0.08}  & \cellval{-0.14}  & \cellval{-0.14} & \cellval{-0.16} \\
Random            & \cellval{-0.02} & \cellval{-0.06} & \cellval{-0.06} & \cellval{-0.26} \\
Semantic Similarity          & \cellval{0.04}  & \cellval{-0.02} & \cellval{-0.02} & \cellval{-0.12}  \\
BM25              & \cellval{-0.40} & \cellval{-0.16} & \cellval{-0.16} & \cellval{-0.26} \\
ReContriever      & \cellval{0.12}  & \cellval{0.04}  & \cellval{0.04}  & \cellval{0.02}  \\
GPT4-Few        & \cellval{0.08}  & \cellval{0.12} & \cellval{0.12}  & \cellval{-0.02}  \\
\bottomrule
\end{tabular}
\end{subtable}
&
\begin{subtable}[t]{.9\linewidth}
\centering
\caption{Personalized inputs}
\begin{tabular}{lcccc}
\toprule
 & \multicolumn{2}{c}{LLaMA 70B} & \multicolumn{2}{c}{GPT-4} \\
 & \ourmethodconserv{} & \ourmethodlessconserv{} & \ourmethodconserv{} & \ourmethodlessconserv{} \\
\midrule
None              & \cellval{0.46} & \cellval{0.70} & \cellval{0.52} & \cellval{0.60} \\
All               & \cellval{-0.16} & \cellval{-0.04} & \cellval{-0.14} & \cellval{0.04} \\
Random            & \cellval{0.24} & \cellval{0.42}  & \cellval{0.26}  & \cellval{0.34}  \\
Semantic Similarity          & \cellval{-0.10} & \cellval{0.00} & \cellval{-0.02} & \cellval{0.12}  \\
BM25              & \cellval{0.42} & \cellval{0.58} & \cellval{0.52}  & \cellval{0.52}  \\
ReContriever      & \cellval{0.00} & \cellval{0.24} & \cellval{-0.14} & \cellval{0.24}  \\
GPT4-Few        & \cellval{-0.08} & \cellval{-0.04} & \cellval{-0.04} & \cellval{-0.04}  \\
\bottomrule
\end{tabular}
\end{subtable}
\end{tabular}
} %
\end{table*} 
\subsection{Qualitative Analysis and Results} To better explore the cases when memory items selected by RUMS-Utility does not match the human selected, we further analyze the n=60 samples from H2. First, we define the different categories of cases we found: 
\begin{itemize}
    \item Correct: RUMS-Utility matches Human chosen list exactly
    \item Underselect: Human selected items and RUMS-Utility did not
    \item Some: RUMS-Utility selected a mixture of items selected at least one human, and not selected by humans
    \item All: RUMS-Utility selected all items which at least one human selected (but possibly not all humans)
    \item None: RUMS-Utility only selected items which no human selected
    \item Underselect - All: RUMS-Utility did select some items, which were chosen by all humans, but humans selected more items as well which RUMS-Utility did not. 
\end{itemize}

Using these categories, we count the number of samples for each case.

\begin{table}[t]
\centering
\small
\caption{Breakdown of selection behavior categories with representative user inputs and selected memory items by humans (all agree), humans (at least one agree), and RUMS Utility.}
\label{tab:qualitative_analysis}
\setlength{\tabcolsep}{4pt}
\renewcommand{\arraystretch}{1.2}
\begin{tabular}{p{1.4cm} c p{3.5cm} p{3cm} p{3.5cm} p{3cm}}
\toprule
\textbf{Category} & \textbf{\%} & \textbf{Example}& &\textbf{Chosen Items}&\\
& & User Input & Humans (All) & Humans ($>1$) & RUMS-Utility \\
\midrule
Correct & 38 &
``In May 1994 the Channel Tunnel was opened by Queen Elizabeth II and which French President?'' &
[] &
[] &
[] \\

Underselect & 15 &
``How to use a vpn while vacationing in china'' &
[Tech savvy] &
[Tech savvy, Feedback style, Comm mode, Tech use] &
[none] \\

All & 13 &
``Choose between new zara sneaker or used new balance'' &
[] &
[Hobbies, Income, Aspirations, Finances, Exercise, Personality, Job, Industry, Pastimes, Events, Gender, Location] &
[Hobbies, Income] \\

Some & 22 &
``Buying a new notebook to write a script'' &
[] &
[Location, Income, Tech savvy, Finances, Personality] &
[Hobbies, Tech savvy, Fav books] \\

None & 5 &
``My PC is r6 3600 and 1660s need new GPU'' &
[Projects, Finances] &
[Income, Job, Hobbies, Industry, Projects, Aspirations, Finances, Pastimes] &
[Tech savvy] \\

Underselect - All & 7 &
``Im hungry'' &
[Foods, Diet needs, Fitness, Location] &
[Foods, Diet needs, Fitness, Location, Income, Finances, Travel, Gender] &
[Foods, Diet needs, Fitness] \\
\bottomrule
\end{tabular}
\end{table}
 
Table \cref{tab:qualitative_analysis} shows the results from this analysis, including the type of category, percentage of samples, and an example of each. We find majority of non-correct samples are from RUMS-Utility selecting values which at least one human selected but not all humans. This makes sense, as memory selection for personalization can be highly subjective and so this just shows that RUMS-Utility may select some of the more on the edge items. There is also a moderate level of underselect. This lends more weight to the idea that RUMS-Utility is a more conservative method, however, this could be changed based on the threshold selected by the user. 

\section{Formulas}
\subsection{Shannon Entropy} \label{appx:shannon_entropy_defn} We define the Shannon Entropy as a measure of uncertainty within a system. For a random variable $x \in X$ and a distribution according to $p: X \rightarrow [0,1]$ such that $p(x): = \mathbb{P}[X =x]$. Then we formalize the Shannon entropy $H$ as, 
\begin{align*}
    H(x) = - \sum_{x \in X} p(x) \log p(x)
\end{align*}

\section{Optimality of \ourmethod{} Proof}\label{appx:optimality_proof}

\begin{proof}
We begin with the joint conditional entropy of $(y, z_u)$ given $(x, m_{u,S})$. By the chain rule,
\begin{align*}
H(y, z_u \mid x, m_{u,S})
&= H(y \mid x, m_{u,S}) + H(z_u \mid x, y, m_{u,S}) \\
&= H(z_u \mid x, m_{u,S}) + H(y \mid x, z_u, m_{u,S}).
\end{align*}

Equating the two expressions and rearranging yields
\[
H(y \mid x, m_{u,S})
=
H(y \mid x, z_u, m_{u,S})
+
H(z_u \mid x, m_{u,S})
-
H(z_u \mid x, y, m_{u,S}).
\]

By Assumption~2, memory $m_{u,S}$ provides little additional information about $y$ beyond $(x, z_u)$. We therefore approximate
\[
H(y \mid x, z_u, m_{u,S}) \approx H(y \mid x, z_u).
\]
Substituting this approximation gives
\[
H(y \mid x, m_{u,S})
\approx
H(y \mid x, z_u)
+
H(z_u \mid x, m_{u,S})
-
H(z_u \mid x, y, m_{u,S}).
\]

The first term, $H(y \mid x, z_u)$, does not depend on the choice of subset $S$. In addition, Assumption~2 implies that conditioning on $m_{u,S}$ does not substantially refine the posterior over $z_u$ once $(x, y)$ are observed, so
\[
H(z_u \mid x, y, m_{u,S}) \approx H(z_u \mid x, y),
\]
which is also independent of $S$.

Combining these observations, we obtain the approximation
\[
H(y \mid x, m_{u,S}) \approx C + H(z_u \mid x, m_{u,S}),
\]
where $C$ is a constant that does not depend on $S$.

Therefore, minimizing predictive entropy $H(y \mid x, m_{u,S})$ is approximately equivalent to minimizing uncertainty about the latent variable $z_u$ given the selected memory.

Now consider expected utility as defined in Assumption~1,
\[
\mathbb{E}[U_u(y)]
=
-
\mathbb{E}_{z_u}
\mathbb{E}_{y \sim p_\theta(\cdot \mid x, m_{u,S})}
[\ell(y, z_u)].
\]
Because $\ell$ is a strictly proper scoring rule, the expected loss is minimized when the predictive distribution $p_\theta(y \mid x, m_{u,S})$ matches the conditional distribution induced by $z_u$. Reducing uncertainty about $z_u$ therefore strictly improves expected utility.

Consequently, any subset $S^*$ that approximately minimizes $H(y \mid x, m_{u,S})$ also approximately maximizes expected utility.
\end{proof}

\section{Sequence Entropy Equivalence Proof}\label{appx:seq_entropy_proof}
\begin{proof}
Let $p_\theta(y \mid x)$ denote the model distribution over token sequences $y=(y_1, \dots, y_T)$ given input $x$.  
At each step $t$, the conditional next-token distribution is
\begin{align*}
    p_\theta (\cdot \mid y_{<t},x),
\end{align*}
where $y_{<t}$ is the prefix of tokens before step $t$.

By the chain rule of probability,
\begin{align*}
    -\log p_\theta (y \mid x) =   \sum_{t=1}^T - \log p_\theta (y_t \mid y_{<t}, x).
\end{align*}
Taking the expectation with respect to $y\sim p_\theta(\cdot \mid x)$ on both sides gives
\begin{align*}
    \mathbb{E}_{y \sim p_\theta} \left[-\log p_\theta (y \mid x)\right] 
    &=  \mathbb{E}_{y \sim p_\theta} \left[ \sum_{t=1}^T -\log p_\theta (y_t \mid y_{<t}, x) \right ] \\
    &= \sum_{t=1}^T \mathbb{E}_{y \sim p_\theta} \left[ -\log p_\theta (y_t \mid y_{<t}, x) \right ] && \text{linearity of expectation}\\
    &= \sum_{t=1}^T \mathbb{E}_{y_{<t} \sim p_\theta} \left[  \sum_{y_t} p_\theta(y_t \mid y_{<t},x) -\log p_\theta(y_t \mid y_{<t}, x) \right ] && \text{law of iterated expectation}\\
    &= \mathbb{E}_{y_{<t} \sim p_\theta} \left [ \sum_{t=1}^T  H\!\left(p_\theta(\cdot \mid y_{<t}, x) \right)\right].
\end{align*}
\end{proof}

\section{\ourmethod-Utility Algorithm}\label{appx:algorithm}
In \cref{alg:boed} we provide the formalized algorithm for \ourmethod{}-Utility.

\begin{algorithm}
\caption{Memory Selection via \ourmethod}
\label{alg:boed}
\begin{algorithmic}[1]
\Require User input $x$, user memory $u \in \mathbb{R}^d$
\Ensure Selected memory subset $m_{u,S}^*$ that maximizes utility

\State \textbf{Candidate Memory Item Identification:}
\State Use LLM to generate candidate subsets $S$ for $x$

\State \textbf{Subset Evaluation:}
\For{each $s \in S$}
    \State Map candidate attritubes to user values $m_{u,S}$
    \State Prompt Reference LM with:
    \State \hspace{1em} (i) $x$ only
    \State \hspace{1em} (ii) $x$ and $m_{u,S}$
    \State Compute estimated utility $\hat{\text{EIG}}(S)$ using \cref{eqn:method_utility}
    \State Store utility score for subset $s$
\EndFor

\State \textbf{Subset Selection:}
\State $s^* \gets \arg\max_{s \in S} \hat{\text{EIG}}(S)$
\If{all MI scores $< \text{threshold} \tau$}
    \State $s^* \gets \emptyset$ \Comment{Select no features}
\EndIf

\end{algorithmic}
\end{algorithm} 

\section{Experimental Details}\label{appx:exp_details}
This section provides the full details of the experiments described in \cref{sec:training_method}.

\subsection{Experimental Setup}
Across all experiments (H1–H3), we consider a fixed set of $n=50$ memory items (e.g., \textit{location, age, dietary restrictions}). The task is to select a subset of up to $n=5$ memory items that can improve personalization.

\subsection{Datasets}\label{appx:dataset}
To evaluate scenarios where personalization is beneficial as well as those where it is not, we use a diverse mixture of user inputs drawn from prior work, including both synthetic and real-world sources. The three datasets we used are described below:

\paragraph{\pf{} \citep{tao2025personafeedbacklargescalehumanannotatedbenchmark}}
This is a synthetically created dataset consisting of user-specific queries designed to reflect inputs from realistic and diverse user personas. First, they constructed personas by representing them as a structured dictionary of memory items, including demographics, personality traits, and preferences. Then, open-source content from social media, reviews, and forums were used to populate these items with specific values. Lastly, an LLM generator was used to produce queries based on a sampled persona and specific sub-scenarios. Additional steps such as similarity-based filtering, rephrasing to avoid directly revealing persona information, and manual annotations ensured diversity and realism. The final dataset contains more than 4,000 persona-specific questions. The original data is in Chinese and therefore, we translated to English using a machine translation model \citep{TiedemannThottingal:EAMT2020}.  
Example: \textit{``Do you have any music for the trip?''}

\paragraph{\fb{} \citep{jiang-etal-2019-freebaseqa}}
A dataset of trivia-style questions and answers collected from human trivia competitions. The original dataset contains approximately 28,000 question–answer pairs.  
Example: \textit{``What type of snake is a sidewinder?''}

\paragraph{WildChat \citep{zhao2024wildchat}}
A large-scale collection of more than 1 million real-world user–ChatGPT interactions, encompassing over 2.5 million dialogue turns. The dataset was compiled by offering free access to ChatGPT and GPT-4 in exchange for user consent to share chat histories. It covers diverse conversational scenarios such as ambiguous requests, code-switching, topic shifts, and political discussions. We filter the data to retain only English conversations labeled as non-toxic, and we use only the first human-written input per conversation to ensure that queries contain sufficient context.  

To determine whether an input is personalized, two human annotators labeled a random selection of WildChat queries as either beneficial for personalization (\wcp{}) or not (\wcnp{}). Only queries with full annotator agreement were retained.  
Examples: \textit{``I'm hungry''; ``Who is Mohamed Gamal Mohamed Maniieh?''}

\paragraph{Synthetic vs. Real-World Datasets}
By design, \pf{} is treated as a personalized input set, and \fb{} as non-personalized. We collectively refer to these as the ``Synthetic'' dataset, since they were not originally created for LLM input. WildChat is considered a ``Real-World'' dataset, as it was collected directly from real user interactions with an LLM. We use both synthetic and real-world datasets to evaluate the robustness of \ourmethod{} across many diverse inputs.

\paragraph{Sample Size by Hypothesis}
The number of samples varies across hypotheses. For H1, we use $n=500$ samples from \pf{} and \fb{} to analyze \ourmethod{}-Utility for binary personalization decisions. For H1', we use $n=25$ inputs each from \pf{}, \fb{}, \wcp{}, and \wcnp{} to extend the H1 analysis to correlation with human judgment. For H2, we use $n=15$ inputs each from \pf{}, \fb{}, \wcp{}, and \wcnp{} to examine the correlation between \ourmethod{} memory selection and human preference. Finally, for H3, we use $n=100$ samples from \pf{} and \fb{} along with $n=50$ samples each from \wcp{} and \wcnp{} to analyze the downstream impact of subsets chosen by \ourmethod{}-Model.

\subsection{Profile Generation}\label{appx:profile_gen}
User profiles are incorporated during the training data curation phase, where specific user values are used to compute the entropy difference of the model required for the utility function in \cref{eqn:method_utility}. Also, we use profiles in our test sets for evaluation.

\paragraph{Memory Item Creation}\label{appx:memory_items}
To construct user profiles, we prompted GPT-4 \citep{openai2024gpt4technicalreport} to generate 50 commonly occurring user memory items. These items span demographic, behavioral, and preference-based attributes, providing a diverse set of values for personalization. The complete list is as follows:
\begin{itemize}
\begin{multicols}{3}
    \item Name
    \item Gender
    \item Location (City/State/Country)
    \item Ethnicity
    \item Race
    \item Annual Household Income
    \item Language spoken at home
    \item Language preference
    \item Sexual Preference
    \item Disabilities
    \item Occupation
    \item Political affiliation
    \item Education level
    \item Hobbies and interests
    \item Favorite sports
    \item Preferred music genre
    \item Favorite books
    \item Preferred movie genre
    \item Favorite TV shows
    \item Favorite foods
    \item Dietary restrictions (e.g., vegetarian, vegan)
    \item Relationship status
    \item Number of children
    \item Pet ownership (types of pets)
    \item Travel history (countries visited)
    \item Travel preferences (adventurous, relaxing)
    \item Social media platforms used
    \item Level of tech-savviness
    \item Preferred mode of communication (text, voice, video)
    \item Work schedule (full-time, part-time)
    \item Industry of employment
    \item Current projects or goals
    \item Long-term aspirations
    \item Financial situation (broad categories)
    \item Health and fitness goals
    \item Preferred exercise activities
    \item Level of environmental consciousness
    \item Volunteer activities or interests
    \item Current challenges or pain points
    \item Personality traits (e.g., introverted, extroverted)
    \item Favorite pastimes
    \item Personal values or beliefs
    \item Cultural background
    \item Religious beliefs
    \item Recent life events (e.g., change in job, moved)
    \item Technology usage (devices, apps)
    \item Frequency of using the service (daily, weekly)
    \item Reasons for using the service
    \item Feedback preferences (detailed, brief)
    \item Preferred tone of communication (formal, casual)
\end{multicols}
\end{itemize}
\paragraph{Memory Item Values}
Next, we used GPT-4 \citep{openai2024gpt4technicalreport} to generate consistent user profiles without contradictions (e.g., ``vegan'' with ``favorite food = steak''), following the prompt in \cref{prompt:profile_generate}. To further ensure logical consistency, we employed GPT-4 to filter out any profiles that appeared illogical, as described in \cref{prompt:profile_filter}. This process was repeated until we obtained $n=100$ unique, logically consistent profiles for both training and testing.

This method yields a diverse set of user profiles, as illustrated in the bar charts of memory item distributions shown in \cref{fig:profile_distr_start} through \cref{fig:profile_distr_end}.

\paragraph{Prompts used for Profile Generation}
\begin{prompt}\label{prompt:profile_generate}
For these 50 features, I want you to create 10 very diverse persona profiles. So for each of the 50 features, I want you to select a value (e.g. "age" = 50) that would make up the persona profile. Make sure the values make sense for each other, meaning don't have a person who is 12 years old have a job as a accountant. 

Output a json file with a dictionary with 10 keys, "Persona\_1", "Persona\_2",...,"Persona\_10", whose corresponding values is a list of the selected values in the same order as the features (e.g. [John, Male, Seattle/WA/USA,...]). 

Do not include any other markdown or text. Here are the Features:
\end{prompt}
\begin{prompt}\label{prompt:profile_filter}
    You are an AI assistant that needs to decide if a user profile is logically sound. This means that the information in the user profile should be consistent and non-conflicting.
    
    Only output a single word, either 'yes' or 'no', to indicate if the user profile is logically sound.
    
    Example 1: If a user profile states that the user is allergic to peanuts, it should not also state their favorite food is peanuts.
    
    Example 2: If a user profiles states highest education is a GED, it should not also state their profession is a doctor.
    
    Example 3: If a user profile states that the user is 25 years old, it should not also state that they have been working for 30 years.
    
    Example 4: If a user profile states that they live in Columbia, it should not state that their main language is French.
    
    User Profile:
\end{prompt}

\subsection{Baselines}\label{appx:baselines}
In this section we describe each baseline that is used in our experimentation.

\paragraph{GPT-4 Prompting \citep{openai2024gpt4technicalreport}:}  
We use GPT-4o as a baseline across H1, H2, and H3. In H1, we apply \cref{prompt:gpt4baseline_H1_zero} for the zero-shot setting and \cref{prompt:gpt4baseline_H1_few} for the few-shot setting to decide whether personalization is needed for a given query. For the few-shot case, we provide four in-context examples: two requiring personalization and two not requiring personalization.  

In H2 and H3, we adapt the prompts to instead elicit the subset of specific user memory items that would be beneficial for personalization. We use \cref{prompt:gpt4baseline_H2_zero} for zero-shot and \cref{prompt:gpt4baseline_H2_few} for few-shot. In the few-shot setting, we provide two in-context examples: one requiring personalization and one not.  

\begin{prompt}\label{prompt:gpt4baseline_H1_zero}
You are a classifier that determines whether a given user input would benefit from user-specific information for the model to respond.

            Your response must be a JSON object with a two key "Decision" with a value of either "True" or "False" and "Features" with either a list of user-specific features (at most 10 features) if "Decision" is "True" or "None" if "Decision" is "False" :
            
            Output "True" if the input would benefit from user-specific information to better respond to the query.
            
            Output "False" if the input would not benefit from user-specific information or would be the same answer regardless of the user.
            
            Respond with only the JSON object. Do not include any explanation or additional text.
            
            Input: {user\_input}

\end{prompt}

\begin{prompt}\label{prompt:gpt4baseline_H1_few}
    'You are a classifier that determines whether a given user input would benefit from user-specific information for the model to respond.
    
            Your response must be a JSON object with a two key "Decision" with a value of either "True" or "False" and "Features" with either a list of user-specific features (at most 10 features) if "Decision" is "True" or "None" if "Decision" is "False" :
            
            Output "True" if the input would benefit from user-specific information to better respond to the query.
            
            Output "False" if the input would not benefit from user-specific information or would be the same answer regardless of the user.
            
            Respond with only the JSON object. Do not include any explanation or additional text.
            
            Example 1:
            
            Input: "Where should I eat tonight?"
            
            Output: {"Decision": "True", "Features": ["location", "dietary restrictions", "income"]}
            
            Example 2:
            
            Input: "Can you help me write an essay on feminism?"
            
            Output: {"Decision": "True", "Features": ["education", "political ideology", "gender"]}
            
            Example 3:
            
            Input: "What is the capital of Austin, Texas?"
            
            Output: {"Decision": "False", "Features": "None"}
            
            Example 4:
            
            Input: "Can you explain what AGI is?"
            
            Output: {"Decision": "False", "Features": "None"}
            
            Now classify the following input: 
            
            Input:  {user\_input}'

\end{prompt}
\begin{prompt}\label{prompt:gpt4baseline_H2_zero}
    Given a user-input and list of features, select which features might be important for answering the user-input. 

    Choose up to 5 features.
    Respond in a JSON with one key "Chosen Features", which has a list of chosen features. 
    Do not include any other text or formatting.

    User Input: {user\_input}
    
    Possible Features: {candidate\_features\_ls}
\end{prompt}

\begin{prompt}\label{prompt:gpt4baseline_H2_few}
Given a user-input and list of features, select which features might be important for answering the user-input. 
        Choose up to 5 features.
        
        Respond in a JSON with one key "Chosen Features", which has a list of chosen features. 
        Do not include any other text or formatting.
        
        Example Input 1: "whats the weather today" 
        
        Example Output 1: {{"Chosen Features": ["Location (City/State/Country)"]}} 
        
        Example Input 2: "What is the capital of Texas?" 
        
        Example Output 2: {{"Chosen Features": []}} 
        
        User Input: {user\_input}
        
        Possible Features: {candidate\_features\_ls}'

\end{prompt} 
\paragraph{Semantic Similarity:}  
In H1–H3, we also include a simple baseline based on semantic similarity. Here, we compute the cosine similarity between user query embeddings and each memory item embeddings using a lightweight sentence-transformer model \citep{all-MiniLM-L6-v2}.  

In H1, we use the validation set, defined as a 20\% hold-out of the data, to tune a threshold for deciding whether personalization is required. Thresholds are determined by grid search over $[-1, 1]$ in increments of 0.5, selecting the value that maximizes agreement with human annotations for each dataset. A query is labeled as requiring personalization if at least one memory item has a similarity score above the selected threshold. The exact thresholds used for each dataset are reported in \cref{tab:semanticsimilarity_H1_thresholds}.

\begin{table}[h!]
    \centering
    \begin{tabular}{c|c}
        Dataset & Threshold \\
        \hline
        All & -1.0 \\
        Synthetic & -1.0 \\
        Real World & 0.5 \\
    \end{tabular}
    \caption{Threshold for H1 analysis of semantic similarity baseline. If at least one memory item had a semantic similarity higher than this threshold, then the query was labeled as needing personalization.}
    \label{tab:semanticsimilarity_H1_thresholds}
\end{table}

Since H2 and H3 require selecting specific memory items for personalization rather than making a simple binary decision, we set an intuitive threshold of $t=0$, corresponding to no similarity between the user query and a memory item. If more than five memory items exceed this threshold, we select the top five with the strongest correlations.  

\paragraph{Simple Baselines:}  
We also compare against several naive baselines. The ``None'' baseline provides no features ($n=0$), serving as a lower bound for personalized inputs and an upper bound for non-personalized ones. The ``All'' baseline includes all available features ($n=50$), testing the effect of providing maximal but noisy context. Finally, the ``Random'' baseline samples $n=5$ items at random, establishing a chance-level reference.  

\paragraph{SOTA Retrieval Methods:}  
We additionally evaluate two state-of-the-art retrieval methods.  

First, BM25 \citep{bm25s}, a widely used ranking function in information retrieval, can be adapted for memory selection. It scores memory items (treated as terms) based on their frequency in each instance relative to the entire dataset, balancing how common a memory item is with how discriminative it is. Items with higher BM25 scores are considered more informative and are selected as the most relevant, making BM25 well-suited for sparse or high-dimensional text and categorical data.  

Second, ReContriever \citep{lei-etal-2023-unsupervised} is a retrieval-based method that scores memory items by their ability to help a model retrieve relevant information. It leverages dense embeddings to measure semantic similarity between queries and candidate items, effectively identifying the most informative items for downstream tasks. Memory items that yield higher retrieval relevance are prioritized, making ReContriever particularly effective for selecting contextually important or predictive inputs in text or structured datasets. Similarly to semantic similarity a threshold of $t=0$ is used as threshold for selecting memory items, with a maximum of up to $n=5$ selected.

\subsection{\ourmethod{}-Utility and \ourmethod{}-Model Details}\label{appx:ourmethod_details}
In this paper, we describe two main methods: \ourmethod{}-Utility, which selects a subset of user memory items with the highest utility for personalization directly, and \ourmethod{}-Model, a lightweight trained model built on data generated by \ourmethod{}-Utility.  

\paragraph{\ourmethod{}-Utility:}  
We use \ourmethod{}-Utility not only for the analyses in H1 and H2 but also to generate training data for fine-tuning \ourmethod{}-Models. In general, we compute entropy in \cref{eqn:method_obj_fn} using Llama-3.1-8B \citep{meta_llama_3_1_8b_instruct} with a temperature of $0.7$ and sampling-based decoding to obtain the response distribution. We restrict the maximum generation length to $T=20$ tokens, chosen to mitigate bias from longer generations having naturally lower entropy.  

Following the algorithm in \cref{sec:method:model}, we first prompt GPT-4 to reduce the candidate memory items from 50 to at least 1 and at most 10 before computing \ourmethod{}-Utility (see \cref{prompt:gpt4_reducenum}). Utility scores are then estimated using $n=5$ Monte Carlo samples. For each input, we generate $n=5$ distinct user profiles as separate training points, which can yield entirely unique best memory subsets of data depending on the values in the user profile. 

The chosen hyperparameters (e.g., model size, number of Monte Carlo samples, etc.) reflect a tradeoff: obtaining reliable signals of memory item importance while keeping computations lightweight and feasible for large-scale inference. Further discussion of these choices is provided in \cref{appx:robust_utility_scores}.  
\begin{prompt}\label{prompt:gpt4_reducenum}
Given a user-input and list of features, select which features might be important for answering the user-input.

Choose at least 1 and up to 10 features.

Respond in a JSON with one key "Chosen Features", which has a list of chosen features. '
Do not include any other text or formatting.
\end{prompt}

\paragraph{Choosing the Personalization Threshold:}  
To determine the threshold for deciding whether to apply personalization or select no personalization, we used a held-out validation set of size $n=100$ from each of \pf{} and \fb{}. For each query, we computed the average maximum utility score over the $n=5$ generated user profiles. We labeled inputs from \pf{} as personalized (1) and inputs from \fb{} as non-personalized (0). These scores were combined and labeled by dataset to construct an ROC curve.  

We then selected the threshold that maximized Youden’s J statistic ($J = \text{TPR} - \text{FPR}$), which identifies the point that best balances sensitivity and specificity. Using this procedure, we determined an optimal threshold of $t=0.29$, which was applied in all downstream analyses.  
 
\paragraph{\ourmethod{}-Models}\label{appx:model_training}  
For training the downstream models, we sampled approximately 2.5K unique \pf{} queries and 2K unique \fb{} queries. Each query was paired with five user profiles, and \ourmethod{}-Utility was applied to select the best memory subsets, yielding 13K and 9K training examples, respectively. We excluded all \fb{} queries that exceeded the personalization threshold, thereby removing any noisy non-personalized samples. This design choice emphasized that non-personalized inputs should not include any memory items.  

We then used these (query, memory subset) pairs to train two models. The first model, used in H1–H3, was a binary logistic regression implemented with DeBERTa-v3-large \citep{he2021deberta} and a final linear classification layer. To train this model, we converted each (query, memory subset) pair into multiple (query, memory item, label) triplets. Training was performed with a learning rate of $5 \times 10^{-6}$ for 12 epochs, and the best model was selected based on the highest F1 score on a held-out validation set ($n=1K$).  

The second model, used for H3, was a multiclass, multilabel logistic regression implemented again with DeBERTa-v2-large \citep{he2021deberta} and a final multiclass classification layer. Here, labels were constructed by merging the selected subsets from the $n=5$ diverse user profiles per query. This produced a single label vector in which all memory items chosen in any of the five samples were marked as significant. This resulted in more dense labels, training the model to select more memory items. Training was performed with a learning rate of $2 \times 10^{-5}$ for 12 epochs, and the best model was again chosen based on the highest F1 score on a held-out validation set ($n=1K$).  

\paragraph{Selecting Regression Threshold} Since \ourmethodmodel{} is trained as a logistic regression model (either binary or multiclass), its output represents the probability of selecting a particular memory item. A common approach is to use a cutoff of $0.5$ to make a binary decision, however, since our task is more specialized, instead we tuned this cutoff using a held-out evaluation set ($n=1K$). Specifically, we perform a grid search over boundaries in $[0,1]$ and select the cutoff that achieved the highest F1 score on the evaluation set. This yielded $b=0.81$ for \ourmethodconserv{} and $b=0.93$ for \ourmethodlessconserv{}.

\subsection{Comparing Size of Methods}
In \cref{tab:model_sizes}, we summarize the size of each model used in both our method and the baselines. Our approach uses a model comparable in scale to SOTA baselines, yet it is over $400\times$ smaller than GPT-4.
\begin{table}[t]
\centering
\caption{Baseline methods and their corresponding model sizes.}
\label{tab:model_sizes}
\resizebox{\linewidth}{!}{
\begin{tabular}{l l l}
\toprule
\textbf{Method} & \textbf{Model} & \textbf{\# Parameters} \\
\midrule
\rowcolor{gray!20} \ourmethod{} & microsoft/deberta-v3-large & 418M \\
Semantic Similarity & all-MiniLM-L6-v2 & $\sim$22.7M \\
GPT4o & GPT4o & Not disclosed (est. $>$200B) \\
ReContriever & Yibin-Lei/ReContriever & $\sim$110M \\
\bottomrule
\end{tabular}
}
\end{table}
 
\subsection{Evaluation Metric and Analysis}\label{appx:eval_prompts}

\paragraph{H1 Evaluation:}  
In H1, we compare the distribution of maximum utility scores between assumed personalized and non-personalized inputs. Specifically, we evaluate utility scores on a non-personalized dataset (\fb{}) and a personalized dataset (\pf{}). We sampled $n=500$ inputs from each dataset and computed the maximum utility score using \cref{eqn:method_utility} for every input. The resulting distributions were then analyzed using divergence metrics and hypothesis testing.  

For divergence metrics, we use Jensen–Shannon (JS) divergence, a symmetric measure of similarity between two probability distributions \(P\) and \(Q\). It is defined as:  

\[
\text{JS}(P \| Q) = \frac{1}{2} \, \text{KL}(P \| M) + \frac{1}{2} \, \text{KL}(Q \| M), \quad \text{where } M = \frac{P+Q}{2}
\]

Here, \(\text{KL}(P \| M)\) is the Kullback–Leibler divergence from \(P\) to the average distribution \(M\). JS divergence is always finite, symmetric, and bounded between 0 and \(\log 2\).  

For hypothesis testing, we use the two-sample Kolmogorov–Smirnov test to assess whether the two samples come from the same distribution. The null hypothesis states that they do, and we test significance at the $\alpha = 0.05$ level.

\paragraph{H1' Evaluation:}  
In H1', performance was evaluated using human labels as the gold standard. For each sample, four human annotators rated whether $n=100$ samples would benefit from personalization using a 0–2 scale. If the annotators’ confidence in personalization was 1 or higher, we considered the input to be personalized. Instructions provided to annotators are given in \cref{prompt:humanannotation_binary}. We only included inputs with unanimous agreement ($n=64$) in the evaluation. The average interrater agreement between pairs of annotators was $\kappa = 0.59$ using the Cohen's kappa. The primary metrics used were recall, measuring accuracy in identifying personalized inputs, and specificity, measuring accuracy in identifying non-personalized inputs.

\begin{prompt}\label{prompt:humanannotation_binary}
Instructions: Read the user input and then decide if the input would benefit from specific user information.

 0 = No Personalization Needed, 1 = Personalize is might be helpful, 2= Personalization is definitely helpful
\end{prompt}

\paragraph{H2 Evaluation:}  
For H2, we evaluated how closely each method's selected subset of memory items matched the subsets chosen by humans. To create the human-selected subsets, four annotators were asked to select all memory items (from the full set of 50) that they felt would be helpful in generating a response. Instructions provided to annotators are given in \cref{prompt:humanannotation_memory items}. For each input, we included only the memory items selected by at least three annotators to form the majority subset. This process resulted in an average of $4.19$ memory items per query for inputs where at least one item was chosen. This average guided our experimental design for H2 and H3, as we restricted each method to select no more than $n=5$ items.  

We evaluated performance using precision, recall, and F1 score. Here, precision measures the fraction of items selected by a method that were also in the human majority subset, recall measures the fraction of items in the human majority subset that were correctly selected by the method, and F1 is the harmonic mean of precision and recall, providing a single metric that balances both.

\begin{table}[h]
\centering
\caption{Average number of features selected for personalized vs.\ not-personalized inputs. Methods closes to human performance are \textbf{bolded}.}
\label{tab:H2_avg_feat}
\begin{tabular}{lccc}
\toprule
 & Personalized & Not-Personalized & Difference \\
\midrule \rowcolor{gray!15}
Human Annotation           & 6.45 & 1.43 & 5.02 \\
\ourmethodutility{}              & 1.87 & 0.80 & 1.07 \\
\ourmethodconserv{}       & 4.87 & \textbf{1.56 }& \textbf{3.31} \\
\ourmethodlessconserv{}      & 3.27 & 1.67& 1.60 \\
Semantic Similarity & \textbf{5.00} & 5.00 & 0.00 \\
GPT4-ZeroShot                      & 3.30 & 2.61 & 0.69 \\
GPT4-FewShot                      & 1.80 & 1.57 & 0.23 \\
\bottomrule
\end{tabular}
\end{table}
 \paragraph{Number of Memory Items Selected in H2} Lastly, we include table \ref{tab:H2_avg_feat} which compares the average number of memory items selected by each method in H2. Of note, we see that human annotations show a clear separation: personalized inputs have on average $6.5$ features selected, compared to only $1.4$ for non-personalized inputs, a difference of $5$. Among automatic methods, \ourmethod{}-Model and \ourmethod{}-Utility comes closest to this behavior, achieving a range of $3.3$ and $1.08$, whereas the next closest method is GPT4-ZeroShot which achieves only a $0.67$ difference. This further highlights how \ourmethod{} excels at both distinguishing when memory items are beneficial as well as which features are beneficial. 

\begin{prompt}\label{prompt:humanannotation_memory items}
For each input, decide which (if any) user-features would be helpful to better respond to the input. 
\end{prompt}

\paragraph{H3 Evaluation:}  
In H3, we evaluate the downstream impact of the memory subsets chosen by each method on generated responses. For each user input, responses are generated using the query combined with the subset of memory items selected by each method. This evaluation is performed with two models: GPT-4 \citep{openai2024gpt4technicalreport}, a high-performing model with robust capabilities, and LLaMA 70B Instruct \citep{meta_llama_3_1_70b_instruct}, a smaller model more susceptible to noise. This setup demonstrates \ourmethod{}-Models’s flexibility in adapting to models with varying capacities and usage scenarios.  

Performance is measured using \textit{generation win rate}, defined as the proportion of pairwise comparisons in which \ourmethod{}-Models’s output is judged preferable to that of a baseline. GPT-4 is used as the judge, comparing responses generated with \ourmethod{}-Models’s selected subset against those generated with each baseline’s subset. LLM-as-Judge is validated to align closely with third-party human annotations for personalization when combined with in-context examples, model confidence scores \citep{dong-etal-2024-llm}.  

To ensure robust evaluation, our methodology includes:  
\begin{itemize}
    \item \textit{In-context examples}: Three examples are provided in the prompt for each evaluation to demonstrate the expected style of reasoning and scoring.  
    \item \textit{Confidence thresholds}: Only responses for which the model reports a self-generated confidence score above the 70\% percentile are included in the final analysis.  
    \item \textit{Chain-of-thought prompting}: Evaluation prompts encourage the model to reason step-by-step and justify its preference for one response over another, improving consistency and interpretability of the win-rate comparisons.  
\end{itemize}

For personalized datasets (\pf{} and \wcp{}), the evaluation metrics in the chain-of-thought include:  
\begin{itemize}
    \item \textit{Completeness}: Does the response fulfill the user’s request?  
    \item \textit{Personalization}: Does the response appropriately incorporate relevant details from the user profile?  
    \item \textit{Overpersonalization}: Does the response include irrelevant, distracting, or unnecessary user details?  
\end{itemize}

For non-personalized datasets (\fb{} and \wcnp{}), the metrics are:  
\begin{itemize}
    \item \textit{Completeness/Accuracy}: Does the response correctly and fully answer the user’s request?  
    \item \textit{Neutrality}: Is the response unbiased and general, rather than tailored to the user?  
\end{itemize}

Full evaluation prompts can be found in \cref{prompt:H3eval_personalized} for personalized datasets, \cref{prompt:H3eval_nonpersonalized} for non-personalized datasets, and \cref{prompt:H3eval_incontext} for the three in-context examples used in all evaluation.

\begin{prompt}\label{prompt:H3eval_incontext}
Example 1:

{{"user profile": ['Preferred exercise activities: Weightlifting', "Hobbies and interests: ['Surfing', 'Photography']", 'Location (City/State/Country): Melbourne/Victoria/Australia', "Social media platforms used: ['Facebook', 'LinkedIn']", "Technology usage (devices, apps): ['Smartphone', 'Laptop']],

    "user input": " Why are there five rings in Olympic logo? And what does they symbolize?”,
    
    "response\_1": " The five rings in the Olympic logo represent the five continents: Africa, the Americas, Asia, Europe, and Oceania. They are interlinked to symbolize the unity and friendship among nations, highlighting the idea of bringing together athletes from around the world to compete in the spirit of sportsmanship. The colors of the rings (blue, yellow, black, green, and red) were chosen because at least one of these colors appears in the national flags of all participating countries.”,
    
    "response\_2": “The five rings in the Olympic logo represent the five continents of the world‚ Africa, the Americas, Asia, Europe, and Oceania. This design, created by Pierre de Coubertin in 1913, symbolizes the unity and connection among the nations and cultures participating in the Olympic Games. The rings are interlocked, indicating that these continents are united through the spirit of sport, competition, and the Olympic ideals of excellence, friendship, and respect. The colors of the rings (blue, yellow, black, green, and red) along with the white background were chosen because at least one of these colors appears on the flag of every nation competing in the Olympics.”,
    
“overall”: “0”,

“overall\_explanation”: “Although they are worded a bit differently, they generally have the same information and do not include unhelpful user-information in the answer.”}}

Example 2:

{{"user profile": ["Social media platforms used: ['Instagram', 'Facebook']", 'Dietary restrictions (e.g., vegetarian, vegan): None', 'Gender: Female', "Pet ownership (types of pets): ['Cat']", 'Current projects or goals: Continue education', 'Religious beliefs: Christian', 'Favorite TV shows: ["Grey\'s Anatomy"]', 'Reasons for using the service: Medical updates', 'Current challenges or pain points: Shift work', 'Travel preferences (adventurous, relaxing): Relaxing'],

    "user input": "The 2011 album 'The King of Limbs' was released by which English rock band?”,
    
    "response\_1": "The 2011 album 'The King of Limbs' was released by the English rock band Radiohead.”,
    
    "response\_2": “The album 'The King of Limbs' was released by the English rock band Radiohead”,
    
“overall”: “0”,

“overall\_explanation”: “These answer are the same, minus only minor differences”}}

Example 3:

{{"user profile": ['Current challenges or pain points: Commuting stress', 'Location (City/State/Country): Bangalore/Karnataka/India', 'Favorite sports: Cricket', "Personal values or beliefs: ['Kindness', 'Resilience']", 'Political affiliation: Centrist', 'Name: Arjun Mehta', 'Reasons for using the service: Access tools', 'Financial situation (broad categories): Financially modest', 'Level of tech-savviness: Moderate', 'Cultural background: South Asian', 'Frequency of using the service (daily, weekly): Weekly', "Favorite TV shows: ['Sacred Games']", 'Travel preferences (adventurous, relaxing): Relaxing', 'Preferred mode of communication (text, voice, video): Voice', "Favorite foods: ['Dosa', 'Chaat']", 'Health and fitness goals: Improve health'],

    "user input": " Are there any recommended effective ways to enhance the fun of English grammar learning?”,
    
    "response\_1": "Hi Arjun! Here are some recommended ways to enhance the fun of learning English grammar:
1. **Games and Quizzes**: Engage in grammar games or online quizzes that challenge your understanding in a fun way. Websites like Kahoot! or Quizlet offer interactive tools.
2. **Group Learning**: Form a study group with friends or peers. This can make the learning process more social and less intimidating. You can discuss grammar topics over chai or dosa!
3. **Music and Lyrics**: Listen to English songs and try to analyze the lyrics. Identifying verb tenses or other grammatical structures within your favorite songs can be both educational and entertaining.
4. **Humor in Learning**: Explore humorous content, such as grammar memes or funny videos about grammar mistakes, to lighten the mood.
5. **Role Play**: Practice dialogues using correct grammar in fun role-playing scenarios that interest you‚ perhaps even related to cricket!
By combining these methods with your existing interests and values, like resilience and kindness, you can make grammar learning a much more enjoyable experience. Happy learning!”,

    "response\_2": “Improving the enjoyment of English grammar learning can be approached in various fun and engaging ways. Here are some recommendations that might resonate with you:
1. **Gamify Learning**: Use educational games and apps that focus on grammar. Many platforms have quizzes, challenges, and rewards that make learning feel like a fun competition.
2. **Interactive Activities**: Engage in activities like grammar charades or sentence-building races with friends or family. This adds a social element to learning.
3. **Threshold Learning**: Set small, achievable goals and reward yourself when you reach them. This creates a sense of accomplishment and makes the process more enjoyable.
4. **Join a Language Group**: Participate in local or online language exchange groups where you can practice speaking with others. This can be a relaxing way to enhance your skills without the pressure of formal study.
Would you like any specific resources or activities tailored to your preferences?”,

“overall”: “2”,

“overall\_explanation”: “Response 1 is over personalized to the profile and inserts user-information in an unnatural and unhelpful way”}}
\end{prompt}

\begin{prompt}\label{prompt:H3eval_personalized}
You are an impartial and knowledgeable evaluator. You will be shown a user profile, a user input, and two model-generated responses. Your task is to decide if **only one of the responses** meets the evaluation criteria. If both responses do (or neither does), mark it as a tie (0).

    Evaluation Criteria:

    - **Completeness**: Does the response fulfill the user’s request? 
    
    - **Personalization**: Does the response appropriately use relevant details from the user profile?  
    
    - **Overpersonalization**: Does the response include irrelevant, distracting, or unnecessary details about the user?  
    
    - **Overall**: Taking into account completeness, and being appropriately personalized without being overly personal, is one response clearly better than the other?

     Important Notes:
    
    - Assign a score of 1 if **only response 1** satisfies the criterion.  
    
    - Assign a score of 2 if **only response 2** satisfies the criterion.  
    
    - Assign a score of 0 if **both responses** satisfy it or **neither does**.  
    
    - Do not favor length or style unless it clearly affects the criterion.  
    
    - Be strict: if the difference is marginal, mark it as a tie (0).

     Output Instructions:

    Each of the 4 score fields must be:  
    
    - `"1"` = only response 1 satisfies the criterion  
    
    - `"2"` = only response 2 satisfies the criterion  
    
    - `"0"` = both or neither satisfy the criterion  
    
    Additionally, assess your confidence in this decision by assigning a certainty level from 1 to 100. Use the following guidelines to assign the certainty level:
    
    1–20 (Uncertain): There is insufficient or minimal evidence. The decision is largely based on weak or indirect hints.
    
    21–40 (Moderately Confident): There is noticeable evidence supporting a preference, though it is not comprehensive, and other interpretations are possible.
    
    41–60 (Quite Confident): There is clear and convincing evidence that supports this decision, though it is not entirely decisive.
    
    61–80 (Confident): There is strong evidence that clearly supports this decision, with very little ambiguity.
    
    81–100 (Highly Confident): There is direct and explicit evidence that decisively supports this prediction.

    Lastly, include a **brief (1–2 sentences)** and justify the score using the criteria above.  
    
    You must return a JSON object with these 12 keys:
    
    - "completeness", "completeness\_explanation", "completeness\_confidence"
    
    - "personalization", "personalization\_explanation", "personalization\_confidence"
    
    - "overpersonalization", "overpersonalization\_explanation", "overpersonalization\_confidence"
    
    - "overall", "overall\_explanation", "overall\_confidence"
    
    **Do not include any text outside the JSON block.**
    Here is the evaluation task:
    
    Given: {{
    "user input": "{user\_input}",
    "response\_1": "{response\_a}",
    "response\_2": "{response\_b}"
    }} 
    Generation: 
\end{prompt}

\begin{prompt}\label{prompt:H3eval_nonpersonalized}
     f'''You are an impartial and knowledgeable evaluator. You will be shown a user input and two model-generated responses. Your task is to decide if **only one of the responses** meets the evaluation criteria. If both responses do (or neither does), mark it as a tie (0).

    Evaluation Criteria:

    - **Completeness \& Accuracy**: Does the response correctly and fully answer the user’s request?  
    
    - **Neutrality**: Is the response unbiased and general, rather than tailored to the user?  
    
    - **Overall**: Taking into account completeness, accuracy, and neutrality is one response clearly better than the other?

     Important Notes:
    
    - Assign a score of 1 if **only response 1** satisfies the criterion.  
    
    - Assign a score of 2 if **only response 2** satisfies the criterion. 
    
    - Assign a score of 0 if **both responses** satisfy it or **neither does**.  
    
    - Do not favor longer responses unless they add essential information.  
    
    - Be strict: if the difference is marginal, mark it as a tie (0).

     Output Instructions:
    Each of the 4 score fields must be:  
    
    - `"1"` = only response 1 satisfies the criterion  
    
    - `"2"` = only response 2 satisfies the criterion  
    
    - `"0"` = both or neither satisfy the criterion  
    
    Additionally, assess your confidence in this decision by assigning a certainty level from 1 to 100. Use the following guidelines to assign the certainty level:
    
    1–20 (Uncertain): There is insufficient or minimal evidence. The decision is largely based on weak or indirect hints.
    
    21–40 (Moderately Confident): There is noticeable evidence supporting a preference, though it is not comprehensive, and other interpretations are possible.
    
    41–60 (Quite Confident): There is clear and convincing evidence that supports this decision, though it is not entirely decisive.
    
    61–80 (Confident): There is strong evidence that clearly supports this decision, with very little ambiguity.
    
    81–100 (Highly Confident): There is direct and explicit evidence that decisively supports this prediction.

    Lastly, include a **brief (1–2 sentences)** and justify the score using the criteria above.  
    
    You must return a JSON object with these 12 keys:
    
    - "completeness\_accuracy", "completeness\_accuracy"\_explanation", "completeness\_accuracy"\_confidence"
    
    - "neutrality", "neutrality\_explanation", "neutrality\_confidence"

    - "overall", "overall\_explanation", "overall\_confidence"
    
    **Do not include any text outside the JSON block.**

    Here is the evaluation task:
    
    Given: {{
    "user input": "{user\_input}",
    "response\_1": "{response\_a}",
    "response\_2": "{response\_b}"
    }} 
    Generation: 
\end{prompt} 
\section{Other Details}\label{appx:other}
\subsection{Software} We used Python 3.13.2, Pytorch 2.7.0, and HuggingFace Transformers 4.52.4.All code is licensed under the
Apache License 2.0.

\subsection{Hardware} All experiments were run on a cluster with 4 NIVIDIA H100 GPUs and 1 NIVIDIA A100 GPU with 80B memory. For most inference jobs we used one GPU. For supervised fine-tuning, we used one GPU. The training for twelve epochs took about 2 hours. 

\subsection{Artifact Terms of Use} 
\pf \citep{tao2025personafeedbacklargescalehumanannotatedbenchmark}: Apache-2.0
\fb{} \citep{jiang-etal-2019-freebaseqa}:  Creative Commons Attribution-NonCommercial-ShareAlike 3.0 International License
WildChat \citep{zhao2024wildchat}: AI2 ImpACT License

\section{Information About Use Of AI Assistants}\label{appx:ai_use}
In this project, AI assistants were used for results visualization code (e.g., figures and tables) and for minor writing edits.

\section*{Distribution of Processed Profile Features}

\begin{figure}[htbp]
    \centering
    \includegraphics[width=\textwidth]{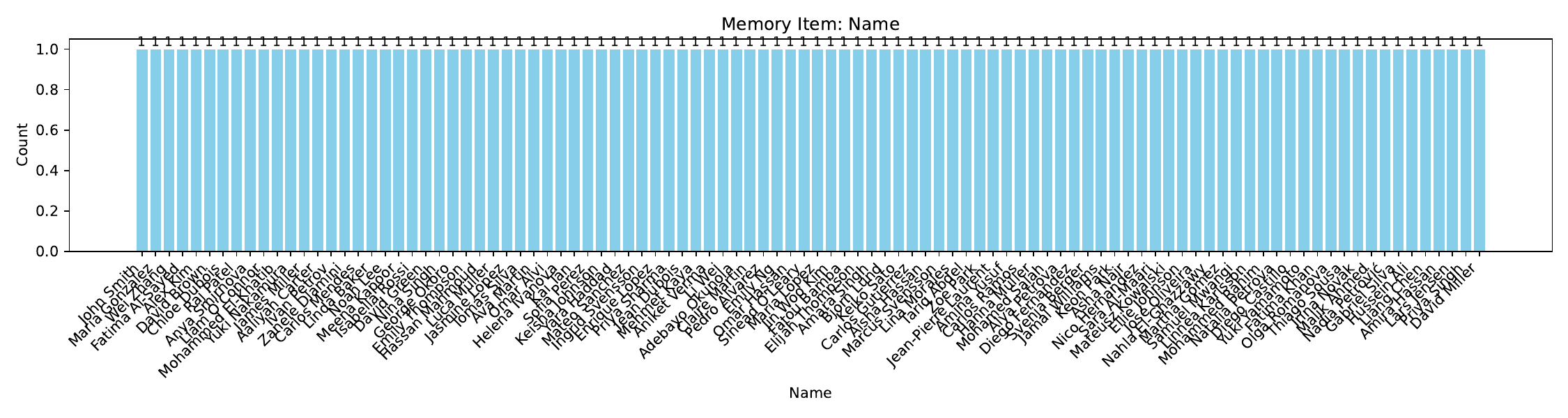}
    \caption{Distribution of Name}
    \label{fig:profile_distr_start}
\end{figure}

\begin{figure}[htbp]
    \centering
    \includegraphics[width=\textwidth]{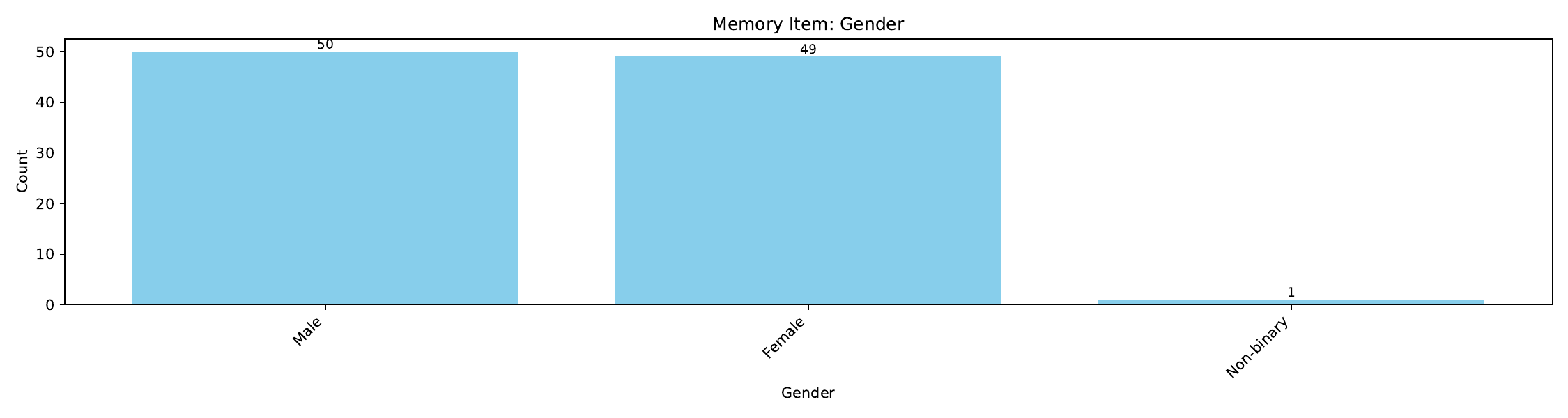}
    \caption{Distribution of Gender}
\end{figure}

\begin{figure}[htbp]
    \centering
    \includegraphics[width=\textwidth]{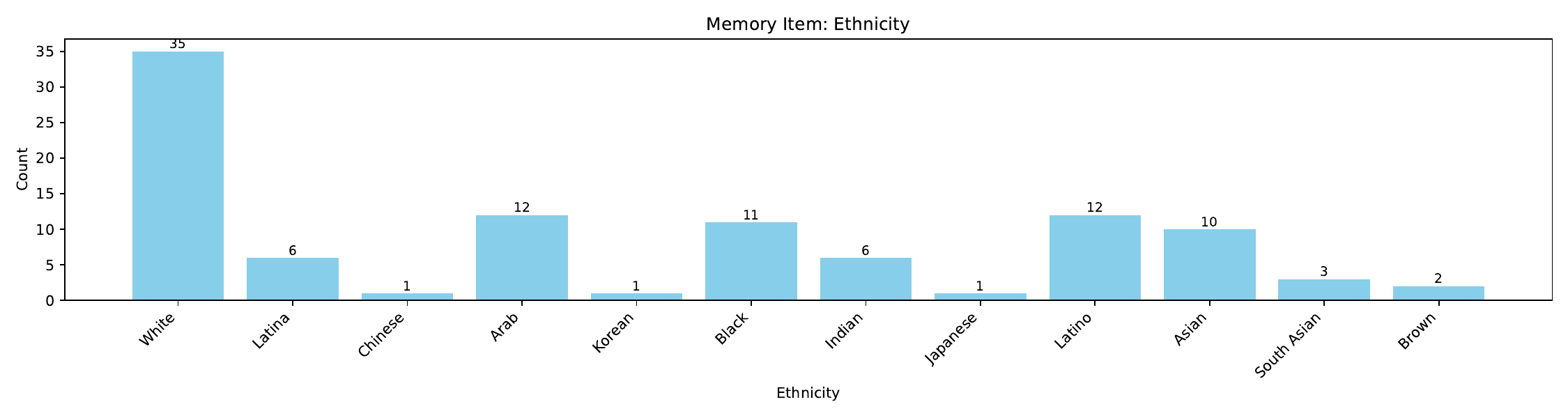}
    \caption{Distribution of Ethnicity}
\end{figure}

\begin{figure}[htbp]
    \centering
    \includegraphics[width=\textwidth]{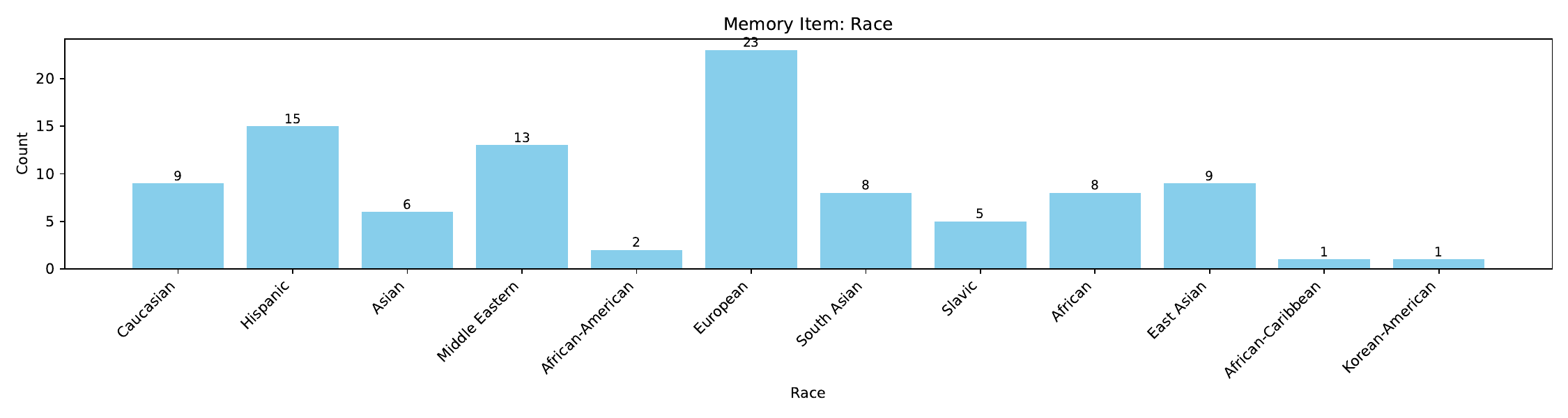}
    \caption{Distribution of Race}
\end{figure}

\newpage
\begin{figure}[htbp]
    \centering
    \includegraphics[width=\textwidth]{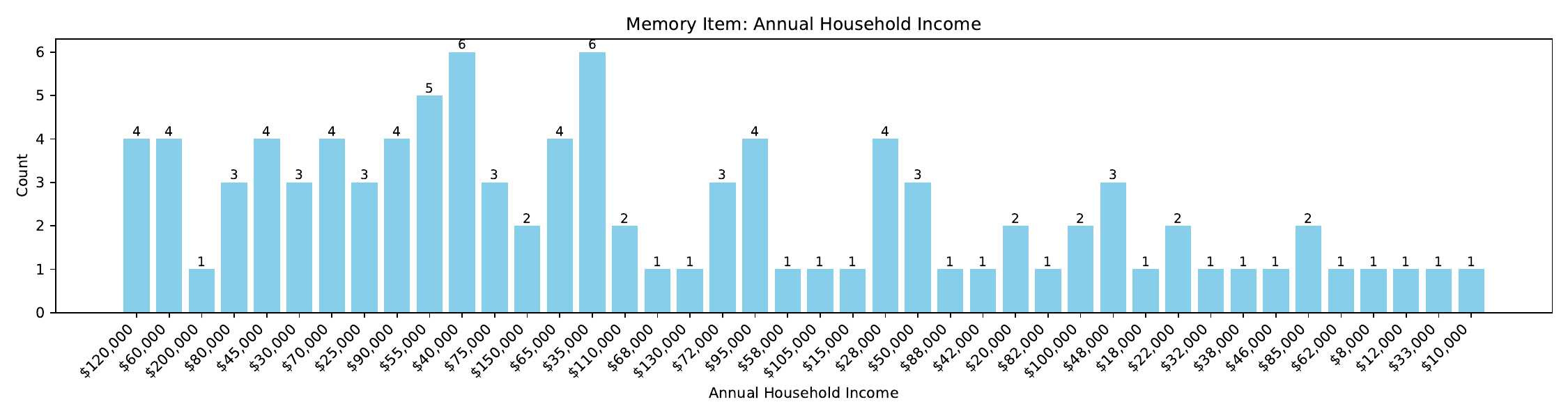}
    \caption{Distribution of Annual Household Income}
\end{figure}

\begin{figure}[htbp]
    \centering
    \includegraphics[width=\textwidth]{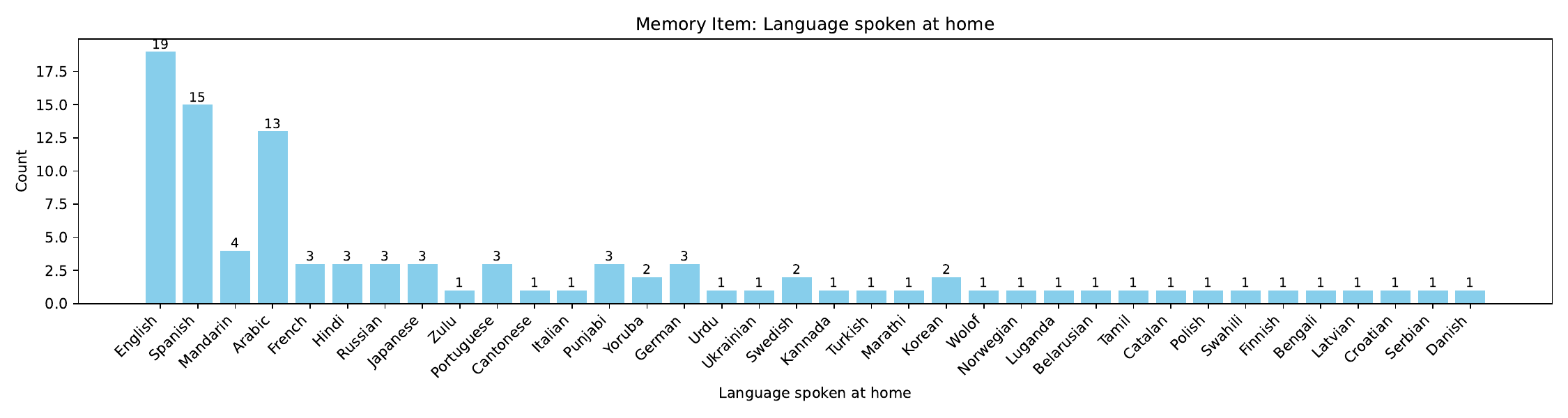}
    \caption{Distribution of Language spoken at home}
\end{figure}

\begin{figure}[htbp]
    \centering
    \includegraphics[width=\textwidth]{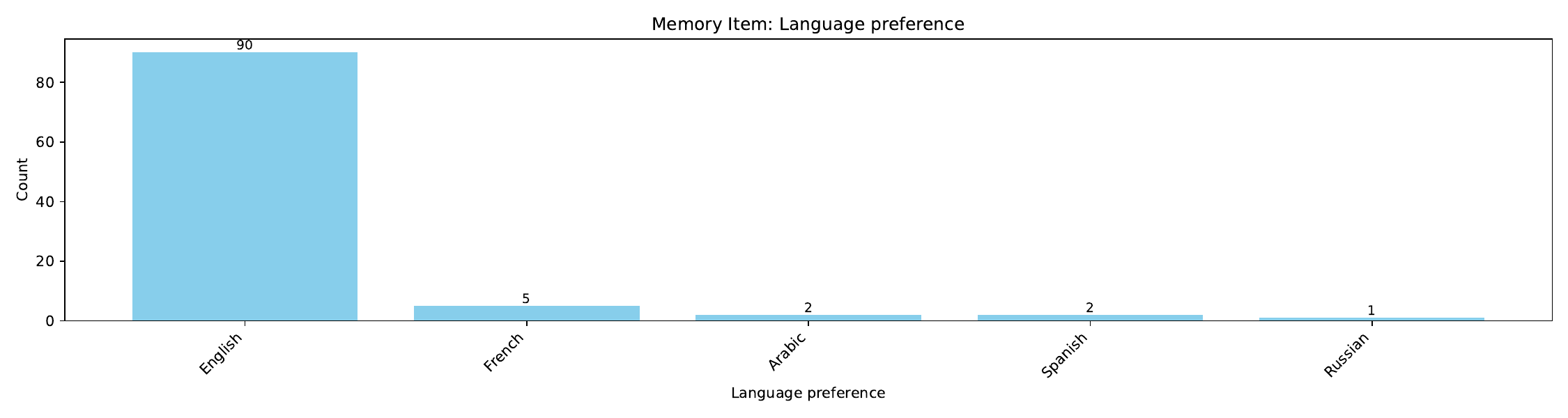}
    \caption{Distribution of Language preference}
\end{figure}

\begin{figure}[htbp]
    \centering
    \includegraphics[width=\textwidth]{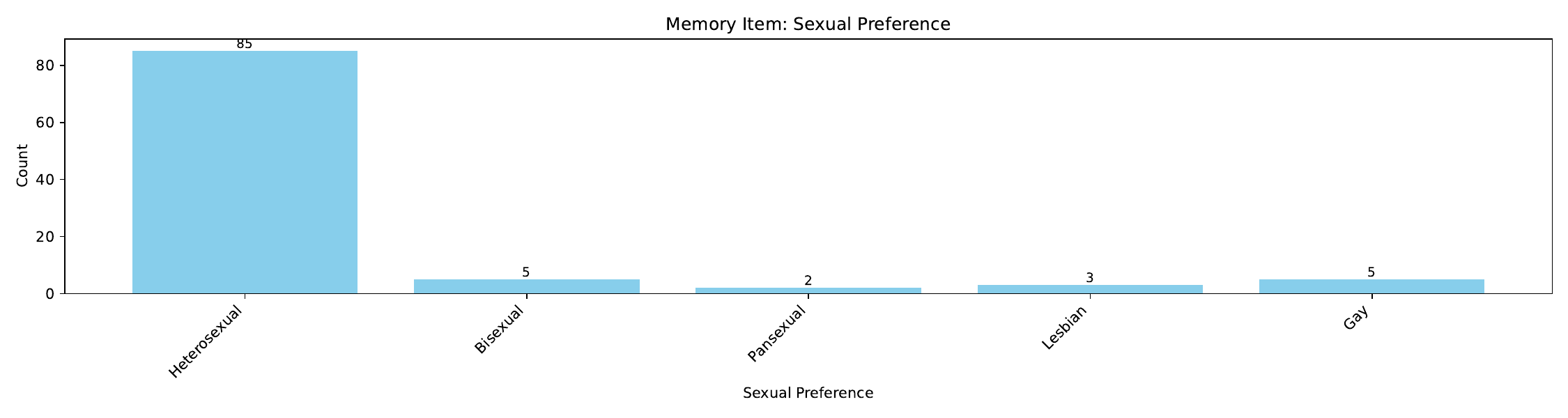}
    \caption{Distribution of Sexual Preference}
\end{figure}

\begin{figure}[htbp]
    \centering
    \includegraphics[width=\textwidth]{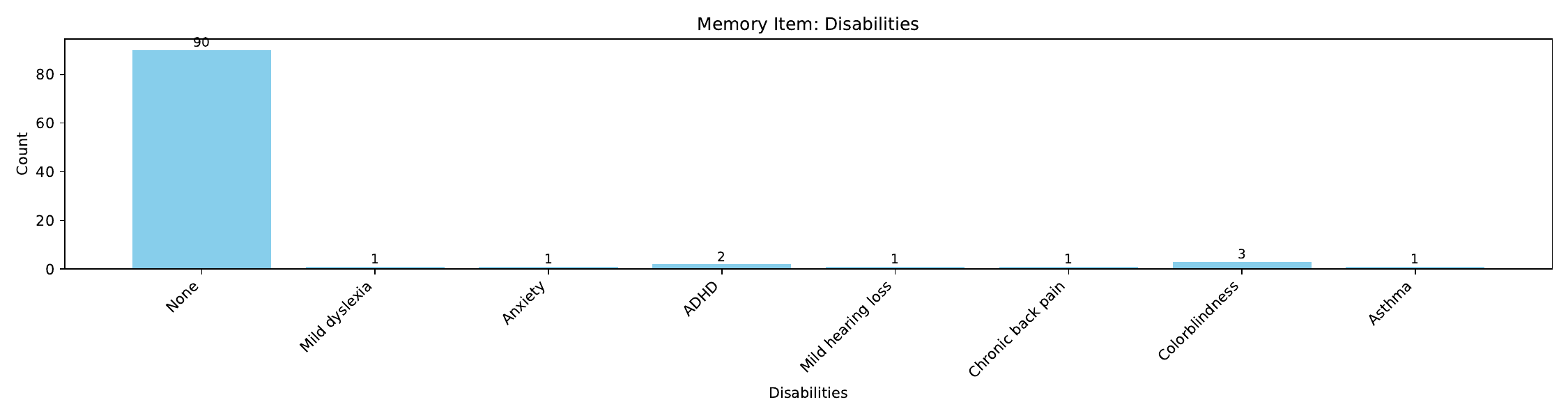}
    \caption{Distribution of Disabilities}
\end{figure}

\begin{figure}[htbp]
    \centering
    \includegraphics[width=\textwidth]{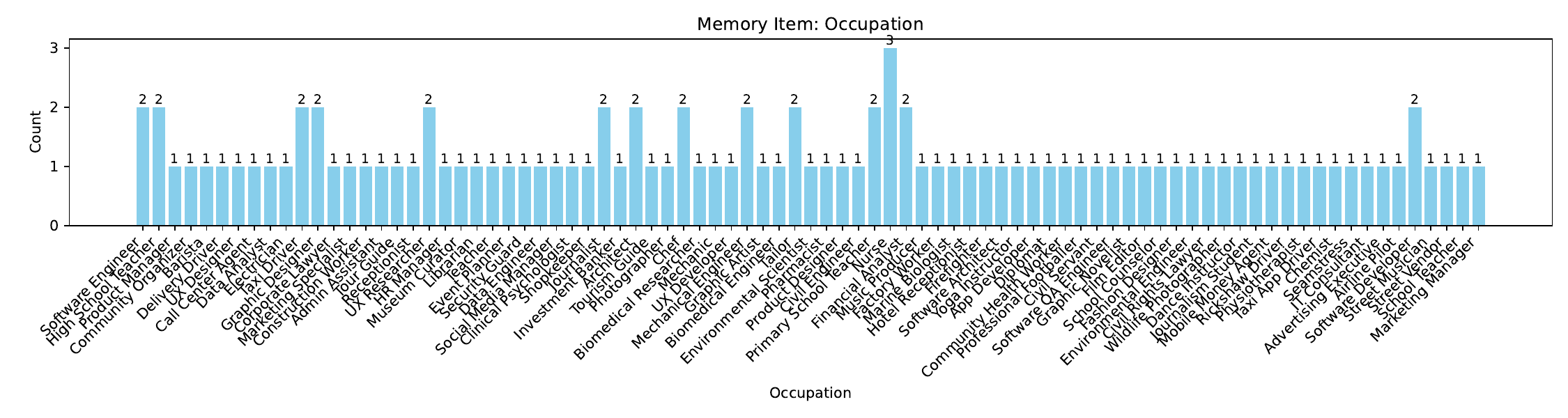}
    \caption{Distribution of Occupation}
\end{figure}

\begin{figure}[htbp]
    \centering
    \includegraphics[width=\textwidth]{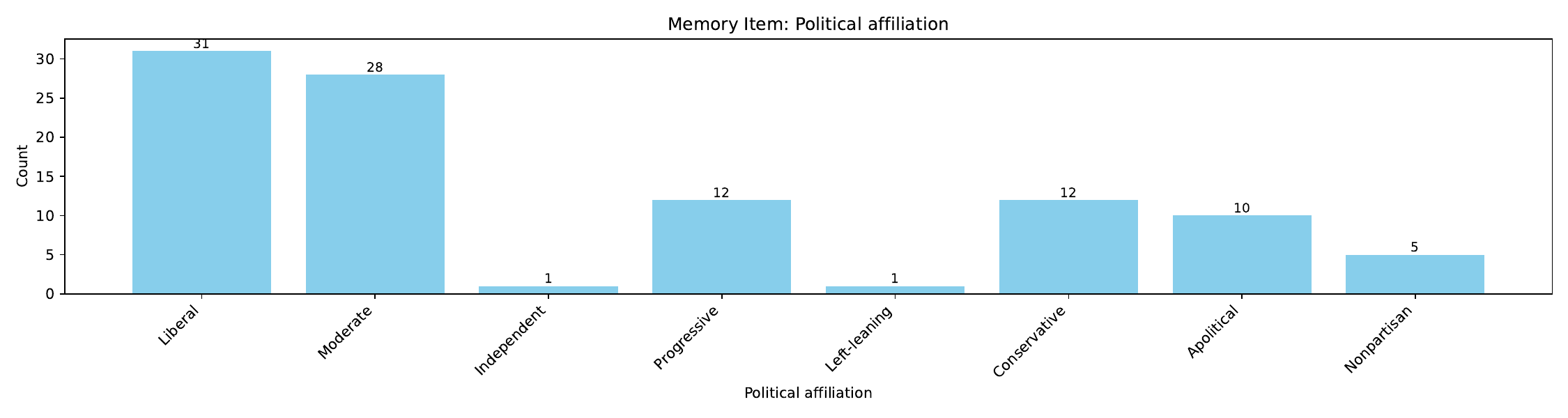}
    \caption{Distribution of Political affiliation}
\end{figure}

\begin{figure}[htbp]
    \centering
    \includegraphics[width=\textwidth]{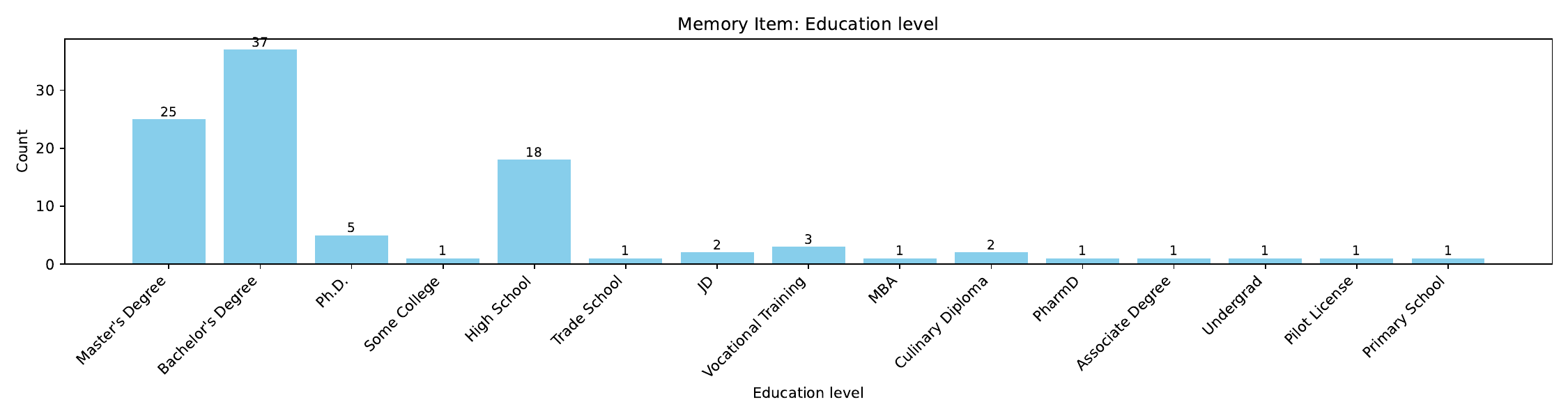}
    \caption{Distribution of Education level}
\end{figure}

\begin{figure}[htbp]
    \centering
    \includegraphics[width=\textwidth]{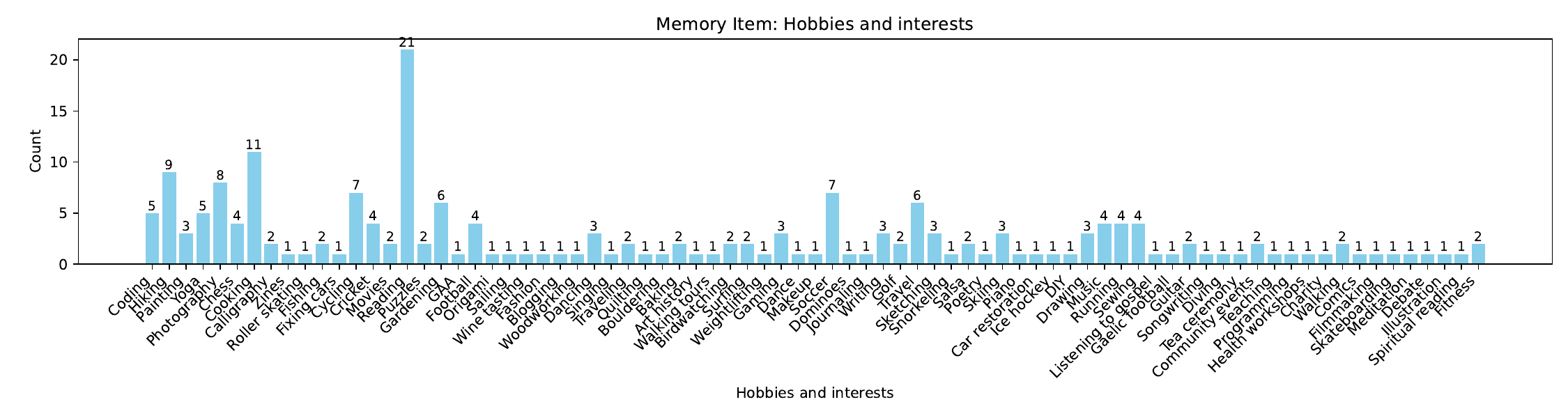}
    \caption{Distribution of Hobbies and interests}
\end{figure}

\begin{figure}[htbp]
    \centering
    \includegraphics[width=\textwidth]{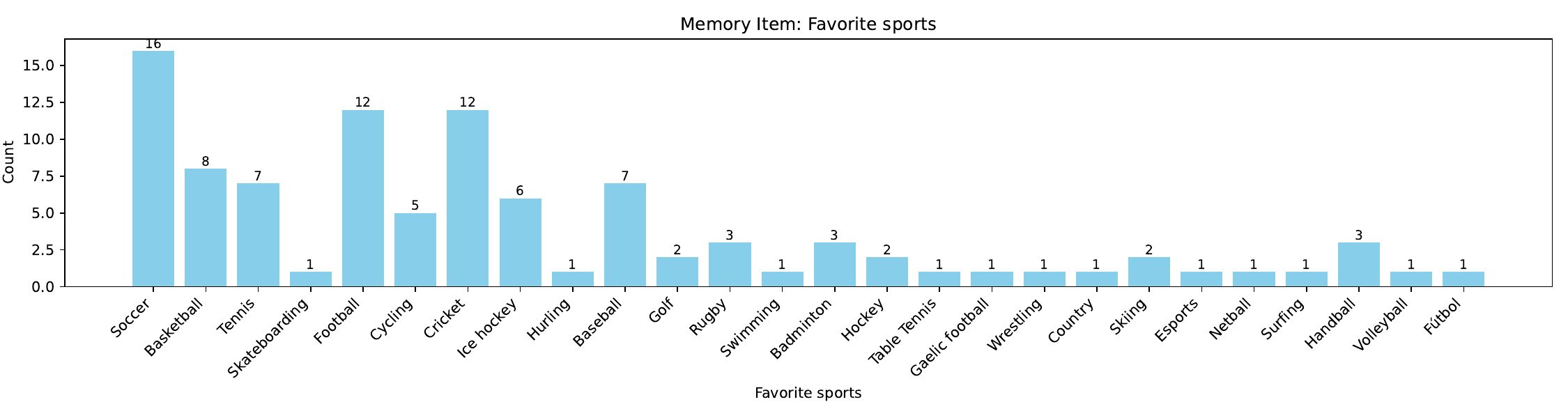}
    \caption{Distribution of Favorite sports}
\end{figure}

\begin{figure}[htbp]
    \centering
    \includegraphics[width=\textwidth]{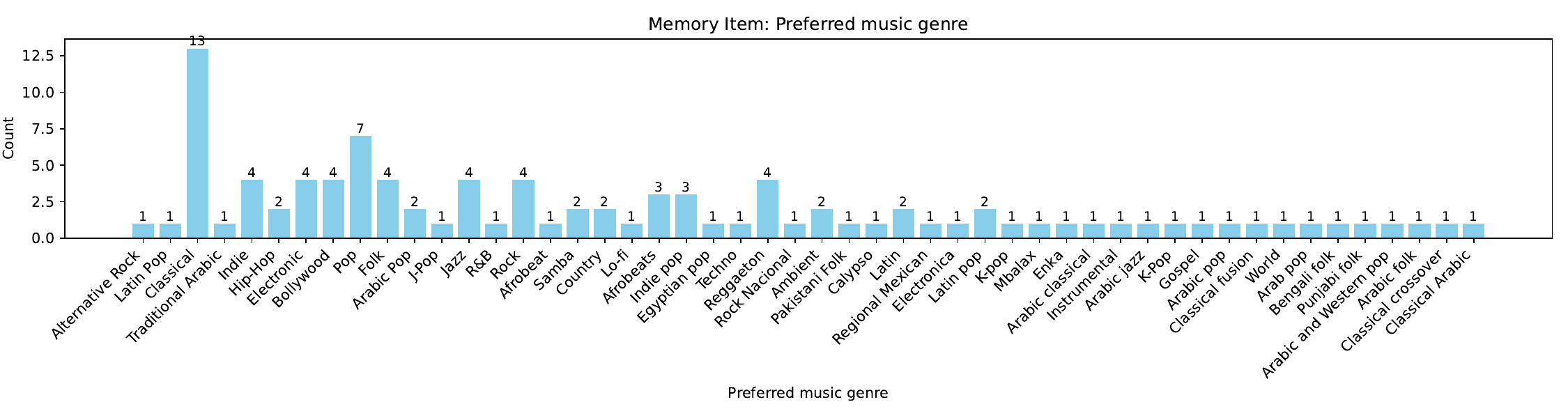}
    \caption{Distribution of Preferred music genre}
\end{figure}

\begin{figure}[htbp]
    \centering
    \includegraphics[width=\textwidth]{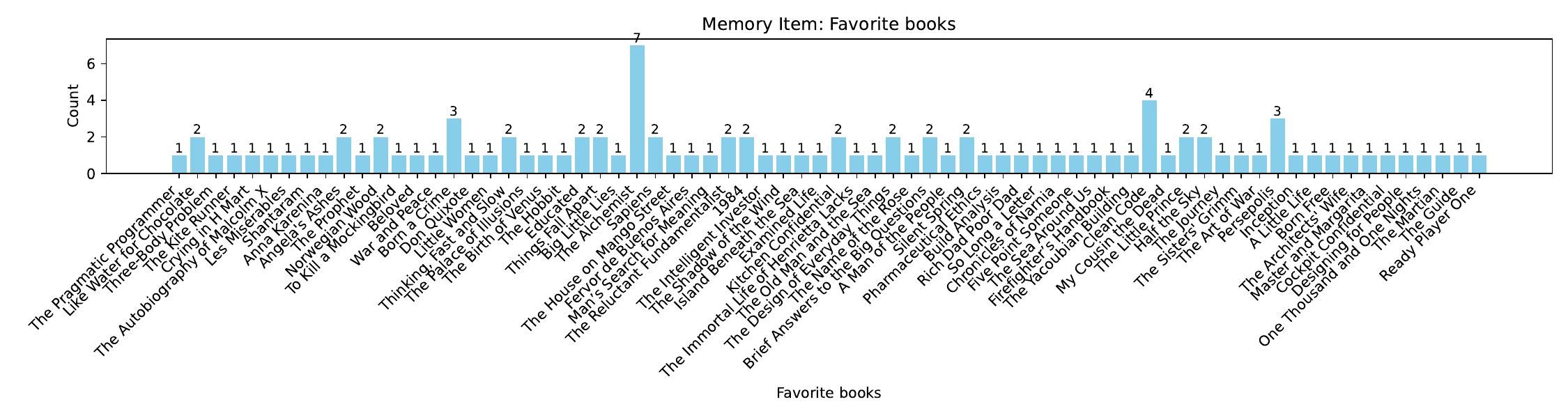}
    \caption{Distribution of Favorite books}
\end{figure}

\begin{figure}[htbp]
    \centering
    \includegraphics[width=\textwidth]{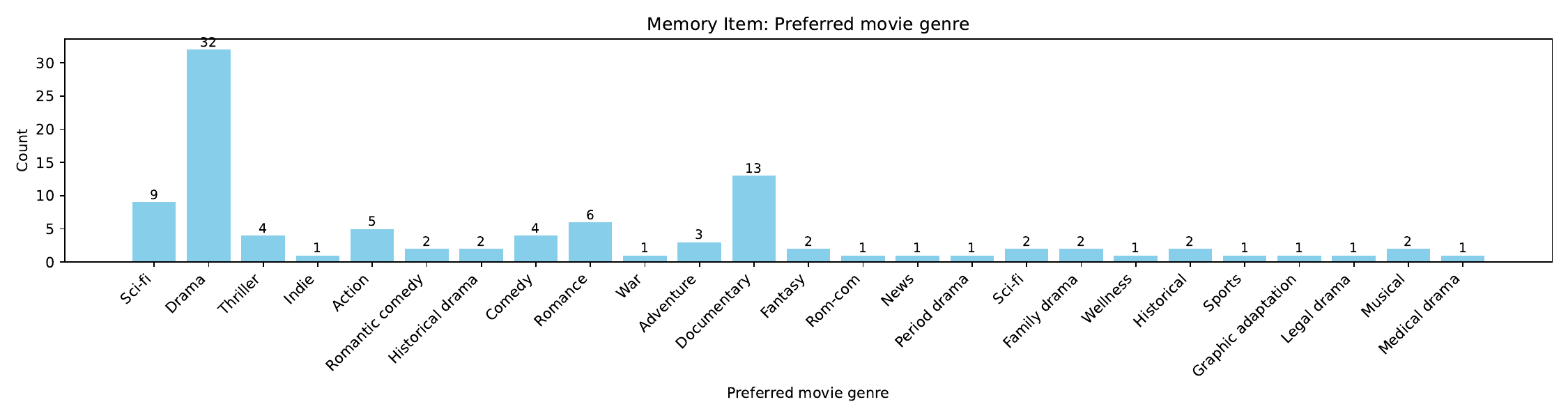}
    \caption{Distribution of Preferred movie genre}
\end{figure}

\begin{figure}[htbp]
    \centering
    \includegraphics[width=\textwidth]{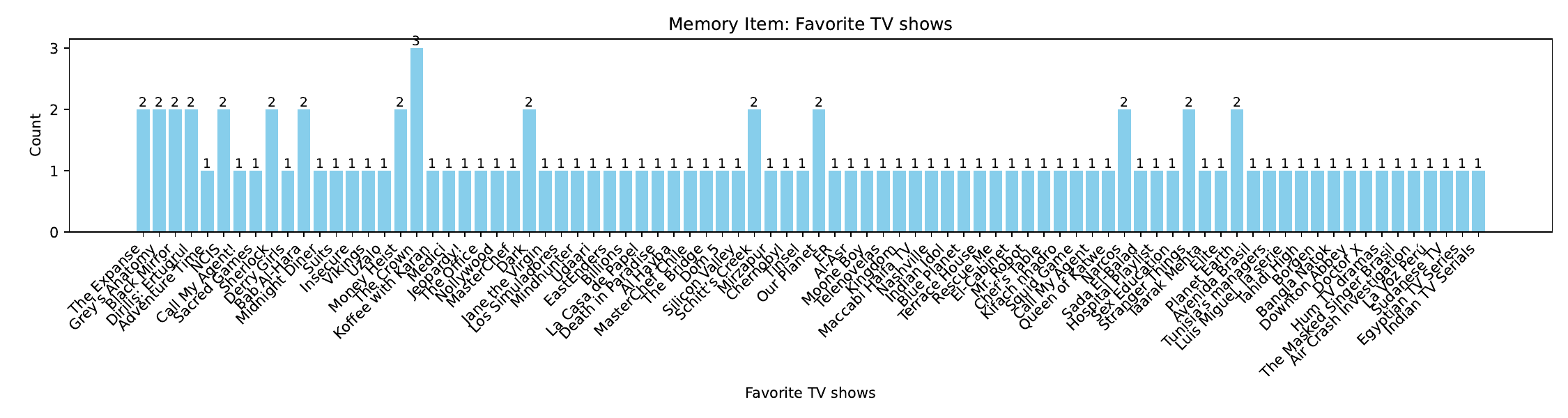}
    \caption{Distribution of Favorite TV shows}
\end{figure}

\begin{figure}[htbp]
    \centering
    \includegraphics[width=\textwidth]{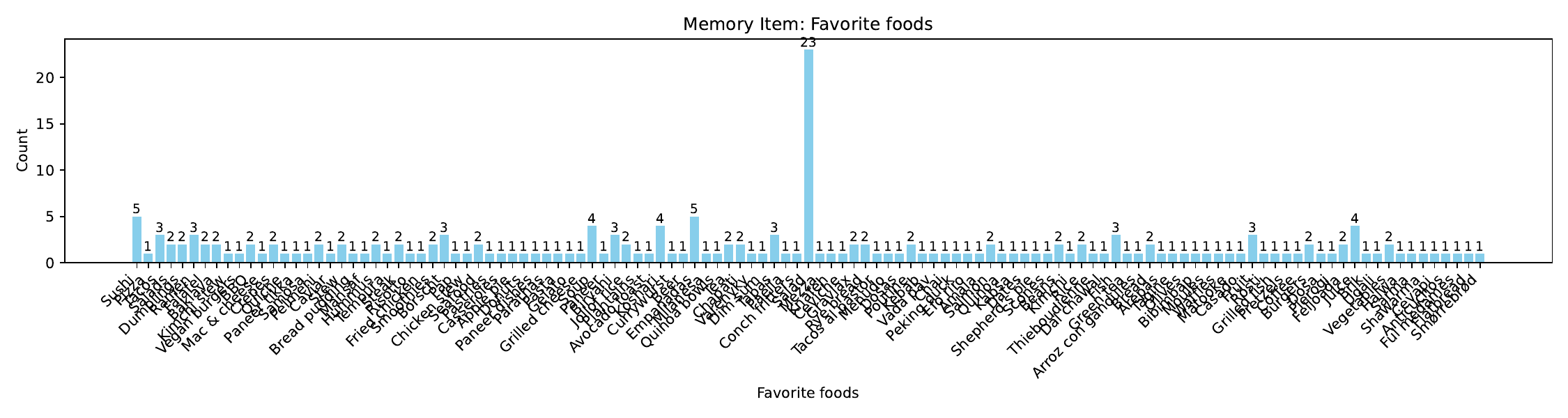}
    \caption{Distribution of Favorite foods}
\end{figure}

\begin{figure}[htbp]
    \centering
    \includegraphics[width=\textwidth]{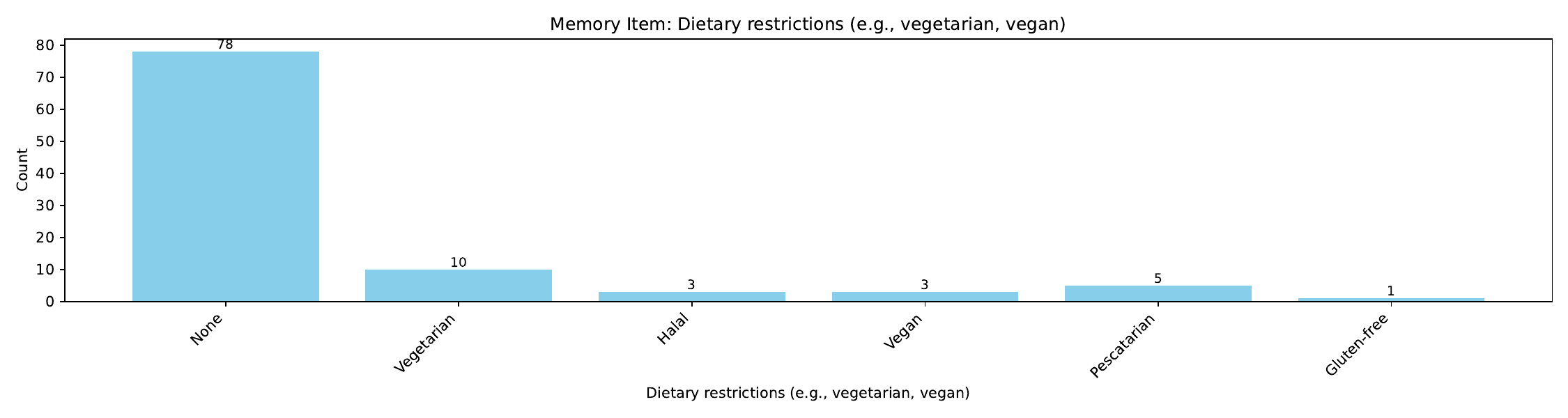}
    \caption{Distribution of Dietary restrictions (e.g., vegetarian, vegan)}
\end{figure}

\begin{figure}[htbp]
    \centering
    \includegraphics[width=\textwidth]{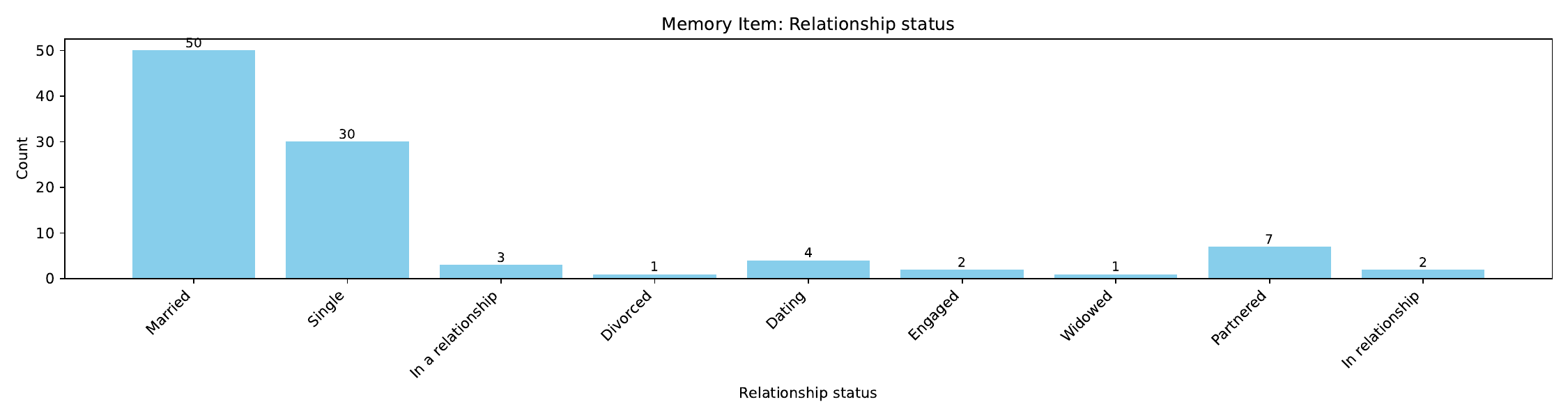}
    \caption{Distribution of Relationship status}
\end{figure}

\begin{figure}[htbp]
    \centering
    \includegraphics[width=\textwidth]{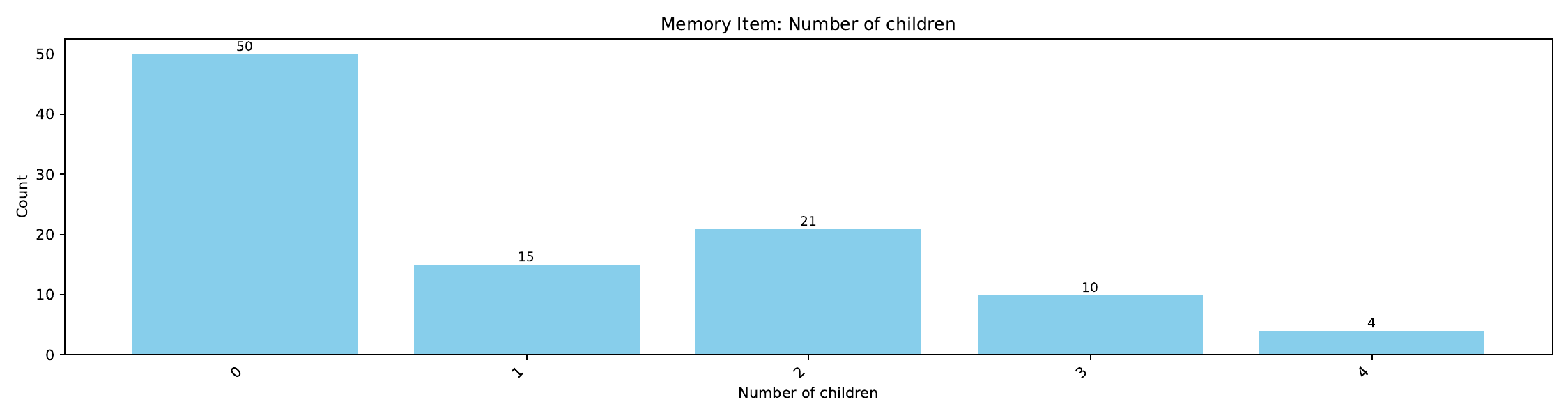}
    \caption{Distribution of Number of children}
\end{figure}

\begin{figure}[htbp]
    \centering
    \includegraphics[width=\textwidth]{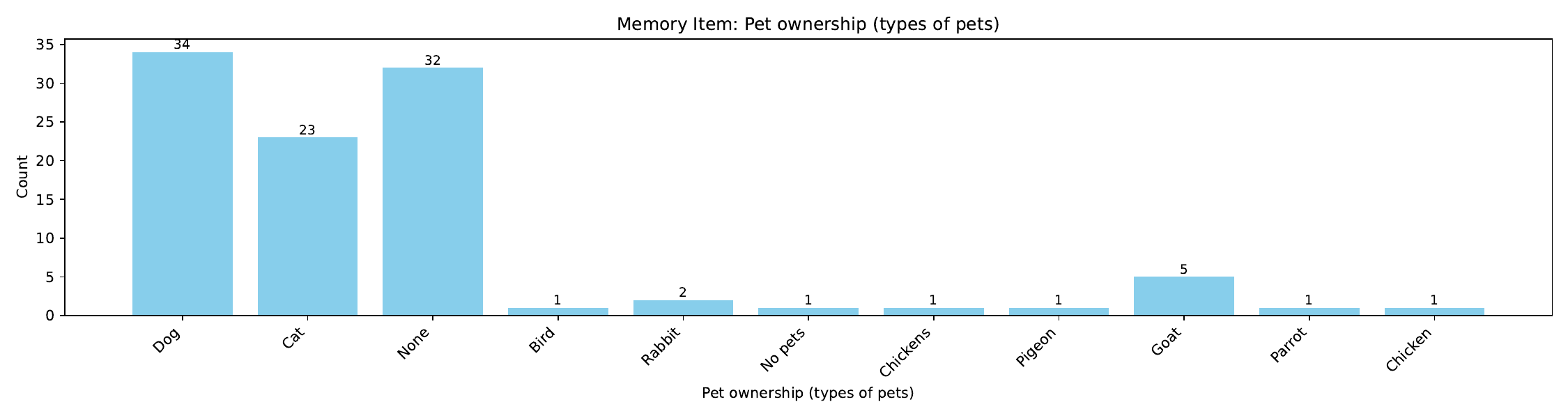}
    \caption{Distribution of Pet ownership (types of pets)}
\end{figure}

\begin{figure}[htbp]
    \centering
    \includegraphics[width=\textwidth]{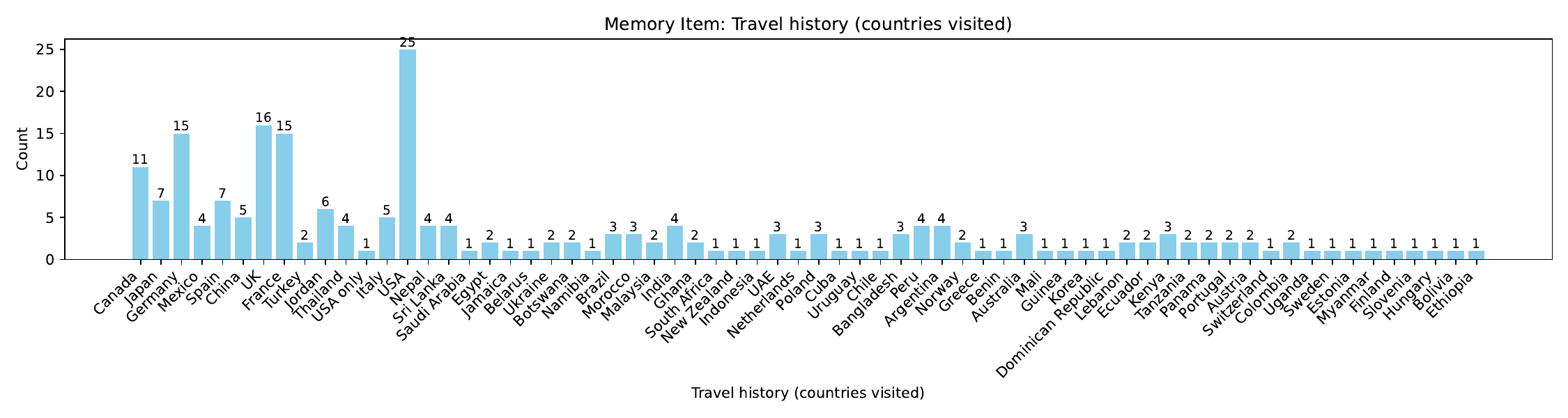}
    \caption{Distribution of Travel history (countries visited)}
\end{figure}

\begin{figure}[htbp]
    \centering
    \includegraphics[width=\textwidth]{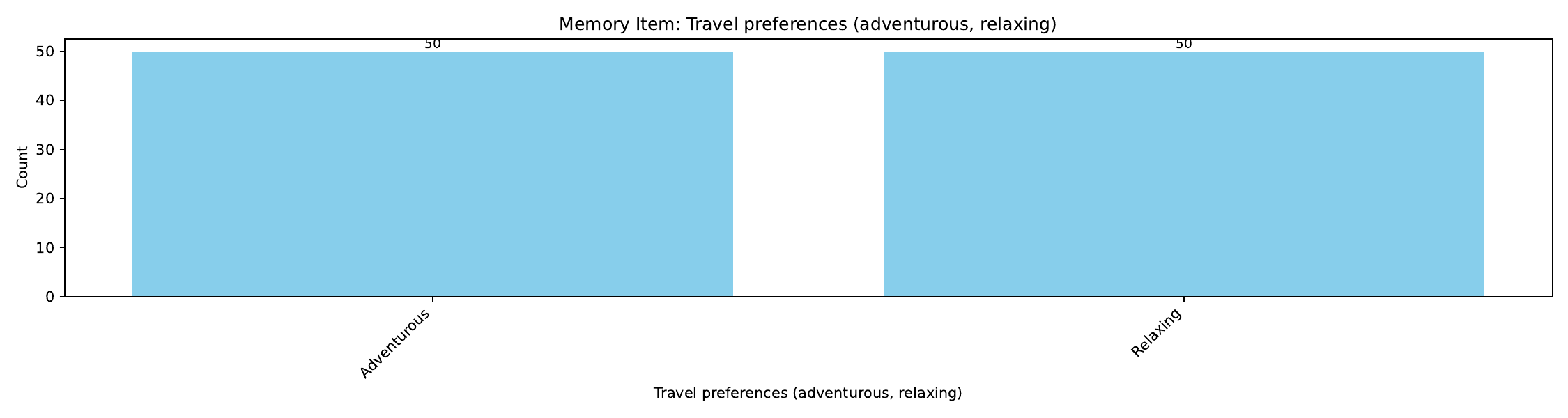}
    \caption{Distribution of Travel preferences (adventurous, relaxing)}
\end{figure}

\begin{figure}[htbp]
    \centering
    \includegraphics[width=\textwidth]{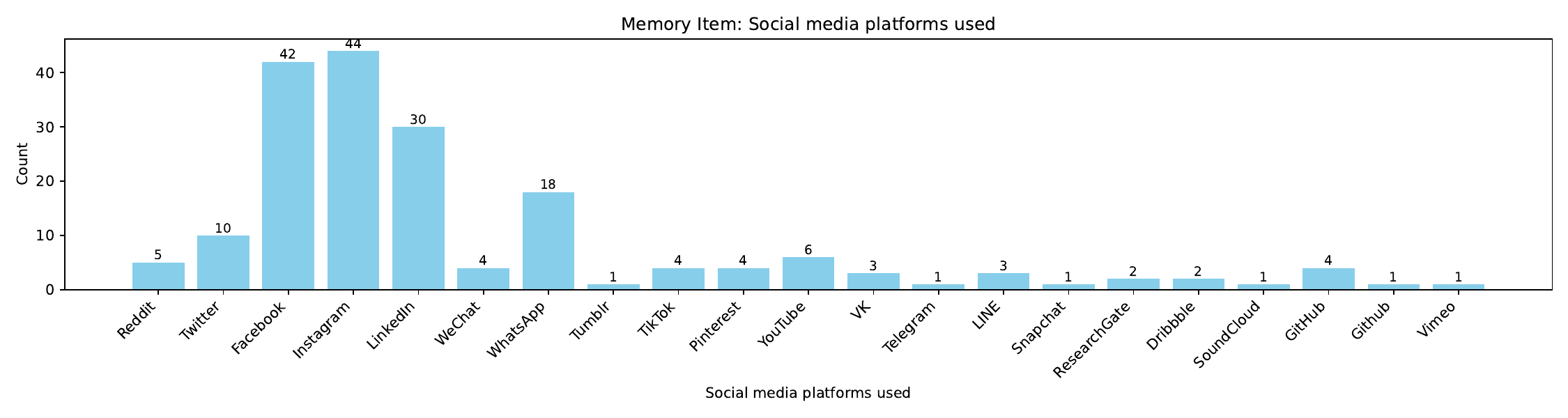}
    \caption{Distribution of Social media platforms used}
\end{figure}

\begin{figure}[htbp]
    \centering
    \includegraphics[width=\textwidth]{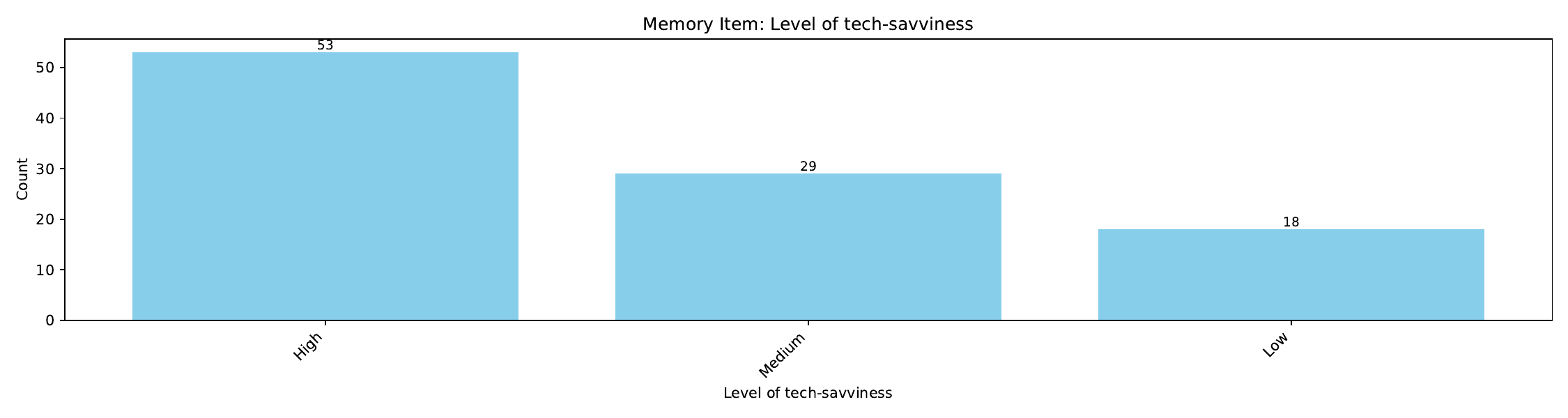}
    \caption{Distribution of Level of tech-savviness}
\end{figure}

\begin{figure}[htbp]
    \centering
    \includegraphics[width=\textwidth]{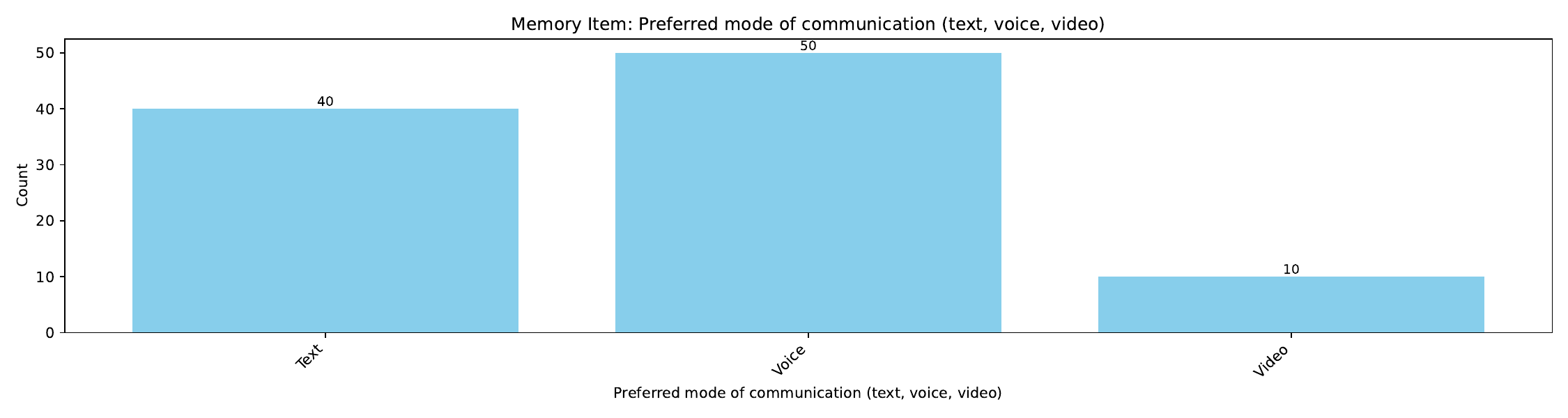}
    \caption{Distribution of Preferred mode of communication (text, voice, video)}
\end{figure}

\begin{figure}[htbp]
    \centering
    \includegraphics[width=\textwidth]{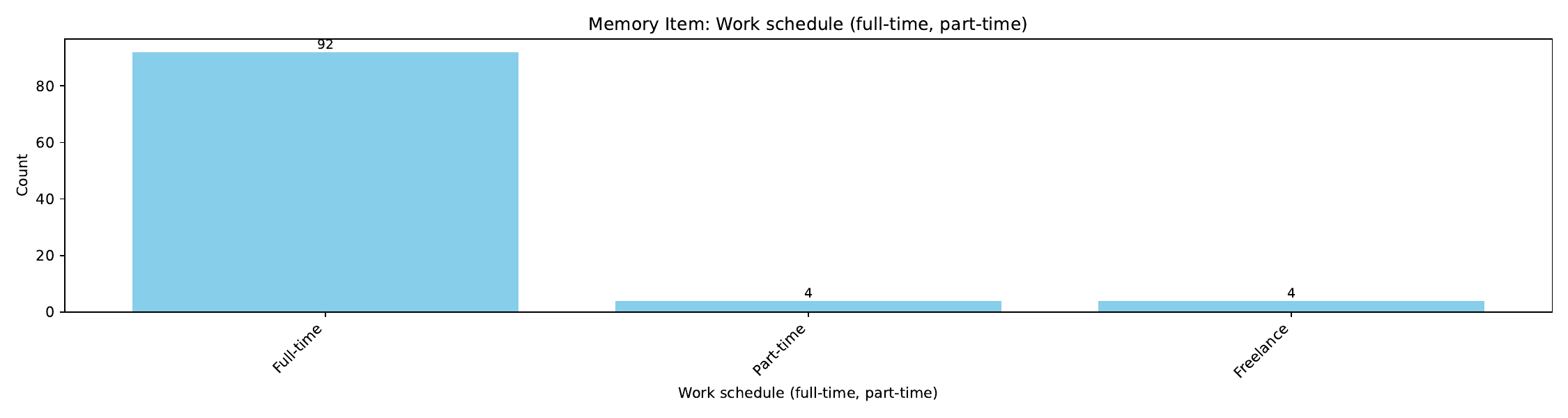}
    \caption{Distribution of Work schedule (full-time, part-time)}
\end{figure}

\begin{figure}[htbp]
    \centering
    \includegraphics[width=\textwidth]{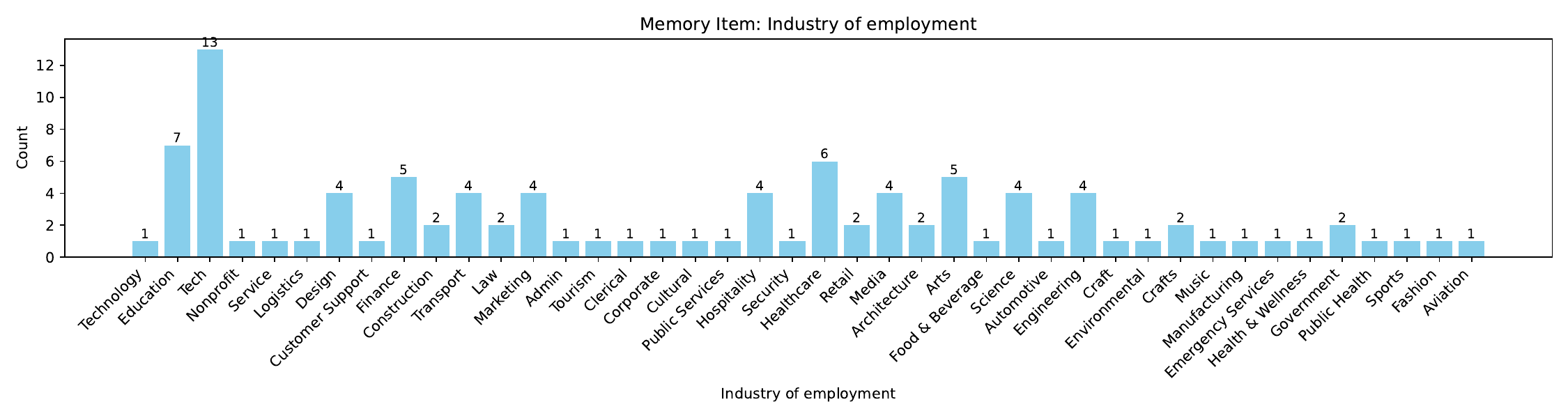}
    \caption{Distribution of Industry of employment}
\end{figure}

\begin{figure}[htbp]
    \centering
    \includegraphics[width=\textwidth]{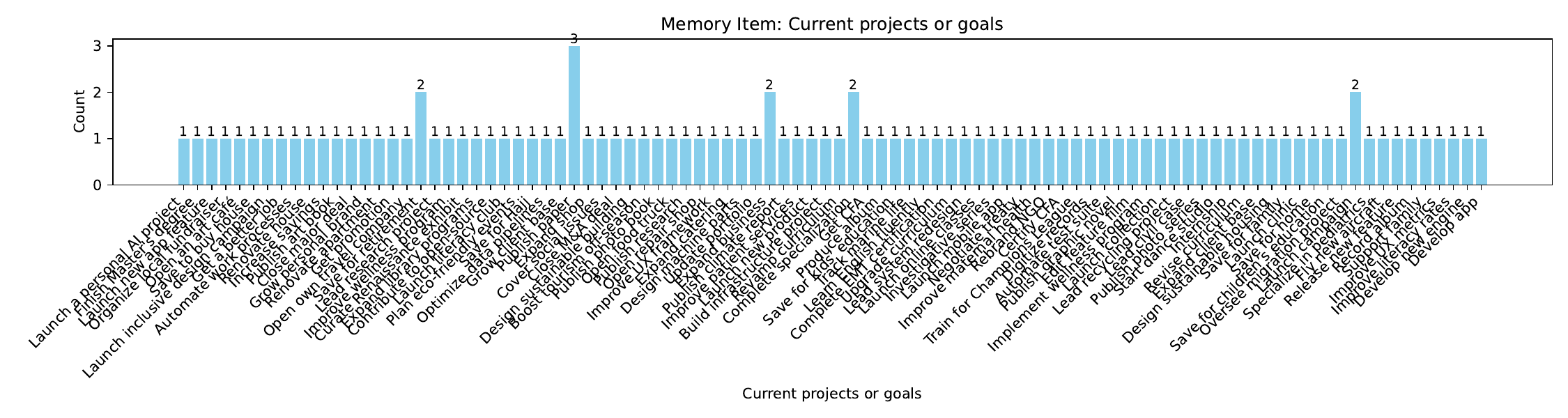}
    \caption{Distribution of Current projects or goals}
\end{figure}

\begin{figure}[htbp]
    \centering
    \includegraphics[width=\textwidth]{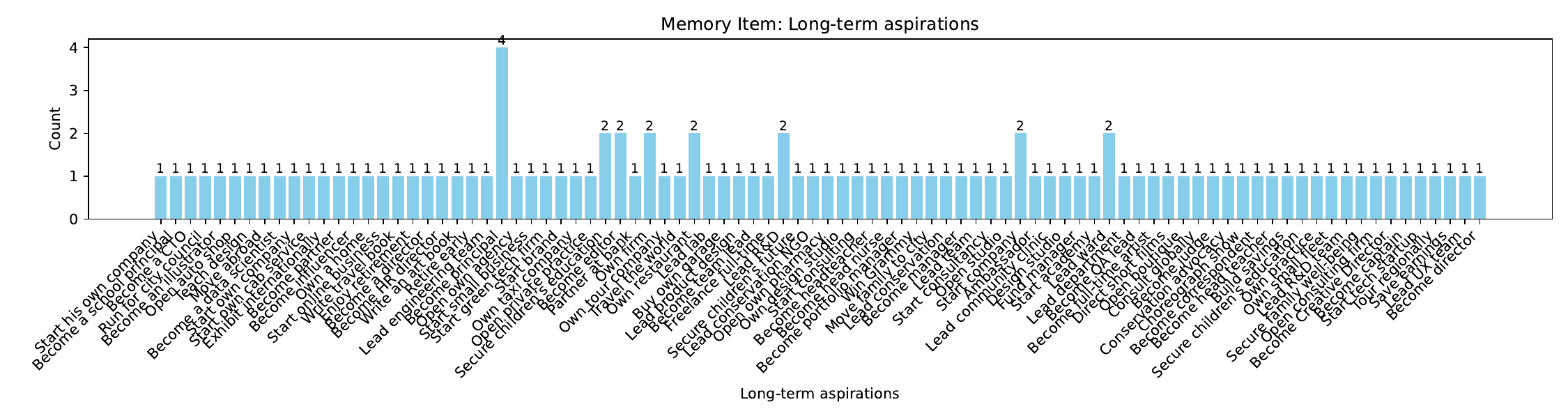}
    \caption{Distribution of Long-term aspirations}
\end{figure}

\begin{figure}[htbp]
    \centering
    \includegraphics[width=\textwidth]{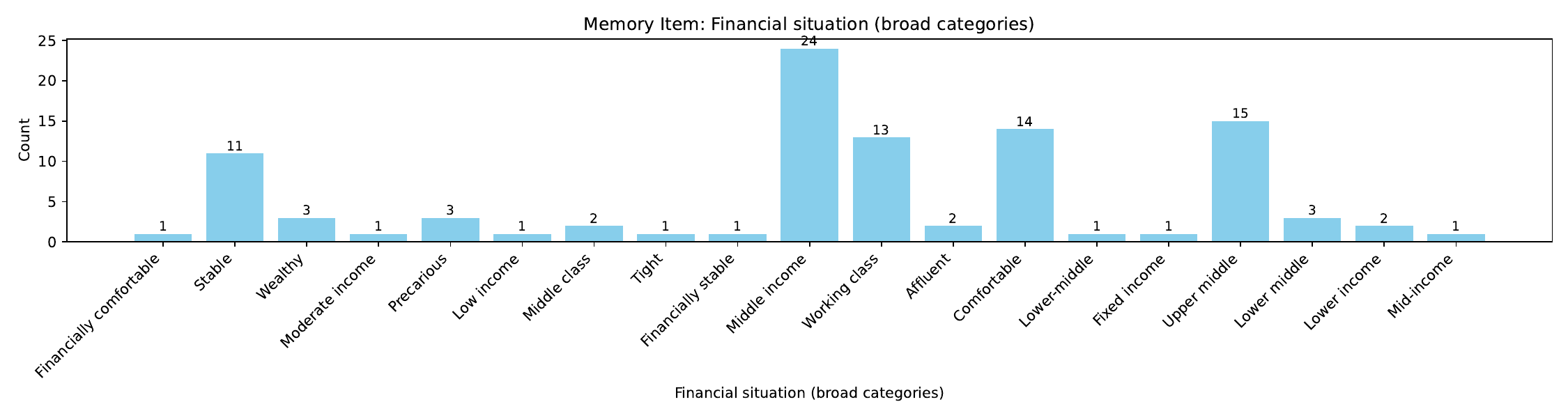}
    \caption{Distribution of Financial situation (broad categories)}
\end{figure}

\begin{figure}[htbp]
    \centering
    \includegraphics[width=\textwidth]{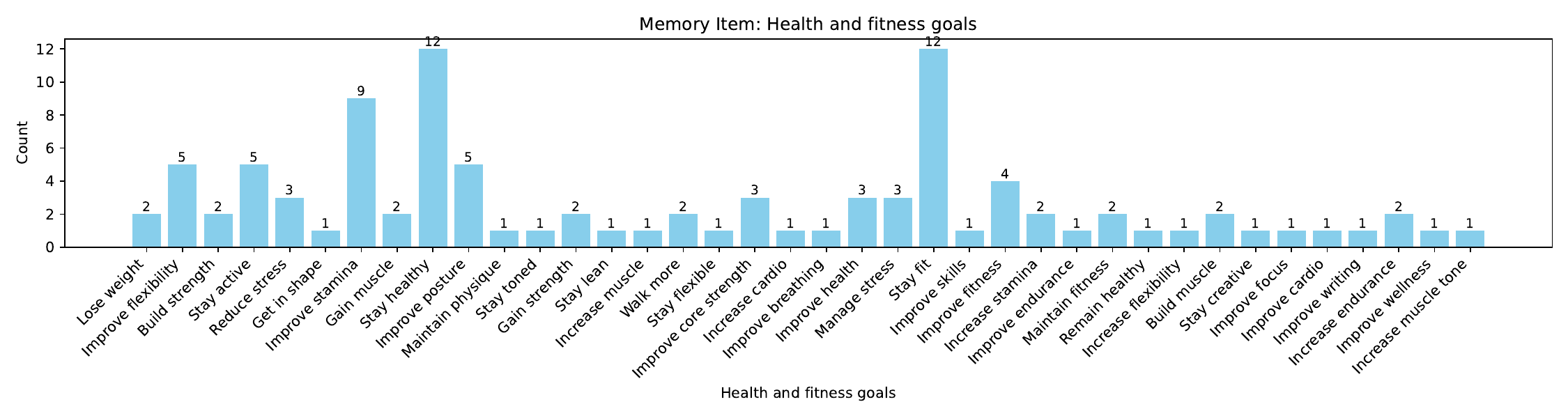}
    \caption{Distribution of Health and fitness goals}
\end{figure}

\begin{figure}[htbp]
    \centering
    \includegraphics[width=\textwidth]{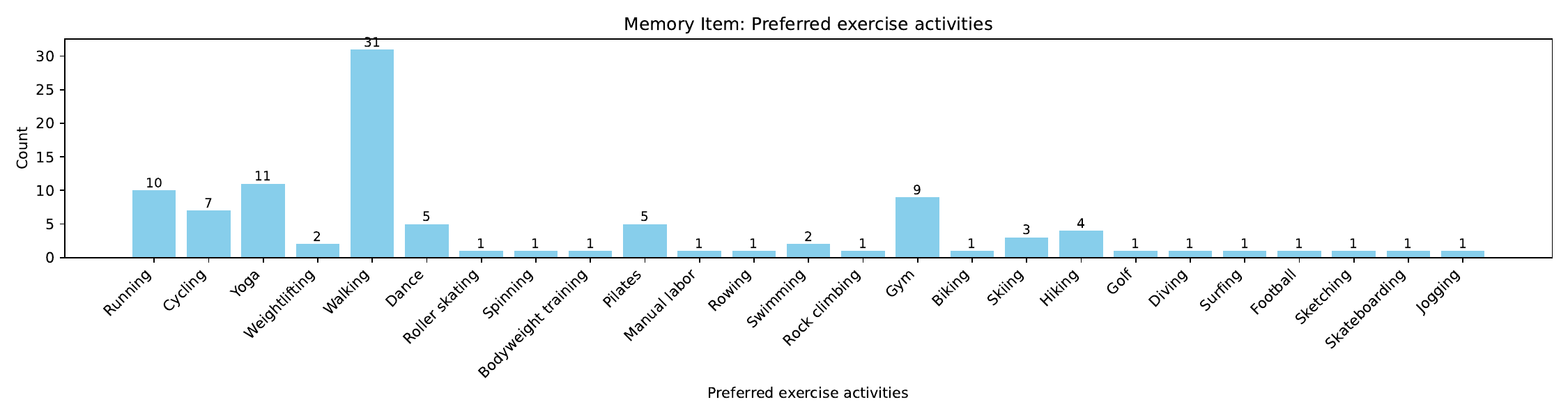}
    \caption{Distribution of Preferred exercise activities}
\end{figure}

\begin{figure}[htbp]
    \centering
    \includegraphics[width=\textwidth]{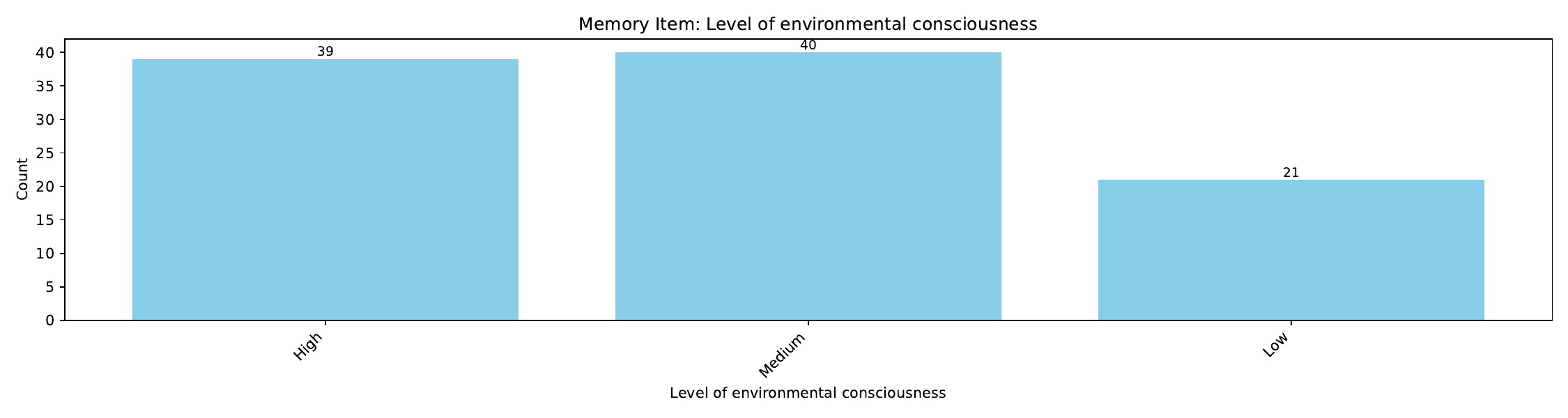}
    \caption{Distribution of Level of environmental consciousness}
\end{figure}

\begin{figure}[htbp]
    \centering
    \includegraphics[width=\textwidth]{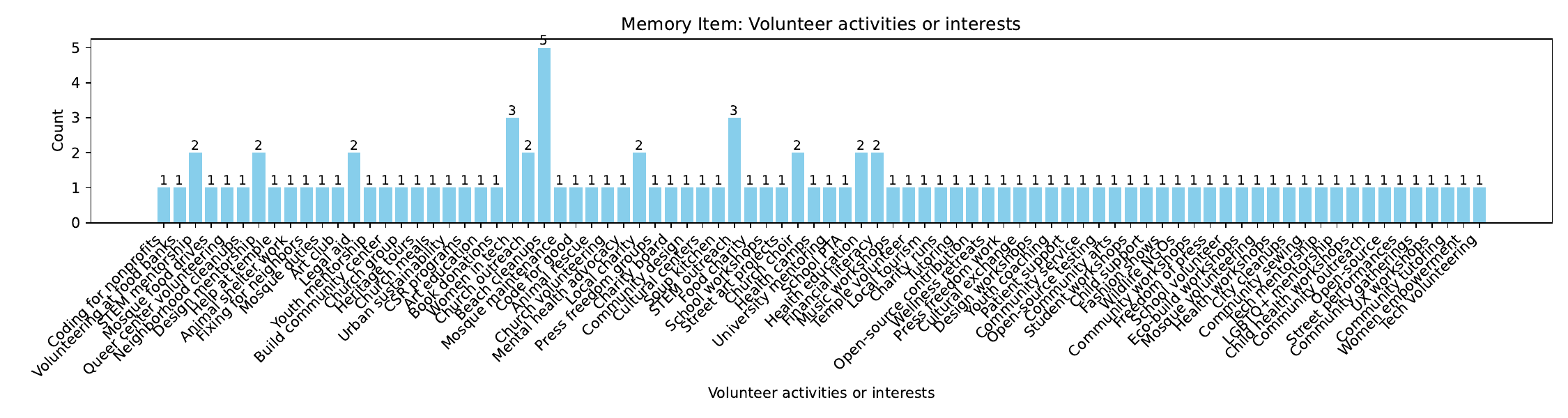}
    \caption{Distribution of Volunteer activities or interests}
\end{figure}

\begin{figure}[htbp]
    \centering
    \includegraphics[width=\textwidth]{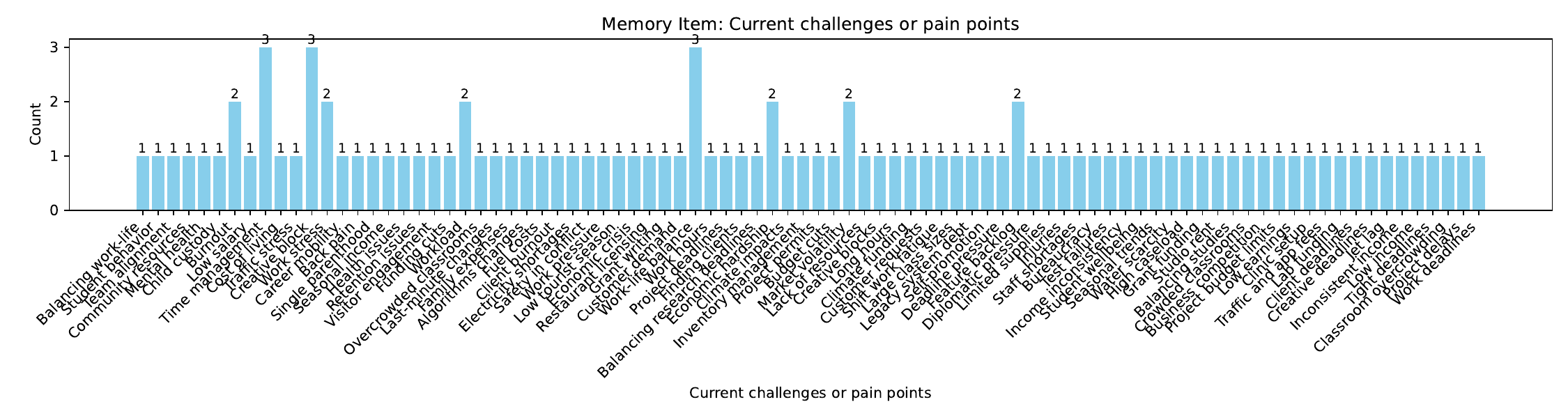}
    \caption{Distribution of Current challenges or pain points}
\end{figure}

\begin{figure}[htbp]
    \centering
    \includegraphics[width=\textwidth]{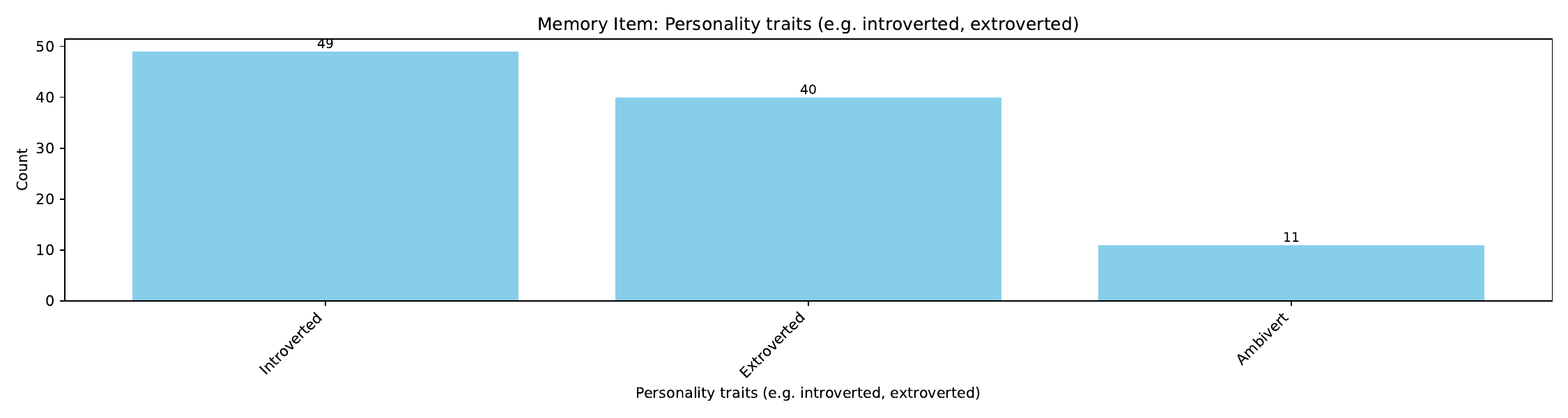}
    \caption{Distribution of Personality traits (e.g. introverted, extroverted)}
\end{figure}

\begin{figure}[htbp]
    \centering
    \includegraphics[width=\textwidth]{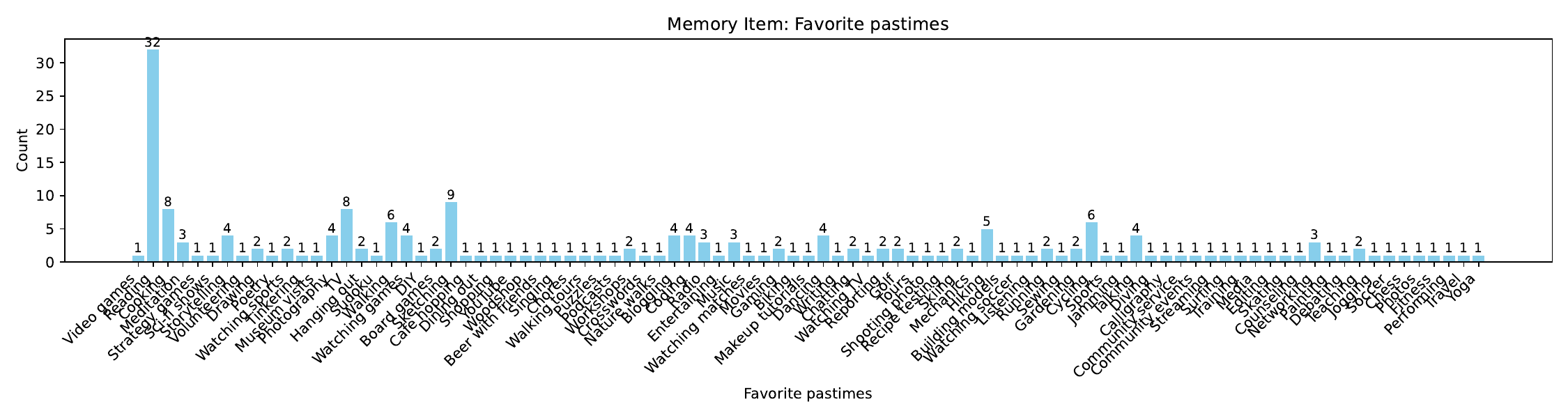}
    \caption{Distribution of Favorite pastimes}
\end{figure}

\begin{figure}[htbp]
    \centering
    \includegraphics[width=\textwidth]{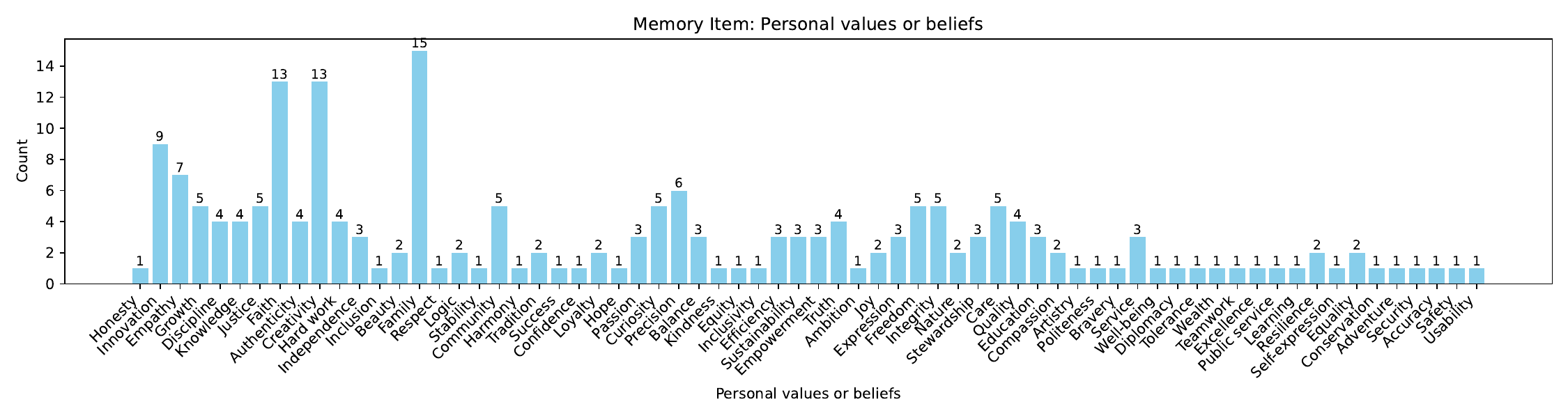}
    \caption{Distribution of Personal values or beliefs}
\end{figure}

\begin{figure}[htbp]
    \centering
    \includegraphics[width=\textwidth]{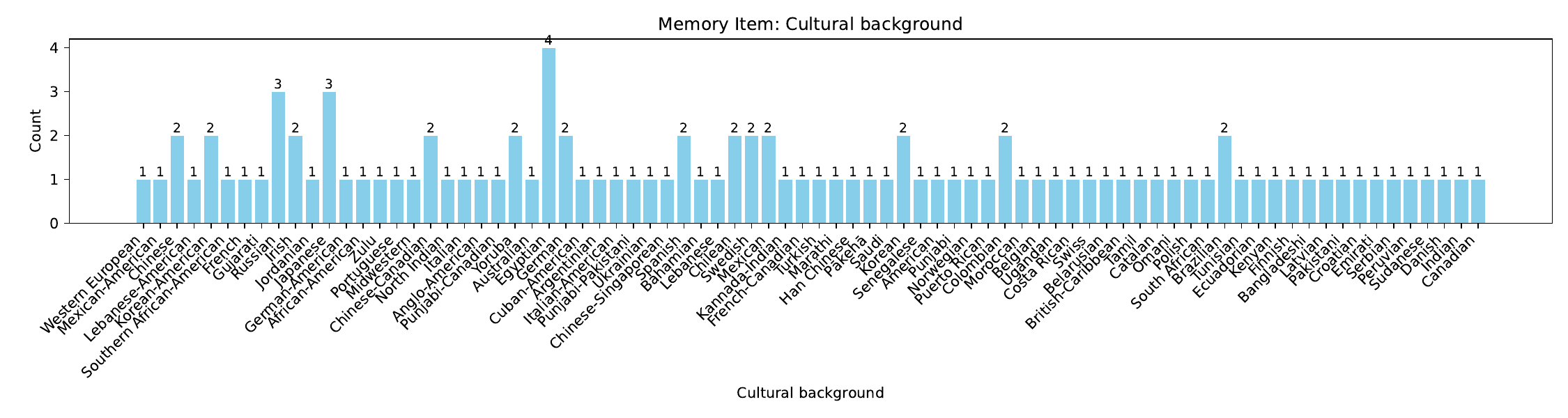}
    \caption{Distribution of Cultural background}
\end{figure}

\begin{figure}[htbp]
    \centering
    \includegraphics[width=\textwidth]{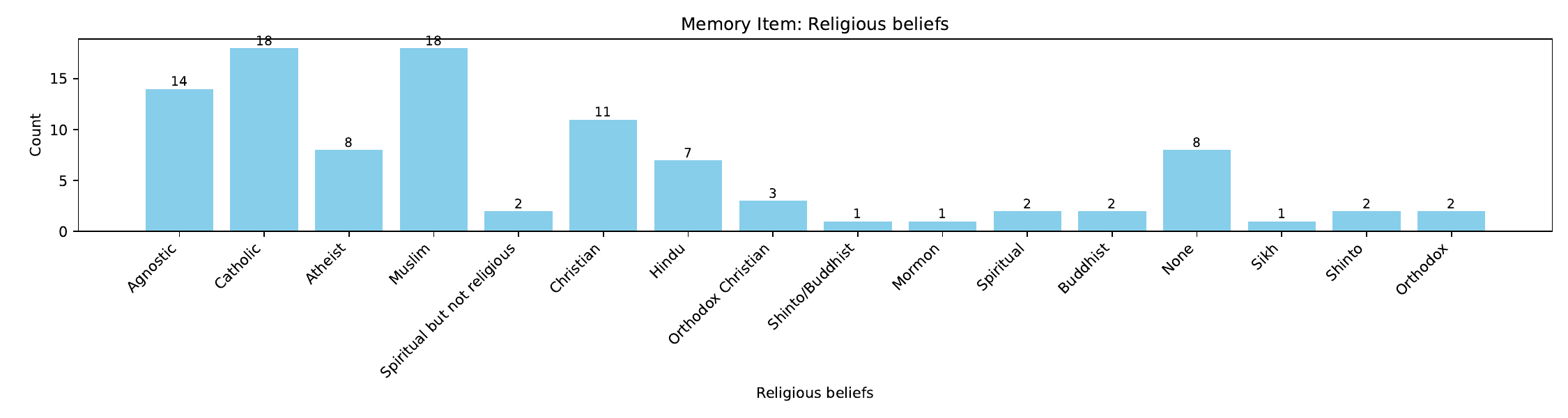}
    \caption{Distribution of Religious beliefs}
\end{figure}

\begin{figure}[htbp]
    \centering
    \includegraphics[width=\textwidth]{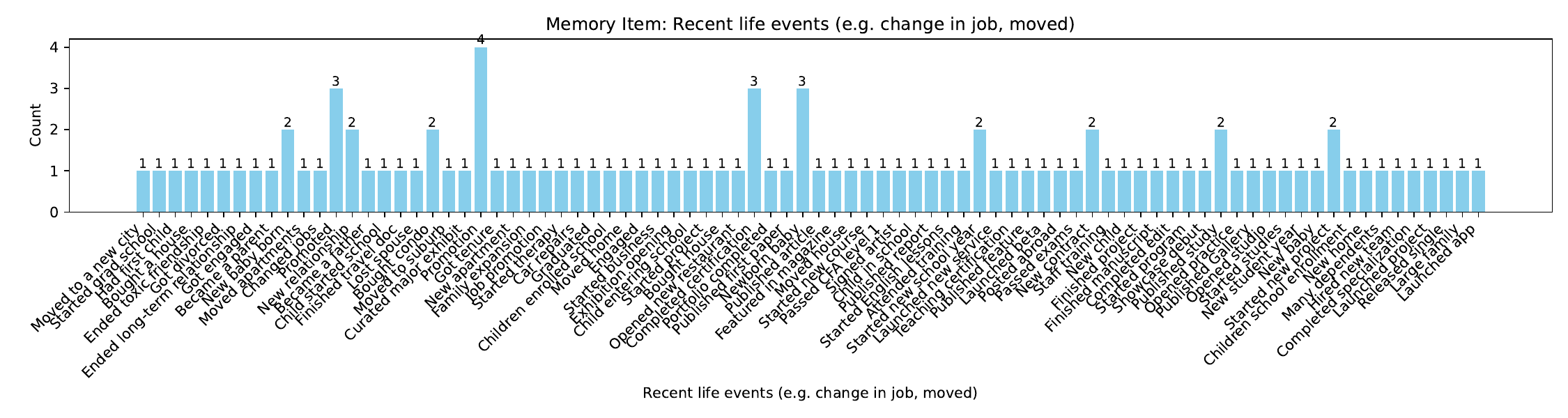}
    \caption{Distribution of Recent life events (e.g. change in job, moved)}
\end{figure}

\begin{figure}[htbp]
    \centering
    \includegraphics[width=\textwidth]{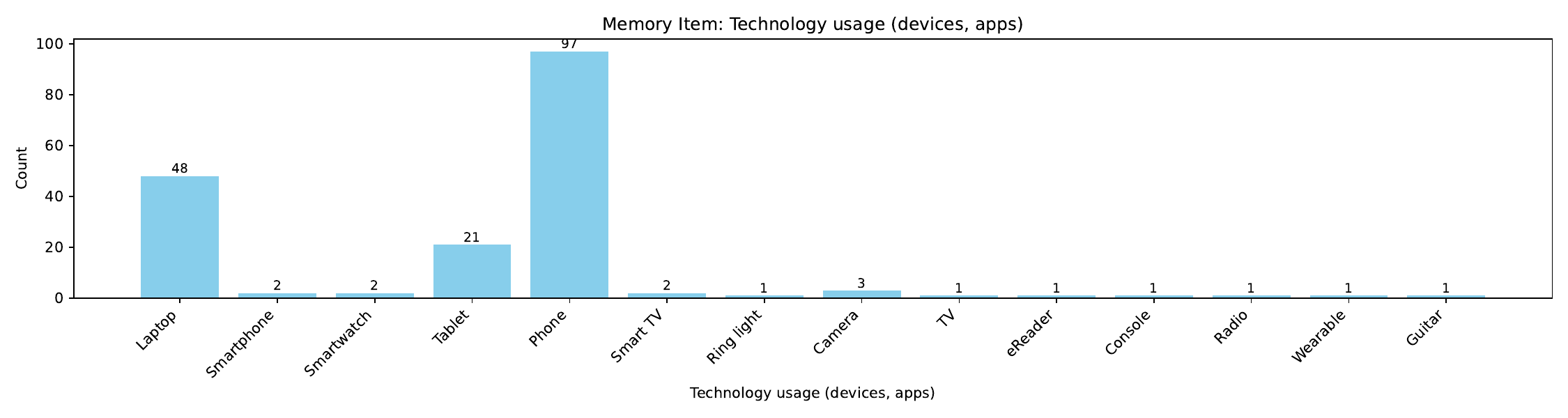}
    \caption{Distribution of Technology usage (devices, apps)}
\end{figure}

\begin{figure}[htbp]
    \centering
    \includegraphics[width=\textwidth]{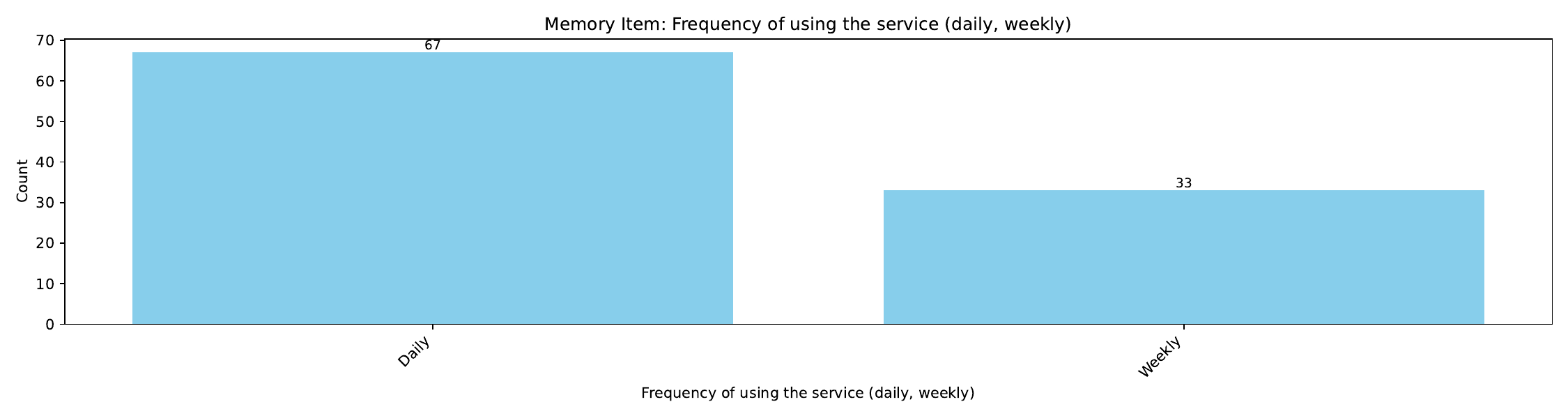}
    \caption{Distribution of Frequency of using the service (daily, weekly)}
\end{figure}

\begin{figure}[htbp]
    \centering
    \includegraphics[width=\textwidth]{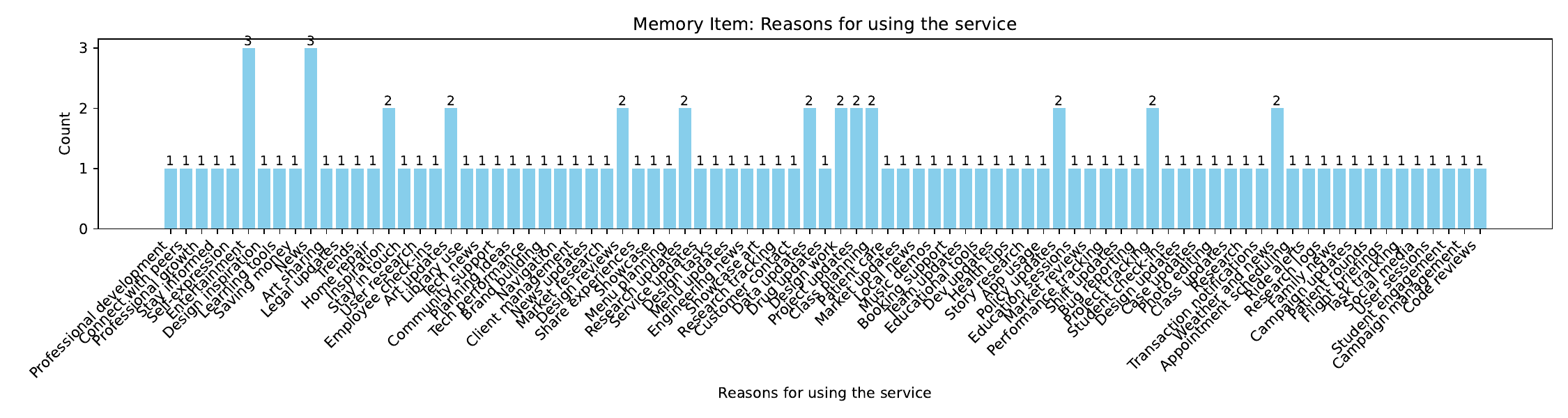}
    \caption{Distribution of Reasons for using the service}
\end{figure}

\begin{figure}[htbp]
    \centering
    \includegraphics[width=\textwidth]{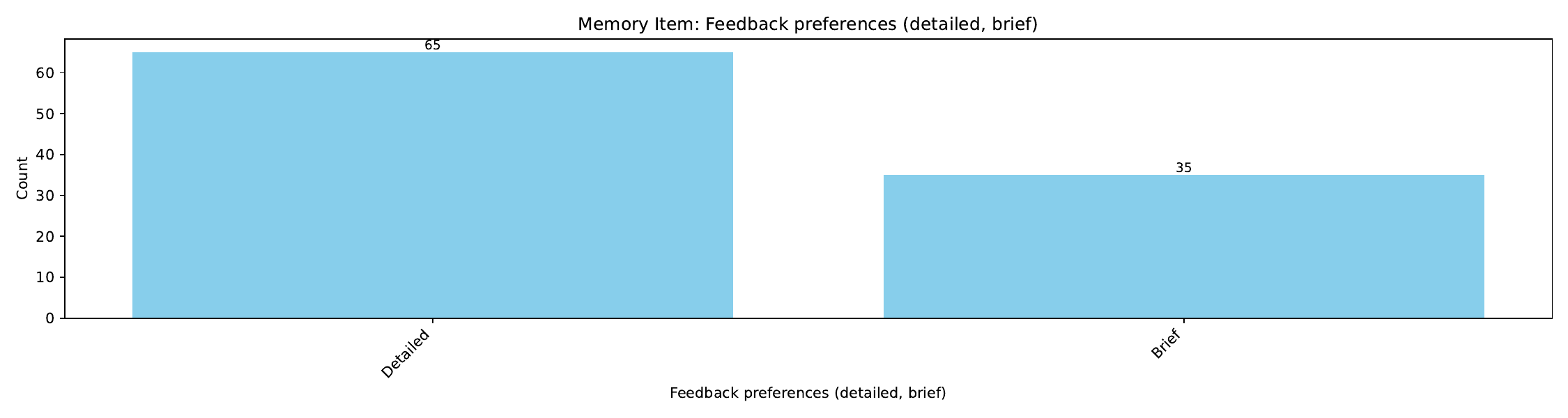}
    \caption{Distribution of Feedback preferences (detailed, brief)}
\end{figure}

\begin{figure}[htbp]
    \centering
    \includegraphics[width=\textwidth]{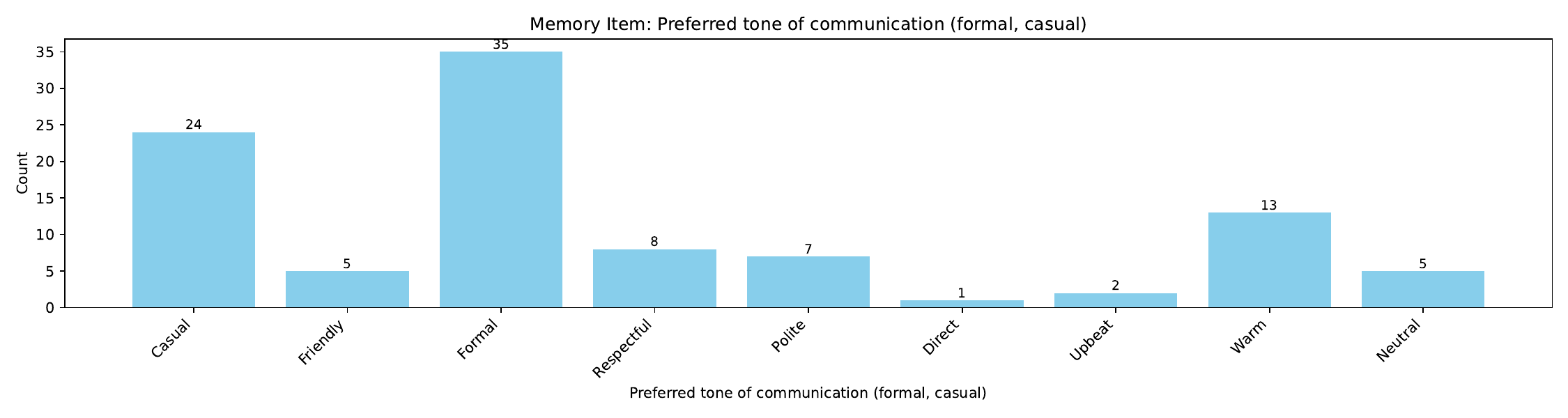}
    \caption{Distribution of Preferred tone of communication (formal, casual)}
\end{figure}

\begin{figure}[htbp]
    \centering
    \includegraphics[width=\textwidth]{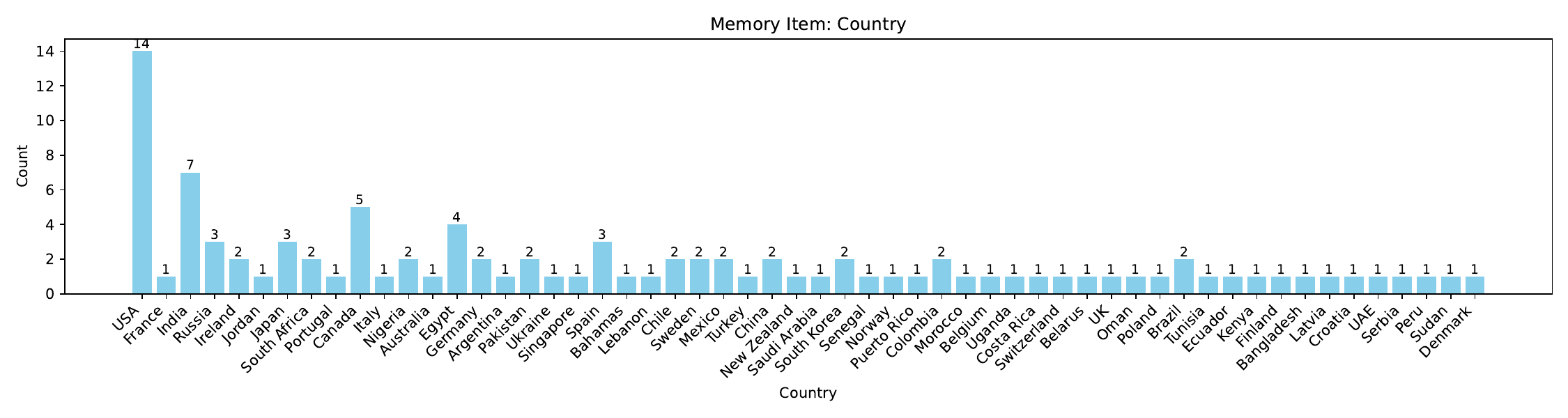}
    \caption{Distribution of Country}
    \label{fig:profile_distr_end}
\end{figure}

\end{document}